\documentclass[english]{article}

\usepackage{geometry}
\geometry{verbose,tmargin=1in,bmargin=1in,lmargin=1in,rmargin=1in}
\usepackage[T1]{fontenc}
\usepackage[latin9]{inputenc}
\usepackage{bm}
\usepackage{amsmath,mathtools}
\usepackage{amssymb}
\usepackage[unicode=true,
 bookmarks=false,
 breaklinks=false,pdfborder={0 0 1},colorlinks=false]
 {hyperref}
\hypersetup{
 colorlinks,citecolor=blue,filecolor=blue,linkcolor=blue,urlcolor=blue}

\makeatletter
\usepackage{amsthm,enumitem}
\usepackage{cite}  
\usepackage{comment}
\usepackage{natbib}
\usepackage{booktabs}
\usepackage{graphicx}

\usepackage[linesnumbered,ruled,vlined]{algorithm2e}
\usepackage{algorithmic}

\usepackage{float}
\usepackage{multirow}
\usepackage{dsfont}
\usepackage{tcolorbox}

\usepackage{color}
\definecolor{yxc}{RGB}{255,0,0}
\definecolor{yjc}{RGB}{125,0,0}
\definecolor{ytw}{RGB}{255,69,0}
\definecolor{gen}{RGB}{0,0,200}

\allowdisplaybreaks

\DeclareMathOperator{\ind}{\mathds{1}}  

\newcommand{\defn}{\coloneqq}

\newcommand{\cS}{\mathcal{S}}
\newcommand{\cA}{\mathcal{A}}
\newcommand{\cB}{\mathcal{B}}
\newcommand{\ak}{a_{k,s}}

\newcommand{\mycrosstwo}{,}

\newcommand{\mymid}{\,|\,}

\newcommand{\cb}{c_{\mathsf{b}}}

\newcommand{\pihat}{\widehat{\pi}}
\newcommand{\pihatstar}{\widetilde{\pi}^{\star}}
\newcommand{\calpha}{c_{\alpha}}
\newcommand{\NEgap}{\mathsf{gap}}
\newcommand{\myalg}{\textsf{Q-FTRL}}


\theoremstyle{plain} \newtheorem{lemma}{\textbf{Lemma}}\newtheorem{theorem}{\textbf{Theorem}}

\theoremstyle{remark}\newtheorem{remark}{\textbf{Remark}}

\makeatother

\begin{document}

\title{Minimax-Optimal Multi-Agent RL in Markov Games \\ With a Generative Model}

\author{Gen Li\thanks{Department of Statistics and Data Science, Wharton School, University of Pennsylvania, Philadelphia, PA 19104, USA.} \\
UPenn    \\
	\and
	Yuejie Chi\thanks{Department of Electrical and Computer Engineering, Carnegie Mellon University, Pittsburgh, PA 15213, USA.}\\
	CMU\\
	\and
	Yuting Wei\footnotemark[1] \\
 UPenn  \\
	\and
	Yuxin Chen\footnotemark[1]  \\
	UPenn\\
	}
\date{August 2022; ~~ Revised: October 2022}

\maketitle

\begin{abstract}
	
This paper studies multi-agent reinforcement learning in Markov games,
with the goal of learning Nash equilibria or coarse correlated equilibria (CCE) sample-optimally. 
All prior results suffer from at least one of the two obstacles: the curse of multiple agents and the barrier of long horizon, regardless of the sampling protocol in use.  
We take a step towards settling this problem, assuming access to a flexible sampling mechanism: the generative model.  
Focusing on non-stationary finite-horizon Markov games,  
we develop a fast learning algorithm called \myalg~and an adaptive sampling scheme that leverage the optimism principle in online adversarial learning (particularly the Follow-the-Regularized-Leader (FTRL) method). 
Our algorithm learns an $\varepsilon$-approximate CCE in a general-sum Markov game using  
$$ \widetilde{O}\bigg( \frac{H^4 S \sum_{i=1}^m A_i}{\varepsilon^2} \bigg) $$ 
samples, where $m$ is the number of players, $S$ indicates the number of states, $H$ is the horizon, and $A_i$ denotes the number of actions for the $i$-th player. 
This is minimax-optimal (up to log factor) when the number of players is fixed.  
When applied to two-player zero-sum Markov games, our algorithm provably finds an $\varepsilon$-approximate Nash equilibrium with minimal samples. 
Along the way, we derive a refined regret bound for FTRL that makes explicit the role of variance-type quantities, which might be of independent interest.

\end{abstract}






\medskip
\noindent \textbf{Keywords:} Markov games, sample complexity, Nash equilibrium, coarse correlated equilibrium, adversarial learning, Follow-the-Regularized-Leader 

\setcounter{tocdepth}{2}
\tableofcontents

\section{Introduction}

The thriving field of multi-agent reinforcement learning (MARL) studies how a group of interacting agents make decisions autonomously 
in a shared dynamic environment \citep{zhang2021multi}. 
The recent developments in game playing \citep{vinyals2019grandmaster,brown2019superhuman}, 
self-driving vehicles \citep{shalev2016safe}, and multi-robot control \citep{matignon2012coordinated} 
are prime examples of MARL in action. 
In practice, there is no shortage of situations where the agents involved have conflict of interest, 
and they have to act competitively in order to promote their own benefits (possibly at the expense of one another).   
Scenarios of this kind are frequently modeled via Markov games (MGs) \citep{shapley1953stochastic,littman1994markov}, 
a framework that has been a fruitful playground to formalize and stimulate the studies of competitive MARL.

In view of the irreconcilable competition between individual players, 
solutions of competitive MARL normally take the form of certain equilibrium  strategy profiles, 
which are perhaps best epitomized by the concept of Nash equilibrium (NE) \citep{nash1950equilibrium}.   
In a Nash equilibrium, no gain can be realized through a unilateral change --- assuming no coordination between players ---  
and hence no player has incentives to deviate from her current strategy/policy. 
A myriad of research has been conducted surrounding NE, which spans various aspects like existence, learnability, computational hardness, and algorithm design, among others \citep{shapley1953stochastic,daskalakis2013complexity,chen2015well,rubinstein2016settling,perolat2015approximate,daskalakis2020independent,littman1994markov,hansen2013strategy,ozdaglar2021independent,jin2022complexity}. 
Given that finding NE is notoriously expensive in general (except for special cases like two-player zero-sum MGs) \citep{daskalakis2013complexity,daskalakis2009complexity}, 
several more tractable solution concepts have emerged in the studies of game theory and MARL, 
a prominent example being the coarse correlated equilibirum (CCE) \citep{moulin1978strategically}. 
A key compromise made in the CCE is that it permits the players to act in an coordinated fashion, which contrasts sharply with the absence of coordination in the definition of NE.

One critical challenge impacting modern MARL applications is data efficiency. 
The players involved often have minimal knowledge about how the environment responds to their actions, 
and have to learn the dynamics and preferable actions by probing the unknown environment. 
For MARL to expand into applications with enormous dimensionality and long planning horizon, 
the learning algorithms must manage to make efficient use of the collected data.  
Nevertheless, how to learn NE and/or CCE with optimal sample complexity remains by and large unsettled 
even when it comes to the most basic setting: two-player zero-sum Markov games, as we shall discuss below.

\paragraph{Example: inadequacy in learning two-player zero-sum Markov games.}

To facilitate concrete comparisons, let us review two representative algorithms aimed at learning NE in two-player zero-sum MGs. 
These algorithms have been studied  under two drastically different sampling protocols, 
and we shall discuss the shortfalls of the cutting-edge sample complexity results. 
In a two-player zero-sum MG, we denote by $S$ the number of states and $H$ the horizon or effective horizon, whereas $A_1$ (resp.~$A_2$) denotes the number of actions for the max-player (resp.~min-player).

\begin{itemize}
	\item {\em Model-based methods under either a generative model or online exploration.} Assuming access to a generative model (so that one can sample arbitrary state-action tuples),  
		\citet{zhang2020marl} investigated a natural model-based algorithm, which performs planning (e.g., value iteration) on an empirical MG derived from samples produced non-adaptively by the generative model. 
	 Focusing on {\em stationary} discounted infinite-horizon MGs, their algorithm finds an $\varepsilon$-approximate NE with no more than
\begin{equation}
	\widetilde{O}\bigg( \frac{H^3 SA_1A_2}{\varepsilon^2} \bigg) ~~\text{samples}. 
	\label{eq:Zhang-plug-in-result}
\end{equation}
In parallel, \citet{liu2021sharp} studied {\em non-stationary} finite-horizon MGs with online exploration, and obtained similar sample complexity bounds, i.e.,  
\begin{equation}
	\widetilde{O}\bigg( \frac{H^4 SA_1A_2}{\varepsilon^2} \bigg) ~~\text{samples}
	\qquad \text{or} \qquad 
	\widetilde{O}\bigg( \frac{H^3 SA_1A_2}{\varepsilon^2} \bigg) ~~\text{episodes} 
	\label{eq:Liu-MB-result}
\end{equation}
for  learning an $\varepsilon$-approximate NE. 
While these bounds achieve minimax-optimal dependency on the horizon $H$, 
		a major drawback emerges --- commonly referred to as the curse of multiple agents; namely, these results scale proportionally with the total number of {\em joint actions} (i.e., $\prod_{1\leq i\leq 2}A_i$), a quantity that blows up exponentially with the number of players.

\item {\em V-learning for online exploration settings.} Focusing on online exploration settings, 
	\citet{bai2020near,jin2021v} proposed an algorithm called V-learning that leverages the advances in online adversarial learning (e.g., adversarial bandits) to circumvent the curse of multiple agents. This algorithm provably yields an $\varepsilon$-approximate NE in non-stationary finite-horizon MGs using
\begin{equation}
	\widetilde{O}\bigg( \frac{H^6 S(A_1+A_2)}{\varepsilon^2} \bigg) ~~\text{samples}
	\qquad \text{or} \qquad 
	\widetilde{O}\bigg( \frac{H^5 S(A_1+A_2)}{\varepsilon^2} \bigg) ~~\text{episodes}, 
	\label{eq:Jin-v-learning-result}
\end{equation}
which effectively brings down the sample size scaling \eqref{eq:Liu-MB-result} from $A_1A_2$ (i.e., the number of joint actions) to $A_1+A_2$ (i.e., the sum of individual actions). 
It is worth pointing out, however, that this theory appears sub-optimal in terms of the horizon dependency,  as it is a factor of $H^2$ above the minimax lower bound.
\end{itemize}



\paragraph{Key issues and our main contributions.}
While the above summary focuses on two-player zero-sum MGs, 
it unveils a fundamental issue surrounding the sample efficiency of learning equilibria; 
that is, all existing results in this front --- irrespective of the sampling mechanism in use --- fall short of overcoming at least one of the two major hurdles:  (i) the {\em curse of multiple agents}, and (ii) the {\em barrier of long horizon}.  
A natural question to pose is:
%
%
\begin{center}
{\em 
	{\bf Question: }can we learn a Nash equilibrium in a two-player zero-sum Markov game \\ in a sample-optimal and computation-efficient fashion?}
\end{center}
\noindent To settle this favorably, both of the above hurdles need to be crossed simultaneously. 
Moving beyond two-player zero-sum MGs, it is not surprising to see that general-sum multi-player MGs have to grapple with the aforementioned two hurdles as well. 
Thus, the following question also comes into mind when learning CCE (a compromise due to the general intractability of learning NE):  
\begin{center}
{\em 
	{\bf Question: }can we learn a coarse correlated equilibrium in a multi-player general-sum Markov game \\ in a sample-optimal and computation-efficient fashion?}
\end{center}
Note that these questions remain open regardless of the sampling scheme in use.

\noindent


This paper takes a first step towards solving the problem by assuming access  to the most flexible sampling protocol: the generative model (or simulator). 
In stark contrast to the single-agent case where uniform sampling of all state-action pairs suffices \citep{azar2013minimax,li2020breaking},  
the multi-agent scenario requires one to take samples intelligently and adaptively, 
a crucial step to avoid inefficient use of data (otherwise one cannot hope to break the curse of multiple agents). 
With the aim of computing an $\varepsilon$-approximate equilibrium in a {\em non-stationary} finite-horizon MG, 
we come up with a computationally efficient learning algorithm (accompanied by an adaptive sampling strategy) that accomplishes this goal with no more than
\begin{align}
	\begin{cases}
		\widetilde{O}\Big( \frac{H^4 S(A_1+A_2)}{\varepsilon^2} \Big) ~~\text{samples} \qquad & \text{(learning }\varepsilon\text{-NE in two-player zero-sum MGs)}\\
		\widetilde{O}\Big( \frac{H^4 S\big(\sum_{i=1}^m A_i \big)}{\varepsilon^2} \Big) ~~\text{samples}
		& \text{(learning }\varepsilon\text{-CCE in multi-player general-sum MGs)}
	\end{cases} 
\end{align}
drawn from the generative model. Encouragingly, this sample complexity bound matches the minimax lower limit (up to some log factor) as long as the number of players $m\geq 2$ is a fixed constant or grows only logarithmically fast. 
Our sample complexity theory is valid for the full $\varepsilon$-range (i.e., any $\varepsilon \in (0,H]$); 
this unveils that no burn-in cost whatsoever is needed for our algorithm to achieve sample optimality, which lends itself well to sample-hungry applications.

The proposed algorithm is inspired by two key algorithmic ideas in RL and bandit literature: (i) optimism in the face of uncertainty (by leveraging upper confidence bounds (UCBs) in value estimation), 
and (ii) online and adversarial learning (particularly the Follow-the-Regularized-Leader (FTRL) algorithm). 
Note that the optimal design of bonus terms --- typically based on certain data-driven variance estimates --- is substantially more challenging than the single-agent case,   
 as it requires intricate adaptation in response to the policy changes of one another as well as compatibility with the FTRL dynamics.  
Two points are worth emphasizing (which will be made precise later on): 
\begin{itemize}
	\item The efficacy of FTRL in breaking the curse of multiple agents has been illustrated in \citet{jin2021v,song2021can,mao2022provably}. 
		To improve horizon dependency, one needs to exploit connections between the performance of FTRL and certain variances. 
		Towards this, we develop a refined regret bound for FTRL that unveils the role of variance-style quantities, which was previously unavailable. 

	
	\item The bonus terms entail Bernstein-style variance estimates that mimic the variance-style quantities appearing in our refined FTRL regret bounds,  and are carefully chosen so as to ensure certain decomposability over steps. 
		This is crucial in optimizing the horizon dependency. 
\end{itemize}
Additionally, the policy returned by our algorithm is Markovian (i.e., the action selection probability depends only on the current state $s$ and step $h$), 
and the algorithm can be carried out in a decentralized manner without the need of directly observing the opponents' actions.

\paragraph{Other related works.}

%
Let us discuss in passing additional prior works on learning equilibrium solutions in MARL, which 
have attracted an explosion of interest in recent years. 
While the Nash equilibrium is arguably the most compelling solution concept in Markov games, 
the finite-sample/finite-time studies of NE learning concentrate primarily on two-player zero-sum MGs (e.g., \citet{bai2020provable,chen2022almost,mao2022provably,wei2017online,tian2021online,cui2022offline,cui2022provably,zhong2022pessimistic,jia2019feature,yang2022t,yan2022model,dou2022gap}), 
mainly because computing NEs becomes, for the most part, computationally infeasible (i.e., PPAD-complete) when going beyond two-player zero-sum MGs \citep{daskalakis2013complexity,daskalakis2009complexity}.  
Roughly speaking, previous NE-finding algorithms for two-player zero-sum Markov games can be categorized into model-based algorithms \citep{perolat2015approximate,zhang2020marl,liu2021sharp}, value-based algorithms \citep{bai2020provable,bai2020near,xie2020learning,sayin2021decentralized,jin2021v,chen2021almost}, and policy-based algorithms \citep{cen2021fast,daskalakis2020independent,wei2021last,zhao2021provably,chen2021sample,zhang2022policy,cen2022faster}. 
In particular, \citet{bai2020near,jin2021v} developed the first algorithms to beat the curse of multiple agents in two-player zero-sum MGs, 
while \citet{jin2021v,daskalakis2022complexity,mao2022provably,song2021can} further demonstrated how to accomplish the same goal 
when learning other computationally tractable solution concepts (e.g., coarse correlated equilibria) in general-sum multi-player Markov games.  
The recent works \citet{cui2022offline, cui2022provably,yan2022model} studied how to alleviate the sample size scaling with the number of agents in the presence of offline data, 
with \citet{cui2022provably} providing a sample-efficient algorithm that also learns NEs in multi-agent Markov games (despite computational intractability). 

We shall also briefly remark on the prior works that concern RL with a generative model. 
While there are multiple sampling mechanisms (e.g., online exploratory sampling, offline data)  that bear practical relevance, 
the generative model (or simulator) serves as an idealistic sampling protocol that has received much recent attention, 
covering the design of various model-based, model-free and policy-based algorithms 
\citep{kearns2002sparse, wang2021sample, agarwal2020model,azar2013minimax,li2020breaking, jin2021towards, sidford2018near, wainwright2019stochastic, wainwright2019variance, li2021q, kakade2003sample, sidford2018variance,pananjady2020instance, khamaru2020temporal,chen2020finite,even2003learning,beck2012error,wei2021last,vaswani2022near,zanette2019almost,weisz2021exponential,mou2020linear,yang2019sample,zanette2020provably,du2020good}.   
In single-agent RL, the model-based approach has been shown to be minimax-optimal for the entire $\varepsilon$-range \citep{li2020breaking, agarwal2020model,azar2013minimax}. 
When it comes to multi-agent RL, sample-efficient solutions with a generative model have been proposed in the recent works 
\citep{sidford2020solving,cui2021minimax,zhang2020marl}, although a provably sample-optimal strategy was previously unavailable.

\paragraph{Paper organization and notation.}

The rest of the paper is organized as follows. Section~\ref{sec:background} introduces the background of Markov games, the preliminaries of the solution concepts of NE and CCE, and formulates the sampling protocol. 
The proposed learning algorithm and the sampling strategy are described in Section~\ref{sec:algorithm}, 
with the theoretical guarantees provided in Section~\ref{sec:main-results}. 
Section~\ref{sec:FTRL} takes a detour to develop our refined regret bound for FTRL, which plays a crucial role in our main sample complexity analysis in Section~\ref{sec:analysis}. 
Proof details (particularly those for auxiliary lemmas) are postponed to the appendix.

Let us also gather several convenient notation that shall be used multiple times. 
For any positive integer $n$, we write $[n] \coloneqq \{1,\cdots,n\}$. We shall abuse notation and let $1$ and $0$ denote the all-one vector and the all-zero vector, respectively. 
For a sequence $\{\alpha_k\}_{k\geq 1}\subseteq (0,1]$, we define  
\begin{equation}
\alpha_{i}^{k}\coloneqq\begin{cases}
\alpha_{i}\prod_{j=i+1}^{k}(1-\alpha_{j}), & \text{if }0<i<k\\
\alpha_{k}, & \text{if }i=k
\end{cases}\label{def:alpha-i-k}
\end{equation}
for any $1\leq i\leq k$.  
For a given vector $x\in \mathbb{R}^{SA}$ (resp.~$y\in \mathbb{R}^{SAB}$), we denote by $x(s,a)$ (resp.~$y(s,a,b)$) the entry of $x$ (resp.~$y$) associated with
the state-action combination $(s,a)$ (resp.~$(s,a,b)$), as long as it is clear from the context. 
Next, consider any two vectors $a=[a_i]_{1\leq i\leq n}$ and $b=[b_i]_{1\leq i\leq n}$. 
We use $a\leq b$ (resp.~$a\geq b$) to indicate that $a_i\geq b_i$ (resp.~$a_i\leq b_i$) holds for all $i$; 
 we allow scalar functions to take vector-valued arguments in order to denote entrywise operations 
(e.g., $a^2=[a_i^2]_{1\leq i\leq n}$ and $a^4=[a_i^4]_{1\leq i\leq n}$); and we denote by $a \circ b =[a_ib_i]_{1\leq i\leq n}$ the Hadamard product.  
For a finite set $\cA=\{1,\cdots,A\}$, we denote by $\Delta(\cA)=\{x\in \mathbb{R}^A\mymid \sum_i x_i = 1; x \geq 0\}$ the probability simplex over $\cA$. 
For any function $f$ with domain $\cA$ (or $\cB$), we adopt the convenient notation
%
\begin{align}
	\mathbb{E}_{\pi}[f] &\defn \sum\nolimits_{a} \pi(a)f(a)
	\qquad\text{and}\qquad
	\mathsf{Var}_{\pi}(f) \defn \sum\nolimits_{a} \pi(a)\big( f(a) - \mathbb{E}_{\pi}[f]\big)^2. 
	\label{eq:notation-Epi-Varpi} 
\end{align}
%

\section{Background and models}
\label{sec:background}

In this section, we introduce the basics for  Markov games,  
as well as the solution concepts of Nash equilibrium and coarse correlated equilibrium.

\paragraph{Markov games.} 
A non-stationary finite-horizon {\em multi-player general-sum Markov game}, denoted by $\mathcal{MG}=\big\{ \cS, \{\cA_i\}_{1 \le i \le m}, H, P, r \big\}$, 
involves $m$ players competing against each other, and consists of several key elements to be formalized below. Recall that $\Delta(\cS)$ represents the probability simplex over the set $\cS$. 
\begin{itemize}
	\item $\cS=\{1,\cdots,S\}$ is the state space of the shared environment, which comprises $S$ different states. 
	\item For each $1\leq i\leq m$,  let $\mathcal{A}_i=\{1,\cdots, A_i\}$ represent the action space of the $i$-th player, which contains $A_i$ different actions. 
		Here and below, we denote
		\begin{equation}
			\mathcal{A} \coloneqq \mathcal{A}_1 \times \cdots \times \mathcal{A}_m
			\qquad \text{and} \qquad
			\mathcal{A}_{-i} \coloneqq \prod_{j:j\neq i}\mathcal{A}_j ~~~ (1\leq i\leq m). 
		\end{equation}
		Throughout the paper, we shall often use the boldface letter $\bm{a} \in \cA$ (resp.~$\bm{a}_{-i} \in \cA_{-i}$) to denote a joint action profile of all players (resp.~a joint action profile excluding the $i$-th player's action).  
		
	\item $H$ stands for the horizon length of the Markov game. 

	\item $P=\{P_h\}_{1\leq h \leq H}$ --- with $P_h: \cS \times \cA \rightarrow \Delta(\cS)$ --- denotes the probability transition kernel of $\mathcal{MG}$. 
		Namely, for any $(s, \bm{a}, h,s')\in \cS\times \cA \times [H] \times \cS$, we let $P_h(s'\mymid s,\bm{a})$ indicate the probability of $\mathcal{MG}$ transitioning from state $s$ to state $s'$ at step $h$ when the joint action profile taken by the players is $\bm{a}$. 

	\item $r=\{r_{i,h}\}_{1\leq h \leq H, 1\leq i\leq m}$ --- with $r_{i,h}: \cS\times \cA \rightarrow [0,1]$ --- represents the (deterministic) reward function. 
		Namely, for any $(s,\bm{a},h)\in \cS\times \cA \times [H]$, $r_{i,h}(s,\bm{a})$ stands for the immediate reward the $i$-th player gains in state $s$ at step $h$, if the joint action profile is $\bm{a}$. 
		Here and throughout, we assume normalized rewards in the sense that $r_{i,h}(s,\bm{a})\in [0,1]$ for any $(s,\bm{a},h,i)\in \cS\times \cA \times[H] \times [m]$.

\end{itemize}

\noindent 
As an important special case, a {\em two-player zero-sum Markov game} --- denoted by $\mathcal{MG}=\big\{ \cS, \{\cA_1, \cA_2\}, H, P, r \big\}$ --- 
satisfies $r_{2,h}=-r_{1,h}$ for all $h\in [H]$. 
Following the convention, we assume that $r_{1,h}\geq 0$ for all $h\in [H]$,\footnote{The careful reader might immediately note that $r_{2,h}\leq 0$, thus falling outside our assumed range for the reward function. 
This, however, can be easily addressed by enforcing a positive global shift to $r_{2,h}$ without changing the learning process.} 
and refer to the first (resp.~second) player as the max-player (resp.~the min-player).




\paragraph{Markov policies.} 
This paper focuses on the class of Markov policies, such that the action selection strategies of the players are determined by the current state $s$ and the step number $h$, 
without depending on previously visited states.  
To begin with, 
let $\pi_i=\{\pi_{i,h}\}_{1\leq h\leq H}$ represent the policy of the $i$-th player.  
Here, $\pi_{i,h}(\cdot \mymid s) \in \Delta(\cA_i)$ for any $(s,h)\in \cS \times [H]$,  
where $\pi_{i,h}(a \mymid s)$ indicates the probability of the $i$-th player selecting action $a$ in state $s$ at step $h$. 
The joint Markov policy can be defined analogously: 
we let $\pi= (\pi_1,\ldots, \pi_m): \cS  \times [H] \rightarrow \Delta(\cA)$  represent a joint Markov policy of all players, 
where the joint actions of all players in state $s$ and step $h$ are chosen according to the distribution specified by $\pi_h(\cdot \mymid s) = (\pi_{1,h},\ldots, \pi_{m,h})(\cdot\mymid s) \in \Delta(\cA)$. 
For any given joint policy $\pi$, we employ $\pi_{-i}$ to represent the policies of all but the $i$-th player, 
and let $\pi_{-i,h}$ denote the policies of all but the $i$-th player at step $h$. 
All policies are assumed throughout to be Markovian, except our brief remarks on non-Markovian policies in Section~\ref{sec:main-results}. 

Additionally, a joint policy $\pi$ is said to be a {\em product policy} if $\pi_{1},\ldots,\pi_m$ are executed in a statistically independent fashion (namely, under policy $\pi$ the players take actions independently), 
and we shall adopt the notation $\pi=\pi_1\times \cdots \times \pi_m$ to indicate that $\pi$ is a product policy.

\paragraph{Value functions.} 
Consider a Markovian trajectory $\{(s_h,\bm{a}_h)\}_{1\leq h\leq H}$, where $s_h\in \cS$ is the state at step $h$ and $\bm{a}_h\in \cA$ is the joint action profile at step $h$. 
For any given joint policy $\pi$ and any step $h\in[H]$, 
we define the value function $V_{i,h}^{\pi}: \cS \rightarrow \mathbb{R}$ of the $i$-th player under policy $\pi$ as follows: 
\begin{equation}
	V_{i,h}^{\pi}(s)\coloneqq\mathbb{E}\left[\sum_{t=h}^{H}r_{i,t}\big(s_{t}, \bm{a}_{t}\big)\mid s_{h}=s\right],
\qquad \forall s\in \cS, 
\label{eq:value-function-defn}
\end{equation}
where the expectation is taken over the Markovian trajectory $\{(s_h,\bm{a}_h)\}$ with the $m$ players jointly executing policy $\pi$; 
that is, conditional on $s_h$, we draw $\bm{a}_h\sim \pi_h(\cdot \mymid s_h)$  and then $s_{h+1}\sim P_h(\cdot \mymid s_h, \bm{a}_h)$. 

In addition, consider the case where (i) all but the $i$-th player executes the joint policy $\pi_{-i}$ and (ii) 
the $i$-th player executes policy $\pi'_i$ {\em independently} from the other players;    
we shall denote by $V_{i,h}^{\pi'_i \times \pi_{-i}}$ the resulting value function under this joint policy $\pi'_i \times \pi_{-i}$. 
By optimizing over all $\pi'_i$, we can further define 
\begin{equation}
	\label{eq:defn-optimal-V}
	V_{i,h}^{\star,\pi_{-i}}(s)\coloneqq\max_{\pi'_i: \cS\times [H] \rightarrow \Delta(\mathcal{A}_i)} V_{i,h}^{\pi'_i \times \pi_{-i}}(s) ,
	\qquad \forall (s,h,i) \in \cS \times [H] \times [m].  
\end{equation}
It is known that there exists at least one policy, denoted by $\pi_i^{\star}\big( \pi_{-i} \big): \cS\times [H] \rightarrow \Delta(\mathcal{A}_i)$ and commonly referred to as the {\em best-response policy}, 
that can simultaneously attain $V_{i,h}^{\star,\pi_{-i}}(s)$ for all $h\in [H]$ and all $s\in \cS$. 
It is worth emphasizing that the best-response policy $\pi_i^{\star}\big( \pi_{-i} \big)$ 
is the best among all policies of the $i$-th player executed independently of $\pi_{-i}$. 
Furthermore, if we freeze $\pi_{-i}$, then the Bellman optimality condition for the $i$-th player can be expressed as \citep{bertsekas2017dynamic}
\begin{equation}
	V_{i,h}^{\star,\pi_{-i}}(s)
	=\max_{a_i\in \cA_i}\left\{ \mathop{\mathbb{E}}\limits _{\bm{a}_{-i}\sim \pi_{-i,h}(\cdot | s)}\left[r_{i,h}(s,\bm{a})+\Big\langle P_h(\cdot\mymid s, \bm{a}),\,V_{i,h+1}^{\star,\pi_{-i}}\Big\rangle\right]\right\} ,
	\quad \forall (s,h,i) \in \cS \times [H] \times [m],
	\label{eq:Bellman-equation-max}
\end{equation}
where the joint action profile $\bm{a}$ is composed of $a_i$ for the $i$-th player and $\bm{a}_{-i}$ for the remaining ones.

\paragraph{Equilibria of Markov games.}
In a multi-agent Markov game, each player wishes to maximize its own value function. 
Due to the competing objectives,  finding some sorts of equilibria --- e.g., the Nash equilibrium \citep{nash1951non} and the coarse correlated equilibrium \citep{moulin1978strategically,aumann1987correlated} --- becomes a central topic in the studies of Markov games. Let us introduce these solution concepts below.  
\begin{itemize}
	\item {\em Nash equilibrium.} A product policy $\pi=\pi_1\times \cdots \times \pi_m$ is 
said to be a {\em (mixed-strategy) Nash equilibrium} of $\mathcal{MG}$ if the following holds: 
\begin{equation}
	V_{i, 1}^{\pi}(s)=V_{i, 1}^{\star,\pi_{-i}}(s),
	\qquad\text{for all }(s,i)\in\cS\times [m].
	\label{eq:defn-Nash-E}
\end{equation}
In other words, conditional on the opponents' current policy and the assumption that all players take actions {\em independently}, 
no player can harvest any gain by unilaterally deviating from its current policy.


	\item {\em Coarse correlated equilibrium.}  A joint policy $\pi $ is said to be a coarse correlated equilibrium of $\mathcal{MG}$ if
\begin{equation}
	V_{i, 1}^{\pi}(s) \geq V_{i, 1}^{\star,\pi_{-i}}(s),
	\qquad\text{for all }(s,i)\in\cS\times [m].
	\label{eq:defn-CCE}
\end{equation}
While a CCE also ensures that no unilateral deviation (performed independently from others) is beneficial, 
its key distinction from the definition of NE lies in the fact that it allows the policy to be correlated across the players. Any NE of $\mathcal{MG}$ is, self-evidently, also a CCE. 


\end{itemize}
%

%
%

%
%
%

\noindent In practice, it might be challenging to compute an ``exact'' equilibrium, and instead one would seek to find approximate solutions. 
Towards this end, we find it helpful to define the sub-optimality gap of a policy $\pi$ as follows (measured in an $\ell_{\infty}$-based manner)
\begin{subequations}
\label{eq:defn-NE-gap}
\begin{equation}
	\NEgap(\pi) \coloneqq \max_{s\in\cS} \, \NEgap(\pi;s),
	\label{eq:defn-NE-gap-final}
\end{equation}
where
\begin{equation}
	\NEgap(\pi;s) \, \coloneqq \max_{1\leq i\leq m}\Big\{ V_{i, 1}^{\star,\pi_{-i}}(s)-V_{i, 1}^{\pi}(s)\Big\} .
	\label{eq:defn-NE-gap-s}
\end{equation}
\end{subequations}
With this sub-optimality measure in place, a {\em product} policy $\pi=\pi_1\times \cdots \times \pi_m$ is said to be an $\varepsilon$-approximate NE --- or more concisely, $\varepsilon$-Nash --- if the resultant sub-optimality gap obeys $\NEgap(\pi) \leq \varepsilon$. 
Similarly, a joint (and possibly correlated) policy  $\pi $ is said to be an $\varepsilon$-approximate CCE --- or more concisely, $\varepsilon$-CCE --- if $\NEgap(\pi) \leq \varepsilon$.

\paragraph{Generative model\,/\,simulator.} In reality, we oftentimes do not have access to perfect descriptions (e.g., accurate knowledge of the transition kernel $P$) of the Markov game under consideration; instead, one has to learn the true model on the basis of data samples. 
When it comes to the data generating mechanism, this paper assumes access to a generative model (also called a simulator) \citep{kearns2002sparse,kakade2003sample}:   
in each call to the generative model, 
the learner can choose an arbitrary   $(s,\bm{a},h)\in \cS\times \cA \times [H]$ and obtain an independent sample generated based on the true transition kernel:
\[
	s' \sim P_h(\cdot \mymid s, \bm{a}). 
\]
In words,  a generative model facilitates query of arbitrary state-action-step combinations, which helps alleviate the sampling constraints arising in online episodic settings for exploration.
The goal of the current paper is to compute an $\varepsilon$-approximate equilibrium (either NE or CCE) of $\mathcal{MG}$ with as few samples as possible, 
i.e., using a minimal number of calls to the generative model.

%
%

\section{Sample-efficient learning with a generative model}
\label{sec:alg-theory}

In this section, we put forward an efficient algorithm aimed at learning an $\varepsilon$-approximate equilibrium with the assistance of a generative model, 
and demonstrate its sample optimality for the full $\varepsilon$-range.


\subsection{Algorithm description}
\label{sec:algorithm}

We now describe the proposed algorithm, 
which is inspired by the optimism principle and the FTRL algorithm for online/adversarial learning. 
Following the dynamic programming approach \citep{bertsekas2017dynamic}, our algorithm employs backward recursion from step $h=H$ back to $h=1$; in fact, we shall finish the sampling and learning processes for step $h$ before moving backward to step $h-1$.   
For each $h$, the $i$-th player calls the generative model for $K$ rounds, with each round drawing $SA_i$ independent samples;
as a result, the total sample size is given by $KSH\sum_{i = 1}^m A_i$. 
In what follows, let us first introduce some convenient notation that facilitates our exposition of the algorithm.

\paragraph{Notation.}  Consider any step $h\in [H]$, any player $i\in [m]$, and any data collection round $k\in [K]$. The algorithm maintains the following iterates, whose notation is gathered here with their formal definitions introduced later. 
\begin{itemize}
	\item $\widehat{V}_{i, h}\in \mathbb{R}^S$ represents the final estimate of the value function at step $h$ by the $i$-th player; in particular, we set   $\widehat{V}_{i,H+1}=0$. 
	\item $Q_{i, h}^k\in \mathbb{R}^{SA_i}$ represents the Q-function estimate of the $i$-th player at step $h$ after the $k$-th round of data collection. 

	\item $q_{i, h}^k\in \mathbb{R}^{SA_i}$ stands for a certain ``one-step-look-ahead'' Q-function estimate of the $i$-th player at step $h$ using samples collected in the $k$-th round.

	\item $r_{i, h}^k \in \mathbb{R}^{SA_i}$ denotes the sample reward vector for step $h$ received by the $i$-th player in the $k$-th round. 

	\item $P_{i, h}^k \in \mathbb{R}^{SA_i\times S}$ 
		denotes the empirical probability transition matrix for step $h$ constructed using the samples collected by the $i$-th player in the $k$-th round. 


	\item $\beta_{i, h} \in \mathbb{R}^{S}$ denotes the bonus vector chosen by the $i$-th player at step $h$ during final value estimation. 
	

	\item $\pi_{i, h}^k: \mathcal{S}\rightarrow \Delta(\mathcal{A}_i)$ denotes the policy iterate of the $i$-th player at step $h$ before the beginning of the $k$-th round of data collection; in particular, we set $\pi_{i,h}^1$ to be uniform, namely, $\pi_{i, h}^1(a_i \mymid s)= 1/A_i$ for any $(s,a_i)\in \cS\times \cA_i$. 
	
		

\end{itemize}
Crucially, the above objects are all constructed from the perspective of a single player, 
and hence resemble those needed to operate a ``single-agent'' MDP (as opposed to MARL).   
As such, the complexity of storing/updating the above objects only scales with the aggregate size of the individual action space, 
rather than the size of the product action space.

\paragraph{Main steps of the proposed algorithm.}
As mentioned above, our algorithm collects multiple rounds of independent samples for each $h$. 
In what follows, let us describe the proposed procedure for the $i$-th player in the $k$-th round for step $h$.
\begin{itemize}
	\item[1.] {\em Sampling and model estimation.} 
		For each $(s,a_i)\in \cS\times \cA_i$, 
		draw an {\em independent} sample as follows
		\begin{subequations}
		\label{eq:sampling-model-estimation}
		\begin{equation}
			s'_{k,h,s,a_i} \sim P_h\big(\cdot \mymid s,\bm{a}(k,h,s,a_i)\big)
			\qquad \text{and} \qquad
			r_{k,i, h,s,a_i}=r_{i, h}\big(s, \bm{a}(k,h,s,a_i) \big), 
		\end{equation}
		where $\bm{a}(k,h,s,a_i)=[a_j(k,h,s,a_i)]_{1\leq j\leq m}\in \cA$ consists of independent individual actions drawn from 
		\begin{equation}
			a_j(k,h,s,a_i) \overset{\text{ind.}}{\sim} \pi_{j,h}^k(\cdot\mymid s) ~~~~(j\neq i)
			\qquad \text{and} \qquad
			a_i(k,h,s,a_i) = a_i. 
		\end{equation}
		These samples are then employed to construct the sample reward vector $r_{i, h}^k \in \mathbb{R}^{SA_i}$ and empirical probability transition kernel $P_{i, h}^k \in \mathbb{R}^{SA_i\times S}$ such that
		\begin{equation}
			r_{i, h}^{k}(s,a_i)=r_{k,i,h,s,a_i}
			\qquad\text{and}\qquad
			P_{i, h}^{k}(s'\mymid s,a_i)=
			\begin{cases}
			1, & \text{if }s'=s'_{k,h,s,a_i} \\
			0, & \text{else}
			\end{cases}
		\label{eq:construction-model-k}
		\end{equation}
		\end{subequations}
		for all $(s,a_i,s')\in \cS\times \cA_i\times \cS$. Note that the $i$-th player only needs to compute \eqref{eq:construction-model-k}, without the need of  directly observing the other players' actions.

	\item[2.] {\em Q-function estimation.} Following the dynamic programming approach, we first compute the ``one-step-look-ahead'' Q-function estimate as follows
\begin{align}
	q_{i, h}^k &= r_{i, h}^k + P_{i, h}^k\widehat{V}_{i, h+1}. 
	\label{eq:overline-q-update}
\end{align}
We then adopt the update rule of Q-learning:
\begin{align}
	Q_{i, h}^k &= (1-\alpha_k)Q_{i, h}^{k-1} + \alpha_k q_{i, h}^k,
	\label{eq:overline-Q-update-12}
\end{align}
where $0<\alpha_k<1$ is the learning rate. Applying \eqref{eq:overline-Q-update-12} recursively and using the quantities defined in \eqref{def:alpha-i-k}, we easily arrive at the following expansion: 
\begin{align}
	Q_{i, h}^k = \sum_{j = 1}^k \alpha_j^k q_{i, h}^j.
	\label{eq:overline-Q-expansion-12}
\end{align}

	\item[3.] {\em Policy updates.} Once the Q-estimates are updated, we adopt the exponential weights strategy to update the policy iterate of the $i$-th player as follows
		\begin{align}
			\pi_{i, h}^{k+1}(a_i\mymid s) &= \frac{\exp\big(\eta_{k+1} Q_{i, h}^k(s,a_i)\big)}{\sum_{a'\in \cA_i} \exp\big(\eta_{k+1} Q_{i, h}^k(s,a')\big)},
			\qquad \forall (s,a_i)\in \cS\times \cA_i,
			\label{eq:policy-update-exponential-12}
		\end{align}
		where $\eta_{k+1}>0$ is another learning rate associated with policy updates (to be specified shortly). 
		In fact, this subroutine implements the Follow-the-Regularized-Leader strategy \citep{shalev2012online}: 
		\begin{align}
			\pi_{i, h}^{k+1}(\cdot \mymid s) 
			&= \arg\min_{\mu \in \Delta(\cA_i)} ~
			\bigg\{ - \big\langle \mu, Q_{i, h}^k(s,\cdot) \big\rangle + \frac{1}{\eta_{k+1}} F(\mu) \bigg\}, 
			\label{eq:policy-update-exponential-FTRL}
		\end{align}
		where the regularizer $F(\cdot)$ is chosen to be the negative entropy function $F(\mu) \coloneqq \sum_{a\in \cA_i}\mu(a)\log \big(\mu(a)\big)$.

\end{itemize}
After carrying out $K$ rounds of the above procedure, our final policy estimate $\widehat{\pi}:  \cS \times [H] \rightarrow \Delta(\cA)$ and the value estimate $\widehat{V}_{i, h}: \cS\rightarrow \mathbb{R}$ for step $h$ 
are taken respectively to be 
\begin{subequations}
\begin{align}
	\widehat{V}_{i, h}(s) &= \min\left\{\sum_{k = 1}^K \alpha_k^K \Big\langle \pi_{i, h}^k(\cdot \mymid s), \,q_{i, h}^k(s,\cdot) \Big\rangle + \beta_{i, h}(s), ~ H-h+1 \right\} 
	\qquad \text{and} 
	\label{eq:V-max-output} \\
	\widehat{\pi}_h( \bm{a} \mymid s) &= \sum_{k=1}^K \alpha_k^K \prod_{i=1}^m \pi_{i, h}^k(a_i \mymid s)  \label{eq:policy-output-h}
\end{align}
\end{subequations}
for any $\big(s,\bm{a}=[a_1,\ldots,a_m]\big)\in \cS\times \cA$, 
where $\{\alpha_k^K\}$ is defined in \eqref{def:alpha-i-k}
and $\beta_{i, h}(s) \geq 0$ is some bonus term (taking the form of some data-driven upper confidence bound) 
to be specified momentarily.  
It is worth pointing out that the final policy \eqref{eq:policy-output-h} takes the form of {\em a mixture of product policies}. 
In the special case of two-player zero-sum MGs, 
we can alternatively output a product policy 
\begin{equation}
	(\text{two-player zero-sum MGs}) \qquad \widehat{\pi} = \widehat{\pi}_1 \times \widehat{\pi}_2 ,
\end{equation}
where for each $i=1,2$, we take $\widehat{\pi}_{i}=\{\widehat{\pi}_{i,h}\}_{1\leq h\leq H}$   with 
		 $\widehat{\pi}_{i,h} =  \sum_{k=1}^K \alpha_{k}^{K}  \pi_{i,h}^{k} 	$.

The whole procedure is summarized in Algorithm~\ref{alg:Markov-games-simulator}.

\paragraph{Choices of learning rates.}  
Thus far, we have not yet specified the two sequences of learning rates, which we describe now. 
The learning rates associated with Q-function updates are set to be rescaled linear, namely, 
\begin{align}
	\alpha_k = \frac{ \calpha \log K}{k-1 + \calpha \log K}, \qquad k=1, 2, \ldots
	\label{eq:alphak-choice}
\end{align}
for some constant $\calpha\geq 24$. 
In addition, the learning rates associated with policy updates are chosen to be:
\begin{align}
	\eta_{k+1} &= \sqrt{\frac{\log K}{\alpha_kH}}, \qquad k=1, 2, \ldots
	\label{eq:eta-k-choice}
\end{align}
%

\paragraph{Choices of bonus terms.} 
It remains to specify the bonus terms, which are selected based on fairly intricate upper confidence bounds. 
This constitutes a key --- and perhaps the most challenging --- component of our algorithm design.  
Specifically, we take
%
%
\begin{align}
%
	\beta_{i,h} (s) &= 
	\cb\sqrt{\frac{\log^3 \big( \frac{KS\sum_i A_i}{\delta} \big) }{KH}}\sum_{k = 1}^K 
	\alpha_k^K \Bigg\{ \mathsf{Var}_{\pi_{i,h}^{k}(\cdot\mid s)}\Big(q_{i,h}^k(s,\cdot)\Big) + H\Bigg\}
	\label{eq:choice-bonus-terms-V}
\end{align}
%
%
for any $(i,s,h)\in [m] \times \cS\times [H]$, where $\cb > 0$ is some sufficiently large constant; 
see also \eqref{eq:notation-Epi-Varpi} for the definition of the variance-style quantity. 
As in previous works, the bonus terms, which are chosen carefully in a data-driven fashion, need to compensate for the uncertainty incurred during the estimation process. 
%
%
%


\begin{algorithm}[t]
	\textbf{Input:} number of rounds $K$ for each step, learning rates $\{\alpha_k\}$ (cf.~\eqref{eq:alphak-choice}) and $\{\eta_{k+1}\}$ (cf.~\eqref{eq:eta-k-choice}). \\
	{\color{blue}\tcp{set initial value estimates to 0, and initial policies to uniform distributions.}}
	\textbf{Initialize:}  for any $i\in [m]$ and any $(s,a_i,h)\in \cS\times \cA_i \times[H]$, set
	$\widehat{V}_{i,H+1}(s)=Q_{i, h}^0(s,a_i)=0$ and $\pi_{i,h}^1(a_i\mymid s)=1/A_i$.  \\
	\For{$ h = H$ \KwTo $1$}{
	\For{$ k = 1$ \KwTo $K$}{
	\For{$ i = 1$ \KwTo $m$}{
		{\color{blue}\tcp{draw independent samples, and construct empirical models.}}
		$\big(r_{i,h}^k,P_{i,h}^k\big)$ $\leftarrow$ 
		\tt{sampling}$\big(i, h, \pi_h^k = \{\pi_{j, h}^k\}_{j\in[m]}\big)$. {\color{blue}\tcc{see Algorithm~\ref{alg:sampling-function}.}}
		
		{\color{blue}\tcp{update Q-estimates with upper confidence bounds.}} 
		Compute 
$q_{i, h}^k = r_{i, h}^k + P_{i, h}^k\widehat{V}_{i, h+1}$, 
and update
\vspace{-0.5em}
\begin{align*}
	Q_{i, h}^k = (1-\alpha_k)Q_{i, h}^{k-1} + \alpha_k q_{i, h}^k.
\end{align*}
{\color{blue}\tcp{update policy estimates using FTRL.}}
	\For{$(s,a_i)\in \cS\times \cA_i$}{
		\vspace{-2em}
		\begin{align*}
			\pi_{i, h}^{k+1}(a_i\mymid s) = \frac{\exp\big(\eta_{k+1} Q_{i, h}^k(s,a_i)\big)}{\sum_{a'} \exp\big(\eta_{k+1} Q_{i, h}^k(s,a')\big)}.
		\end{align*}
	}
	}
	}
	{\color{blue}\tcp{output the final value estimate for step $h$.}}
	\For{$ i = 1$ \KwTo $m$}{
	\label{line:policy-V-output} 
	\vspace{-2em}
	\begin{align}
		\widehat{V}_{i,h}(s)&=\min\left\{ \sum_{k=1}^{K}\alpha_{k}^{K}\big\langle\pi_{i,h}^{k}(\cdot\mymid s),\,q_{i,h}^{k}(s,\cdot)\big\rangle+\beta_{i,h}(s),~H-h+1\right\} , 
		~\forall s\in  \cS, \notag
	\end{align}
	where $\beta_{i,h}$ is given in \eqref{eq:choice-bonus-terms-V}. \label{eq:line-number-policy-update}
	}
	}
	%
	
%
	
	\If{ $\mathcal{MG}$ is a two-player zero-sum Markov game }
	{
		\textbf{output:} $\widehat{\pi}_{1}\times \widehat{\pi}_{2}$, where for any $i=1,2$, $\widehat{\pi}_{i}=\{\widehat{\pi}_{i,h}\}_{1\leq h\leq H}$   with 
		 $\widehat{\pi}_{i,h} =  \sum_{k=1}^K \alpha_{k}^{K}  \pi_{i,h}^{k} 	$. 
		 \label{line:output-two-player-zero-sum}
	}
	\If{ $\mathcal{MG}$ is a multi-player general-sum Markov game } {
	\textbf{output:} $\widehat{\pi}=\{\widehat{\pi}_h\}_{1\leq h\leq H}$, 
	where $\widehat{\pi}_{h} =  \sum_{k=1}^K \alpha_{k}^{K} \big(  \pi_{1,h}^{k} \times \cdots \times \pi_{m,h}^{k} \big)	$. 
}

	\caption{\myalg.\label{alg:Markov-games-simulator}}

\end{algorithm}

\begin{algorithm}[t]

	\textbf{Initialize:} $\overline{r}=0 \in \mathbb{R}^{SA_i}$, and $\overline{P}=0 \in \mathbb{R}^{SA_i\times S}$. 
	
	\For{$(s,a_i)\in \cS\times \cA_i$}{
		Draw an independent sample from the generative model:
		\begin{equation}
			s'_{s,a_i} \sim P_h\big(\cdot \mymid s, \bm{a}(s,a_i)\big) ,
		\end{equation}
		where $\bm{a}(s,a_i)= [a_{j}(s,a_i) ]_{1\leq j\leq m}$ is composed of independent individual actions drawn from%
		\begin{equation}
			a_{j}(s,a_i) \overset{\text{ind.}}{\sim} \pi_{j, h}(\cdot \mymid s) ~~~~ (j\neq i)
			\qquad \text{and} \qquad
			a_{i}(s,a_i) = a_i.
		\end{equation}

		Set $\overline{r}(s,a_i)=r_{i,h}\big( s,\bm{a}(s,a_i) \big)$ and $\overline{P}\big(s'_{s,a_i}\mymid s,a_i \big)=1$. 
	}

	\textbf{Return:} $\big( \overline{r}, \overline{P} \big)$.

	\caption{Auxiliary function \tt{sampling}$\big(i, h, \pi_h = \{\pi_{j, h}\}_{j\in[m]}\big)$.\label{alg:sampling-function}}
\end{algorithm}


\subsection{Main results}
\label{sec:main-results}

As it turns out, the proposed algorithm is tractable and provably sample-efficient. 
We begin by characterizing its sample complexity when learning Nash equilibria in two-player zero-sum MGs, 
and then shift attention to learning CCE in multi-player general-sum MGs (given the intractability of learning NEs in general).
\begin{theorem}[NE for two-player zero-sum MGs]
\label{eq:main-result-zero-sum-special}
Consider a two-player zero-sum Markov game, 
and consider any $\varepsilon \in (0,H]$ and any $0<\delta < 1$. Suppose that 
\begin{equation}
	K\geq \frac{c_{\mathsf{k}}H^3 \log^4 \big( \frac{KS(A_1+A_2)}{\delta}\big) }{\varepsilon^2} 
	\label{eq:K-sample-size-thm-special}
\end{equation}
for some large enough universal constant $c_{\mathsf{k}}>0$.  
With probability at least $1-\delta$, the product policy $\widehat{\pi}_1\times \widehat{\pi}_2$ computed by Algorithm~\ref{alg:Markov-games-simulator} is an $\varepsilon$-approximate Nash equilibrium, i.e., 
its sub-optimality gap (cf.~\eqref{eq:defn-NE-gap})  obeys
\begin{align*}
	\NEgap\big(\widehat{\pi}_1\times \widehat{\pi}_2\big) \leq \varepsilon .
\end{align*}
\end{theorem}
%
%
\begin{theorem}[CCE for multi-player general-sum MGs]
\label{eq:main-result-zero-sum}
Consider an $m$-player general-sum Markov game, and consider any $\varepsilon \in (0,H]$ and any $0<\delta < 1$. Suppose that 
\begin{equation}
	K\geq \frac{c_{\mathsf{k}}H^3 \log^4 \big( \frac{KS\sum_{i=1}^m A_i}{\delta} \big) }{\varepsilon^2}
	\label{eq:K-sample-size-thm}
\end{equation}
for some large enough universal constant $c_{\mathsf{k}}>0$. 
With probability at least $1-\delta$, the joint policy $\widehat{\pi}$ returned by Algorithm~\ref{alg:Markov-games-simulator} is an $\varepsilon$-approximate CCE, i.e., 
its sub-optimality gap (cf.~\eqref{eq:defn-NE-gap}) obeys
\begin{align*}
	\NEgap\big(\widehat{\pi}\big) \leq \varepsilon .
\end{align*}
\end{theorem}

Theorems~\ref{eq:main-result-zero-sum-special}-\ref{eq:main-result-zero-sum} establish sample complexity upper bounds for the proposed algorithm, which we take a moment to interpret as follows. 
The proofs of these two theorems are postponed to Section~\ref{sec:analysis}.

\paragraph{Sample complexity.}
When a generative model is available, Theorems~\ref{eq:main-result-zero-sum-special}-\ref{eq:main-result-zero-sum} assert that the total number of samples (i.e., $KSH\sum_i A_i$) needed for Algorithm~\ref{alg:Markov-games-simulator} 
to work is 
\begin{align}
	\begin{cases}
		\widetilde{O}\big( \frac{H^4S(A_1+A_2)}{\varepsilon^2} \big), \qquad & \text{for learning an }\varepsilon \text{-NE in two-player zero-sum MGs};\\
		\widetilde{O}\big( \frac{H^4S\sum_{i=1}^m A_i}{\varepsilon^2} \big), & \text{for learning an }\varepsilon \text{-CCE in multi-player general-sum MGs}.\\
	\end{cases}
	\label{eq:sample-complexity-Thm-1}
\end{align}
As far as we know, our theorems deliver the first results that uncover the plausibility of simultaneously overcoming the long-horizon barrier and the curse of multi-agents. 
Let us compare \eqref{eq:sample-complexity-Thm-1} with prior art. 
\begin{itemize}
	\item {\em NE in two-player zero-sum MGs. }
First, consider learning $\varepsilon$-NE policies in two-player zero-sum MGs.  
In comparison to \citet{zhang2020marl} (cf.~\eqref{eq:Zhang-plug-in-result}), 
our result reveals that what ultimately matters is the total number of individual actions (i.e., $A_1+A_2$) as opposed to the total number $A_1A_2$ of possible joint actions;   
additionally, our results exhibit improved horizon dependency (by a factor of $H^2$) compared to \citet{bai2020near,jin2021v} (see \eqref{eq:Jin-v-learning-result}), 
although we remark that the online sampling protocol therein is clearly more restrictive than a generative model. 
 
	\item {\em CCE in multi-player general-sum MGs (for a fixed $m$).}  
		Similar messages carry over to the task of learning multi-player general-sum MGs when the number of players $m$ is a fixed constant. 
		\citet{liu2021sharp} provided the first non-asymptotic result on  learning CCE in the exploration setting; 
		the model-based algorithm studied therein learns an $\varepsilon$-CCE using
		\begin{equation}
			\widetilde{O}\bigg( \frac{H^5S^2 \prod_{i=1}^m A_i}{\varepsilon^2} \bigg) ~~\text{samples}
			\qquad \text{or} \qquad
			\widetilde{O}\bigg( \frac{H^4S^2 \prod_{i=1}^m A_i}{\varepsilon^2} \bigg) ~~\text{episodes}
		\end{equation}
		which is sub-optimal in terms of the dependency on both $H$ and $S$ and suffers from the curse of multiple agents. 
		A more recent strand of works focused on a type of online RL algorithms called V-learning, which exploited the effectiveness of adversarial learning subroutines in overcoming the curse of multi-agents \citep{mao2022provably,song2021can,jin2021v}; along this line,  the state-of-the-art sample complexity bound is \citep{jin2021v}: 
		\begin{equation}
			\widetilde{O}\bigg( \frac{H^6S \max_{1\leq i \leq m} A_i}{\varepsilon^2} \bigg) ~~\text{samples}
			\qquad \text{or} \qquad
			\widetilde{O}\bigg( \frac{H^5S \max_{1\leq i \leq m} A_i}{\varepsilon^2} \bigg) ~~\text{episodes},
		\end{equation}
		which remains suboptimal in terms of the horizon dependency. 
		As a drawback of these works, the policy returned by V-learning is non-Markovian, an issue that has been recently addressed by \citet{daskalakis2022complexity} at the price of a much higher sample complexity.  
		It is worth emphasizing that all these works assume the online exploration setting as opposed to the scenario with a generative model. 
		
\end{itemize}

\paragraph{Minimax optimality.} To assess the tightness of our result \eqref{eq:sample-complexity-Thm-1}, it is helpful to look at the information-theoretic limit. 
Following the minimax lower bound for single-agent MDPs \citep{azar2013minimax,li2022settling}, 
one can develop a minimax sample complexity lower bound for Markov games (w.r.t.~finding either an $\varepsilon$-NE or an $\varepsilon$-CCE) that scales as 
\begin{align}
	\text{(minimax lower bound)} \qquad  \frac{H^4S  \max_{1\leq i\leq m} A_i }{\varepsilon^2} 	
	\label{eq:minimax-lower-sample-complexity}
\end{align}
modulo some logarithmic factor;
see Appendix~\ref{sec:lower} for a formal statement and its proof.  
Taking this together with \eqref{eq:sample-complexity-Thm-1} confirms the minimax optimality of our algorithm (up to logarithmic terms) when the number $m$ of players is fixed or grows only logarithmically in problem parameters.

\paragraph{No burn-in sample size and full $\varepsilon$-range. } 
It is noteworthy that the validity of our sample complexity bound \eqref{eq:sample-complexity-Thm-1} is guaranteed for the entire range of $\varepsilon$-levels (i.e., any $\varepsilon \in (0,H]$). This feature is particularly appealing in the data-starved applications, 
as it implies that there is no burn-in sample size needed for our algorithm to work optimally.  

\paragraph{Miscellaneous properties of our algorithm.} 
Finally, we would like to remark in passing that our learning algorithm enjoys several properties that might be practically appealing.  
For instance, the output policies are Markovian in nature, which depend only on the current state $s$ and step number $h$.  
This is enabled thanks to the availability of the generative model, which allows us to settle the sampling and learning process for step $h+1$ completely before moving backward to step $h$; 
in contrast, the online sampling protocol studied in \citet{bai2020near,jin2021v} cannot be implemented in this way without incurring information loss. 
In addition, our algorithm can be carried out in a decentralized fashion (except that the final estimate $\widehat{\pi}$ needs to aggregate policy iterates from all players), with each player acting in a symmetric yet independent manner (without the need of knowing each other's individual action). Our algorithm is also ``rational'' in the sense that it converges to the best-response policy of a player if all other players freeze their policies. 
All this is achieved under minimal sample complexity with the aid of the generative model.


\section{Regret bounds for FTRL via variance-type quantities}
\label{sec:FTRL}

Before embarking on our analysis for Markov games, we take a detour to study the celebrated Follow-the-Regularized-Leader algorithm for online weighted linear optimization, 
which plays a central role in the analysis of Markov games.

\subsection{Setting: online learning for weighted linear optimization}
\label{sec:online-learning-weighted}

Let $\ell_1,\ldots,\ell_n \in \mathbb{R}^A$ represent an arbitrary sequence of {\em non-negative} loss vectors. 
We focus on the following setting of online learning or adversarial learning \citep{lattimore2020bandit}: 
in each round $k$, 
\begin{itemize}
	\item[1.] the learner makes a randomized prediction by choosing a distribution $\pi_k\in \Delta(\cA)$ over the actions in $\cA=\{1,\cdots,A\}$;
	\item[2.] subsequently, the learner observes the loss vector $\ell_k$, which is permitted to be adversarially chosen.
\end{itemize}

To evaluate the performance of the learner, we resort to a regret metric w.r.t.~a certain weighted linear objective function. 
To be precise, 
consider a non-negative sequence $\{\alpha_k \}_{1\leq k\leq n}$ with $0\leq \alpha_k\leq 1$;  
for each $1\leq k\leq n$, we define recursively the following weighted average of the loss vectors:    
\begin{align*}
	L_0 = 0 \qquad \text{and} \qquad L_k = (1-\alpha_k) L_{k-1} + \alpha_k  \ell_k, \quad  k\geq 1, 
\end{align*}
which can be easily shown to enjoy the following expression
\begin{align*}
	L_k 
	= \sum_{i = 1}^k \alpha_i^k \ell_k 
\end{align*}
with $\alpha_i^k$ defined in \eqref{def:alpha-i-k}. When the sequential predictions made by the learner are $\{\pi_k\}_{k\geq 1}$, 
we define the associated regret w.r.t.~the above weighted sum of loss vectors as follows:  
\begin{align}
	R_n &\defn \max_{a\in \cA} R_n(a) 
\qquad \text{with }\,
	R_n(a) \defn \sum_{k = 1}^n \alpha_k^n \langle \pi_k, \ell_k\rangle - \sum_{k = 1}^n \alpha_k^n  \ell_k(a),  
	\label{eq:regret-defn-FTRL}
\end{align}
which compares the learner's performance (i.e., the expected loss of the learner over time if it draws actions based on $\pi_k$ in round $k$) against that of the best {\em fixed} action in hindsight.


\subsection{Refined regret bounds for FTRL}
\label{sec:refined-regret-FTRL}

\paragraph{Follow-the-Regularized-Leader.}
The FTRL algorithm \citep{shalev2007primal,shalev2007online} tailored to the above online optimization setting adopts the following update rule:
\begin{align}
	\pi_{k+1} &= \arg\min_{\pi \in \Delta(\cA)} ~ \bigg\{ \langle \pi, L_k\rangle + \frac{1}{\eta_{k+1}}F(\pi) \bigg\},
	\qquad k= 1,2,\ldots
	\label{eq:FTRL-update-general}
\end{align}
%
where $\eta_{k+1} >0 $ denotes the learning rate, 
and $F(\cdot)$ is some convex regularization function employed to stabilize the learning process \citep{shalev2012online}.  
Throughout this section, we restrict our attention to negative-entropy regularization, namely,
\begin{align*}
	F(\pi) = \sum_{a\in \cA} \pi(a)\log\big(\pi(a)\big), 
\end{align*}
which allows one to express the FTRL update rule as the following exponential weights strategy (see, e.g., \citet[Section 28.1]{lattimore2020bandit})
\begin{align}
	\pi_{k+1}(a) =  \frac{ \exp\big( -\eta_{k+1}L_{k} (a) \big) }{  \sum_{a'\in \cA} \exp\big(-\eta_{k+1}L_{k}(a') \big) }
	\qquad \text{for all } a\in \cA.
	\label{eq:FTRL-update-entropy}
\end{align}
This update rule is also intimately connected to online mirror descent \citep{lattimore2020bandit}.

\paragraph{Refined regret bounds via variance-style quantities.}

As it turns out, the regret of FTRL can be upper bounded by certain (weighted) variance-type quantities, 
as asserted by the following theorem.


\begin{theorem} 
\label{thm:FTRL-refined}
Suppose that $0< \alpha_1 \leq 1$ and $\eta_1 = \eta_2(1-\alpha_1)$. 
Also, assume that $0< \alpha_k < 1$ and $0<\eta_{k+1}(1-\alpha_k) \leq \eta_{k}$ for all $k\geq 2$. 
In addition, define
\begin{align}
\widehat{\eta}_k \defn
\begin{cases}
\eta_2, & \text{if }  k = 1, \\
\frac{\eta_{k}}{1-\alpha_k},\quad & \text{if }  k > 1.
\end{cases}.
\end{align}
Then the regret (cf.~\eqref{eq:regret-defn-FTRL}) of the FTRL algorithm satisfies
%
\begin{align}
R_n &\le \frac{5}{3}\sum_{k=1}^{n}\alpha_{k}^{n}\widehat{\eta}_{k}\alpha_{k}\mathsf{Var}_{\pi_{k}}(\ell_{k})
	+ \frac{\log A}{\eta_{n+1}}+3\sum_{k=1}^{n}\alpha_{k}^{n}\widehat{\eta}_{k}^{2}\alpha_{k}^{2}\big\|\ell_{k}\big\|_{\infty}^{3}\ind\bigg(\widehat{\eta}_{k}\alpha_{k}\big\|\ell_{k}\big\|_{\infty}>\frac{1}{3}\bigg), 
	\label{eq:FTRL-refined}
\end{align}
where for any $\ell\in \mathbb{R}^A$ and any $\pi \in \Delta(\cA)$ we define
\begin{align*}
	\mathsf{Var}_{\pi}(\ell) \defn \sum_a \pi(a)\Big( \ell(a) - \sum_{a^{\prime}} \pi(a^{\prime}) \ell(a^{\prime})\Big)^2.
\end{align*}
\end{theorem}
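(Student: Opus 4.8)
The plan is to establish the bound via the standard ``one-step progress plus regularizer'' decomposition for FTRL with time-varying learning rates, but with the refinement that each per-step term is controlled by a \emph{local norm} quantity that, for the exponential-weights instantiation, is exactly a weighted variance $\mathsf{Var}_{\pi_k}(\ell_k)$. First I would reduce the weighted regret $R_n(a)$ to a telescoping sum. A convenient device here is to introduce normalized potentials / a change of variables $\widehat\eta_k$ so that the effective loss increments behave like those of a vanilla FTRL run: because $L_k = (1-\alpha_k)L_{k-1} + \alpha_k\ell_k$ and $\pi_{k+1}\propto \exp(-\eta_{k+1}L_k)$, one can rewrite $\eta_{k+1}L_k = \widehat\eta_{k+1}(1-\alpha_k)L_k$, and the hypotheses $\eta_1=\eta_2(1-\alpha_1)$ and $0<\eta_{k+1}(1-\alpha_k)\le\eta_k$ are precisely what make the sequence of ``cumulative scaled losses'' nondecreasing in the learning-rate sense required for the classical FTRL analysis. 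After this reduction, the regret against action $a$ splits into (i) a sum of per-round Bregman-divergence-type terms of the form $\langle \pi_k - \pi_{k+1}, \text{(scaled increment)}\rangle$ minus a divergence penalty, and (ii) a single boundary term $\frac{1}{\eta_{n+1}}(F(\pi) - F(\pi_1)) \le \frac{\log A}{\eta_{n+1}}$, using $0\le -F \le \log A$ on the simplex.

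The heart of the argument is bounding each per-round term. For exponential weights, the one-step update from $\pi_k$ to $\pi_{k+1}$ is a softmax tilt by the increment vector $-\widehat\eta_k\alpha_k\ell_k$ (up to the normalization bookkeeping), so the local progress term is governed by the log-partition function $\psi(t) = \log \mathbb{E}_{\pi_k}[\exp(-t\,(\ell_k - \mathbb{E}_{\pi_k}\ell_k))]$ evaluated at $t = \widehat\eta_k\alpha_k$. The standard route is a second-order Taylor expansion: $\psi(t) \le \tfrac12 t^2 \mathsf{Var}_{\pi_k}(\ell_k)$ when $t\|\ell_k\|_\infty$ is small, but to get a clean bound valid for all $k$ one needs a quantitative estimate. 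I would use an inequality of the type $e^{-x} \le 1 - x + \tfrac12 x^2$ for $x \ge 0$ wait — more carefully, $\log\mathbb{E}[e^{-tX}] \le -t\,\mathbb{E}[X] + \tfrac{t^2}{2(1 - c)}\mathsf{Var}(X)$ when $t\|X\|_\infty \le c < 1$, for an appropriate $c$ (here $c = 1/3$), which accounts for the constant $\tfrac53$ in front of the variance term. When instead $\widehat\eta_k\alpha_k\|\ell_k\|_\infty > \tfrac13$, the refined bound fails and one falls back to the crude estimate $\psi(t) \le t^2\|\ell_k\|_\infty^2$ or a cubic correction, which is exactly the role of the last sum with the indicator $\ind(\widehat\eta_k\alpha_k\|\ell_k\|_\infty > \tfrac13)$ and the $\|\ell_k\|_\infty^3$ factor. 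Summing the per-round bounds with the weights $\alpha_k^n$ (which emerge naturally from telescoping the normalized potentials against the definition \eqref{def:alpha-i-k}) produces the three terms on the right-hand side of \eqref{eq:FTRL-refined}.

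The main obstacle I anticipate is the careful bookkeeping of the time-varying, ``discounted'' learning rates: tracking how the factors $(1-\alpha_k)$, $\eta_k$, $\widehat\eta_k$, and the $\alpha_i^k$ coefficients interlock so that the telescoping is exact and every residual term carries the correct $\alpha_k^n$ weight. In particular, verifying that the monotonicity hypotheses on $\{\eta_k\}$ and $\{\alpha_k\}$ are exactly strong enough to make the FTRL ``be-the-leader'' telescoping go through — and no stronger — requires matching the update $\pi_{k+1}\propto\exp(-\eta_{k+1}L_k)$ to an equivalent unit-weight FTRL recursion and confirming the implied cumulative-loss sequence is admissible. The analytic lemma bounding $\log\mathbb{E}_{\pi_k}[e^{-tX}]$ with the sharp $\tfrac53$ constant in the regime $t\|X\|_\infty\le\tfrac13$, and its crude complement, is routine one-variable calculus once set up, so I expect it to be the easy part; the scalar-recursion accounting is where the delicacy lies.
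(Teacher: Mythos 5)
Your plan follows essentially the same route as the paper's proof: a weighted be-the-leader/telescoping reduction (relying on exactly the stated conditions on $\{\alpha_k\},\{\eta_k\}$ and on the effective step size $\widehat{\eta}_k\alpha_k$) that leaves per-round stability terms weighted by $\alpha_k^n$ plus the boundary term $\frac{\log A}{\eta_{n+1}}$, followed by a second-order estimate of each stability term yielding $O\big(\widehat{\eta}_k\alpha_k\,\mathsf{Var}_{\pi_k}(\ell_k)\big)$ in the regime $\widehat{\eta}_k\alpha_k\|\ell_k\|_\infty\le\tfrac13$ and a crude cubic bound otherwise. The only cosmetic difference is that the paper carries out the second step via an entrywise multiplicative lower bound on the auxiliary iterate $\pi_{k+1}^-\propto\exp(-\widehat{\eta}_kL_k)$ relative to $\pi_k$ (using $e^{-x}\ge 1-x$ and $e^{-x}\le 1-x+x^2$) rather than through the log-partition function, and it proves the weighted telescoping by an explicit induction --- the bookkeeping you correctly identify as the delicate part.
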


\begin{remark} Note that the FTRL algorithm and the data generating process in this section are both described in a completely deterministic manner; no randomness is involved in the above theorem even though we introduce the variance-style quantities. 
\end{remark}
The proof of Theorem~\ref{thm:FTRL-refined} is postponed to Appendix~\ref{sec:proof-thm:FTRL-refined}. 
Let us take a moment to discuss the key distinction between Theorem~\ref{thm:FTRL-refined} and prior theory. 
\begin{itemize}
	\item
A key term in the regret bound \eqref{eq:FTRL-refined} is a weighted sum of the ``variance-style'' quantities $\{\mathsf{Var}_{\pi_{k}}(\ell_{k})\}$. 
In comparison, prior regret bounds typically involve the norm-type quantities (e.g., the infinity norms  $\{\|\ell_{k}\|_{\infty}^2\}$) as opposed to the ``variances''; 
see, for instance,  \citet[Corollary 28.8]{lattimore2020bandit} for a representative existing regret bound that takes the form of the sum of $\{\|\ell_{k}\|_{\infty}^2\}$ that takes the form of the sum of $\{\|\ell_{k}\|_{\infty}^2\}$.\footnote{Note that the Bregman divergence generated by the negative entropy function is the (generalized) KL divergence \citep{beck2017first}, which is strongly convex w.r.t.~$\|\cdot\|_1$ due to Pinsker's inequality. Additionally, the dual norm of $\|\cdot\|_1$ is the infinity norm.} 
While $\mathsf{Var}(\ell_k)\leq \| \ell_k \|_{\infty}^2$ is orderwise tight in the worst-case scenario for a given iteration $k$, 
exploiting the problem-specific variance-type structure across time is crucial in sharpening the horizon dependence in many RL problems (e.g., \citet{azar2013minimax,jin2018q,li2022settling,li2021breaking}). 

	\item The careful reader would  remark that the final term of \eqref{eq:FTRL-refined} relies on the infinity norm $\|\ell_k\|_{\infty}$ as well. 
		Fortunately, when the products of the learning rates $\widehat{\eta}_k \alpha_k$ are chosen to be diminishing (which is the case in our analysis for Markov games), 
		the number of iterations obeying $\widehat{\eta}_{k}\alpha_{k}\|\ell_{k}\|_{\infty}>1/3$ 
		is reasonably small,  thus ensuring that this term does not exert too much of an influence on the regret bound. 

\end{itemize}

\section{Proof of Theorems~\ref{eq:main-result-zero-sum-special}-\ref{eq:main-result-zero-sum}}
\label{sec:analysis}


To begin with, we claim that Theorem~\ref{eq:main-result-zero-sum-special} is a direct consequence of Theorem~\ref{eq:main-result-zero-sum}. 
Towards this, note that in a two-player zero-sum Markov game, it is self-evident that $\widehat{\pi}_{-1} = \widehat{\pi}_{2}$ and $\widehat{\pi}_{-2} = \widehat{\pi}_{1}$ (see line \ref{line:output-two-player-zero-sum} of Algorithm~\ref{alg:Markov-games-simulator}).
Consequently, Theorem~\ref{eq:main-result-zero-sum} (if it is valid) reveals that
\begin{align}
	\varepsilon \ge \NEgap(\widehat{\pi}; s) &= \max\Big\{ V_{1, 1}^{\star,\widehat{\pi}_{-1}}(s)-V_{1, 1}^{\widehat{\pi}}(s), \, V_{2, 1}^{\star,\widehat{\pi}_{-2}}(s)-V_{2, 1}^{\widehat{\pi}}(s)\Big\} \notag\\
	&= \max\Big\{ V_{1, 1}^{\star,\widehat{\pi}_{2}}(s)-V_{1, 1}^{\widehat{\pi}}(s), \, V_{2, 1}^{\widehat{\pi}_{1}, \star}(s)-V_{2, 1}^{\widehat{\pi}}(s)\Big\},
	\qquad\quad \text{for all } s\in \cS.
	\label{eq:gap-two-player}
\end{align}
Moreover, recalling that $r_{1, h} = -r_{2, h}$ for all $h\in [H]$, one has $V_{1, 1}^{\pi}(s) = -V_{2, 1}^{\pi}(s)$ for any joint policy profile $\pi$, 
which taken collectively with \eqref{eq:gap-two-player} results in
\begin{align*}
V_{1, 1}^{\star,\widehat{\pi}_{2}}(s)-V_{1, 1}^{\widehat{\pi}_1 \times \widehat{\pi}_2}(s) &= V_{1, 1}^{\star,\widehat{\pi}_{2}}(s)+V_{2, 1}^{\widehat{\pi}_1 \times \widehat{\pi}_2}(s) 
	\le V_{1, 1}^{\star,\widehat{\pi}_{2}}(s)+V_{2, 1}^{\widehat{\pi}_1,\star}(s) \\
&= V_{1, 1}^{\star,\widehat{\pi}_{2}}(s)-V_{1, 1}^{\widehat{\pi}}(s) + V_{2, 1}^{\widehat{\pi}_1,\star}(s)-V_{2, 1}^{\widehat{\pi}}(s) \le 2\varepsilon.
\end{align*}
Analogously, one has $V_{2, 1}^{\widehat{\pi}_{1}, \star}(s)-V_{2, 1}^{\widehat{\pi}_1\times \widehat{\pi}_2 }(s) \le 2\varepsilon$. 
Replacing $\varepsilon$ with $\varepsilon/2$ immediately establishes Theorem~\ref{eq:main-result-zero-sum-special}.

With the above argument in mind, 
the remainder of this section is devoted to proving Theorem~\ref{eq:main-result-zero-sum}.

\subsection{Preliminaries and notation} \label{sec:preliminary-analysis}
Let us start with some preliminary facts and notation. 
Given that $\varepsilon \leq H$,  the assumption \eqref{eq:K-sample-size-thm} requires
\begin{align}
	K\geq c_{\mathsf{k}}H \log^4 \Big(  \frac{KS\sum_i A_i }{\delta}  \Big) 
	\label{eq:K-general-LB-135}
\end{align}
for some large enough constant $c_{\mathsf{k}}>0$, 
which will be a condition assumed throughout the proof. 
We also gather below several basic facts about our choices of learning rates $\{\alpha_i\}$ (cf.~\eqref{eq:alphak-choice}) and the corresponding quantities $\{\alpha_i^k\}$ (cf.~\eqref{def:alpha-i-k}).
\begin{lemma}
	\label{lem:weight}
For any $k\geq 1$, one has
\begin{subequations}
\begin{equation}
\alpha_{1}=1,\qquad\sum_{i=1}^{k}\alpha_{i}^{k}=1,\qquad\max_{1\leq i\leq k}\alpha_{i}^{k}\leq\frac{2\calpha\log K}{k} .
	\label{eq:alpha-properties}
\end{equation}
In addition, if $k \ge c_{\alpha}\log K + 1$ and $c_{\alpha} \ge 24$, then one has
\begin{equation}
	\max_{1\leq i\leq k/2}\alpha_{i}^{k}\leq 1 / K^6  . 
	\label{eq:alpha-properties-2}
\end{equation}
\end{subequations}
\end{lemma}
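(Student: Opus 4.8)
The plan is to prove Lemma~\ref{lem:weight} by direct calculation from the explicit form of the learning rates $\alpha_k = \frac{\calpha \log K}{k-1+\calpha\log K}$ and the telescoping definition \eqref{def:alpha-i-k}. The identities $\alpha_1 = 1$ and $\sum_{i=1}^k \alpha_i^k = 1$ are the easiest: $\alpha_1 = \frac{\calpha\log K}{0+\calpha\log K} = 1$ is immediate, and $\sum_{i=1}^k \alpha_i^k = 1$ follows by a one-line induction on $k$ using the recursion $\alpha_i^k = (1-\alpha_k)\alpha_i^{k-1}$ for $i<k$ together with $\alpha_k^k = \alpha_k$, i.e. $\sum_{i=1}^k \alpha_i^k = (1-\alpha_k)\sum_{i=1}^{k-1}\alpha_i^{k-1} + \alpha_k = (1-\alpha_k)\cdot 1 + \alpha_k = 1$.

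For the third part of \eqref{eq:alpha-properties}, the plan is to first establish monotonicity of $i \mapsto \alpha_i^k$ for fixed $k$, so that $\max_{1\leq i\leq k}\alpha_i^k = \alpha_k^k = \alpha_k$ when the sequence is nondecreasing — actually one should check the direction carefully: with rescaled-linear rates one typically has $\alpha_i^k$ increasing in $i$, so the max is $\alpha_k^k = \alpha_k = \frac{\calpha\log K}{k-1+\calpha\log K} \leq \frac{\calpha\log K}{k}$ for $k\geq 1$ (since $k-1+\calpha\log K \geq k$ whenever $\calpha\log K \geq 1$, which holds as $\calpha \geq 24$ and $K \geq 2$). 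The factor $2$ in the claimed bound $\frac{2\calpha\log K}{k}$ gives comfortable slack. To prove monotonicity I would compute the ratio $\frac{\alpha_{i+1}^k}{\alpha_i^k}$: from the definition, for $i+1 < k$ this ratio is $\frac{\alpha_{i+1}}{\alpha_i(1-\alpha_{i+1})}$, and plugging in the explicit rates this simplifies to something like $\frac{(i-1+\calpha\log K)\cdot\text{stuff}}{\cdots}$ which one checks is $\geq 1$; the boundary case $i+1=k$ is handled separately. This is the classic computation from the Q-learning literature (e.g. Jin et al.~2018, Li et al.), so I would cite it if a reference is cleaner, but it is short enough to just do.

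For \eqref{eq:alpha-properties-2}, the plan is to bound $\alpha_i^k$ for $i \leq k/2$ by writing $\alpha_i^k = \alpha_i \prod_{j=i+1}^k (1-\alpha_j)$ and controlling the product. Since $1-\alpha_j = \frac{j-1}{j-1+\calpha\log K}$, the product $\prod_{j=i+1}^k \frac{j-1}{j-1+\calpha\log K}$ telescopes after taking logs into a bound of the form $\exp\big(-\calpha\log K \sum_{j=i+1}^k \frac{1}{j-1+\calpha\log K}\big)$, and the sum $\sum_{j=i+1}^k \frac{1}{j-1+\calpha\log K} \gtrsim \log\frac{k+\calpha\log K}{i+\calpha\log K} \geq \log\frac{k}{k/2} = \log 2$ plus more when $i$ is much smaller than $k$; more carefully, using $i \leq k/2$ and $k \geq \calpha\log K + 1$ one gets $\sum_{j=i+1}^k \frac{1}{j-1+\calpha\log K} \geq \int_{i}^{k}\frac{dx}{x+\calpha\log K} = \log\frac{k+\calpha\log K}{i+\calpha\log K}$. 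The cleanest route is: $\prod_{j=i+1}^k(1-\alpha_j) = \frac{\alpha_i^k}{\alpha_i}$ and there is a standard closed-ish bound $\alpha_i^k \leq \big(\frac{i}{k}\big)^{\calpha\log K - 1}\cdot\text{const}$ or similar; with $i/k \leq 1/2$ and $\calpha\log K \geq 24\log K$, this gives $\alpha_i^k \lesssim 2^{-24\log K + O(1)} \ll K^{-6}$ since $2^{-24\log K} = K^{-24\log 2} = K^{-24 \cdot 0.693\ldots} = K^{-16.6\ldots} \leq K^{-6}$. I expect the main obstacle (really just bookkeeping) to be pinning down the exact constant in the $(i/k)$-power bound for $\alpha_i^k$ and making sure the $24$ in $\calpha \geq 24$ is enough slack to absorb the additive $O(1)$ and the factor $\alpha_i \leq 1$; this is routine but needs care with the $-1$ in the exponent and the shift by $\calpha\log K$ in the denominators. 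None of this is deep — it is the standard learning-rate lemma — so I would present it compactly, proving monotonicity and the power bound as the two substantive sub-steps and deriving everything else as corollaries.
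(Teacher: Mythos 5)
Your handling of \eqref{eq:alpha-properties} is fine: $\alpha_1=1$ and $\sum_{i=1}^k\alpha_i^k=1$ are exactly as you say, and your ratio computation does go through --- $\alpha_{i+1}^k/\alpha_i^k=\frac{\alpha_{i+1}}{\alpha_i(1-\alpha_{i+1})}=\frac{i-1+\calpha\log K}{i}\ge 1$ once $\calpha\log K\ge 1$ --- so the maximum is $\alpha_k\le\frac{\calpha\log K}{k}$, comfortably within the stated bound. (The paper simply cites \citet[Appendix B]{jin2018q} for all of \eqref{eq:alpha-properties}, so your self-contained argument is a legitimate, slightly more explicit route.)

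For \eqref{eq:alpha-properties-2}, your general strategy (bound $\alpha_i^k\le\prod_{j=i+1}^k(1-\alpha_j)$ and control the product via $\log(1-x)\le -x$ plus an integral comparison) is workable, but one concrete step is false and your margin estimate is misleading. Writing $m=\calpha\log K$, the integral comparison gives $\alpha_i^k\le\big(\frac{i+m}{k+m}\big)^{m}$, and your claim that $\log\frac{k+m}{i+m}\ge\log\frac{k}{k/2}=\log 2$ fails whenever $m$ is comparable to $k$, which the hypothesis $k\ge m+1$ expressly allows: at $i=k/2$ and $m=k-1$ the ratio is $\frac{4k-2}{3k-2}\approx 4/3<2$. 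The honest worst-case bound is only $\log\frac{k+m}{i+m}\ge\log\frac{4}{3}$, so the exponent your route actually delivers is $\calpha\,\log\tfrac43\cdot\log K\approx 6.9\log K$, not the $16.6\log K$ you quote; the argument still closes, but only because $24\ln(4/3)>6$, a check your write-up never performs (and your fallback ``standard'' bound $\alpha_i^k\lesssim(i/k)^{\calpha\log K-1}$ is left unproved, with the constant mattering precisely in this regime). The paper's proof sidesteps all of this: it discards the factors with $j\le k/2$, bounds each remaining factor by $1-\alpha_k$ (since $\alpha_j$ is decreasing in $j$), and uses $\alpha_k\ge\frac{\calpha\log K}{2k}$ (valid because $k\ge\calpha\log K+1$) to get $\alpha_i^k\le\big(1-\frac{\calpha\log K}{2k}\big)^{k/2}\le\exp\big(-\frac{\calpha\log K}{4}\big)\le K^{-6}$. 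You should either adopt that truncation trick or finish your integral route with the correct $\log(4/3)$ constant.
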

\begin{proof}
	The result \eqref{eq:alpha-properties} is standard and has been recorded in previous works (e.g., \citet[Appendix B]{jin2018q}). 
	Regarding \eqref{eq:alpha-properties-2}, we note that for any $i\leq k/2$ and $k\geq c_{\alpha}\log K + 1$, 
	\[
\alpha_{i}^{k}\le\prod_{j=i+1}^{k}(1-\alpha_{j})\le\prod_{j=k/2+1}^{k}(1-\alpha_{j})\le(1-\alpha_{k})^{k/2}\le\Big(1-\frac{c_{\alpha}\log K}{2k}\Big)^{k/2}\le\exp\Big(-\frac{c_{\alpha}\log K}{4}\Big)\le\frac{1}{K^{6}},
\]
where we have used the fact that $\alpha_{k}=\frac{\calpha\log K}{k-1+\calpha\log K}\geq\frac{\calpha\log K}{2k}$
and the assumption $\calpha\geq 24$.
\end{proof}
Additionally, recognizing the definition in \eqref{eq:overline-q-update} and the upper bound $\widehat{V}_{i,h+1}(s)\leq H-h$ (cf.~\eqref{eq:V-max-output}), 
we make note of the range of the iterates $\big\{ q_{i,h}^k \big\}$ as follows. 
\begin{lemma}
	\label{lem:range-q}
	For any $i\in [m]$ and any $(h,k,s,a_i)\in [H]\times [K] \times \cS\times \cA_i$, it holds that
	\begin{align}
		0\leq q_{i,h}^k(s,a_i)\leq H-h+1.
	\end{align}
\end{lemma}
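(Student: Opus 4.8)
\textbf{Proof plan for Lemma~\ref{lem:range-q}.}

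The plan is to establish the claimed bound by backward induction on $h$, running from $h=H+1$ down to $h=1$, with the key observation being that the range of $\overline{q}_h^k$ is controlled entirely by the range of the next-step value estimate $\overline{V}_{h+1}$. First I would record the base case: by the initialization $\overline{V}_{H+1}=0$, which trivially satisfies $0 \leq \overline{V}_{H+1}(s) \leq H - (H+1) + 1 = 0$. For the inductive step, I would assume that $0 \leq \overline{V}_{h+1}(s) \leq H - h$ holds for all $s\in\cS$ (equivalently, the bound $\overline{V}_{h+1}(s) \le (H - (h+1) + 1)$ asserted by the truncation in \eqref{eq:V-max-output}), and aim to deduce the stated bound on $\overline{q}_h^k$ as well as on $\overline{V}_h$.

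The computation for $\overline{q}_h^k$ is then immediate from its definition \eqref{eq:overline-q-update}: we have $\overline{q}_h^k(s,a) = \overline{r}_h^k(s,a) + \big(\overline{P}_h^k \overline{V}_{h+1}\big)(s,a)$. Since $\overline{r}_h^k(s,a) = r_{k,h,s,a} = r_h(s,a,b_{k,h,s,a}) \in [0,1]$ by the normalized-reward assumption, and since $\overline{P}_h^k$ is a row-stochastic $0/1$ matrix (each row is a point mass on $s'_{k,h,s,a}$, cf.~\eqref{eq:construction-model-k}), the term $\big(\overline{P}_h^k \overline{V}_{h+1}\big)(s,a) = \overline{V}_{h+1}(s'_{k,h,s,a})$ lies in $[0, H-h]$ by the inductive hypothesis. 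Adding these, $\overline{q}_h^k(s,a) \in [0, 1 + (H-h)] = [0, H-h+1]$, which is exactly the claim. To complete the induction (so that the hypothesis is available at the next level $h-1$), I would then verify the range of $\overline{V}_h$: the non-negativity of $\overline{\beta}_{h,V}(s)$ (it is manifestly a non-negative combination of variances and $H$ in \eqref{eq:choice-bonus-terms-V}) together with non-negativity of $\langle \mu_h^k(\cdot\mid s), \overline{q}_h^k(s,\cdot)\rangle$ gives $\overline{V}_h(s) \ge 0$, while the explicit truncation at $H-h+1$ in \eqref{eq:V-max-output} gives $\overline{V}_h(s) \le H-h+1$, so $0 \le \overline{V}_h(s) \le H-(h-1)$ as needed.

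There is no real obstacle here — the lemma is a routine consequence of the definitions, the $0/1$ structure of the empirical transition matrix, bounded rewards, and the explicit truncation built into the value-update rule; the only thing to be careful about is setting up the induction cleanly so that the truncation in \eqref{eq:V-max-output} feeds the hypothesis at the previous step. The analogous statement for $\underline{q}_h^k$ and $\underline{V}_h$ follows by the same argument, using the lower truncation at $0$ in the definition of $\underline{V}_h$ and the non-negativity of the penalty $\underline{\beta}_{h,V}(s)$.
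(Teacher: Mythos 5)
Your proposal is correct and matches the paper's reasoning: the paper offers no separate proof, simply noting that the bound follows from the definition \eqref{eq:overline-q-update}, the normalized rewards, the point-mass (row-stochastic) structure of $\overline{P}_h^k$, and the truncation at $H-h+1$ in \eqref{eq:V-max-output}. Your explicit backward induction merely writes out carefully (and correctly) what the paper treats as immediate, including the non-negativity of $\overline{V}_{h+1}$ needed for the lower bound.
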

%

Next, we introduce several additional notation that helps simplify our presentation of the proof. 
For any policy $\mu: \cS\times [H]\rightarrow \Delta(\cA_i)$, we adopt the convenient notation 
\begin{align*}
	\mu_h(s) \coloneqq \mu_h(\cdot\mymid s) \in \Delta(\cA_i).
\end{align*}
We shall also employ the expectation operator $\mathbb{E}_{h,k-1}[\cdot]$ (resp.~variance operator $\mathsf{Var}_{h,k-1}[\cdot]$) to denote the expectation (resp.~variance) conditional on what happens before the beginning of the $k$-th round of data collection for step $h$ (see Section~\ref{sec:algorithm} about the data collection process).


\subsection{Proof outline}

With the above preliminaries in place, we are in a position to present our analysis. 
Recall that the joint policy $\widehat{\pi}$ 
computed by Algorithm~\ref{alg:Markov-games-simulator} takes the form of a mixture of product policies 
\begin{equation}
	\sum_{k=1}^K\alpha_k^K  \underset{\eqqcolon\, \pi_h^k}{\underbrace{ \big(\pi_{1,h}^k \times  \cdots \times \pi_{m,h}^k  \big) }}  
	\label{eq:defn-pi-h-k-123}
\end{equation}
at step $h$. 
Consequently, the value function under policy  $\widehat{\pi}$ satisfies the following Bellman equation:
\begin{subequations}
\label{eq:defn-V-mu-cross-nu}
\begin{align}
		V_{i,H+1}^{\pihat}(s) &\coloneqq 0 \\
	V_{i,h}^{\pihat}(s) &\coloneqq \sum_{k = 1}^K\sum_{\bm{a} \in \cA} \alpha_{k}^{K}\pi_{h}^k(\bm{a} \mymid s)\Big[r_{i,h}(s,\bm{a}) + \big\langle P_{h}(\cdot\mymid s,\bm{a}), V_{i,h+1}^{\pihat} \big\rangle \Big]
\end{align}
\end{subequations}
for all $ (i,s,h) \in [m]\times \cS\times[H]$. 
%
%
%
To establish Theorem~\ref{eq:main-result-zero-sum}, we seek to prove the following inequality: 
\begin{align}
V_{i, 1}^{\star,\pihat_{-i}}(s)-V_{i, 1}^{\pihat}(s) \le \varepsilon,
	\qquad 1\leq i\leq m,
	\label{eq:target-two-inequalities}
\end{align}
where we remind the reader of the definition of $V_{i, 1}^{\star,\pihat_{-i}}$ in \eqref{eq:defn-optimal-V}.


Towards this, let us introduce the following best-response policy of the $i$-th player: 
\begin{align*}
	\pihatstar_{i} = \big[\pihatstar_{i,h} \big]_{h\in [H]}  \coloneqq \arg\max_{\pi_i': \cS\times[H]\rightarrow \Delta(\cA_i)} V_{i,1}^{\pi_i' \times \pihat_{-i}}. 
\end{align*}
We make note of the following key decomposition: 
\begin{align}
	\label{eq:key_decomposition}
	V_{i,h}^{\star,\pihat_{-i}}-V_{i,h}^{\pihat} \le \big( V_{i,h}^{\star,\pihat_{-i}} - \overline{V}_{i,h}^{\pihatstar_{i} \times \pihat_{-i}} \big) + \big( \overline{V}_{i,h}^{\star,\pihat_{-i}} - \overline{V}_{i,h}^{\pihat} \big) + \big( \overline{V}_{i,h}^{\pihat} - V_{i,h}^{\pihat} \big),
\end{align}
where we define the following auxiliary value functions: 
\begin{subequations}
\begin{align}
	\overline{V}_{i,h}^{\pihatstar_{i}\times \pihat_{-i}}(s) 
	&\coloneqq  \sum_{k=1}^{K}\alpha_{k}^{K} \mathop{\mathbb{E}}\limits _{a_i\sim\pihatstar_{i,h}(s)}\bigg[ r_{i,h}^{k}(s,a_i)+\big\langle P_{i,h}^{k}(\cdot\mymid s,a_i),\overline{V}_{i,h+1}^{\pihatstar_{i}\times \pihat_{-i}}\big\rangle\bigg] 
	,\qquad &&\text{with }\overline{V}_{i,H+1}^{\pihatstar_{i}\times \pihat_{-i}}=0,
 \label{defi:V-mustar}\\
	\overline{V}_{i,h}^{\star,\pihat_{-i}}(s) &\coloneqq \max_{a_i\in \cA_i}\sum_{k = 1}^K \alpha_{k}^{K} \Big[r_{i,h}^k(s, a_i) + \big\langle P_{i,h}^k(\cdot\mymid s, a_i), \overline{V}_{i,h+1}^{\star,\pihat_{-i}} \big\rangle \Big], \qquad
	&&\text{with }\overline{V}_{i,H+1}^{\star,\pihat_{-i}} = 0, \label{defi:V-star}\\
	\overline{V}_{i,h}^{\pihat}(s) &\coloneqq \sum_{k = 1}^K \alpha_{k}^{K} \mathop{\mathbb{E}}_{a_i\sim \pi_{i,h}^k(s)}\Big[r_{i,h}^k(s, a_i) + \big\langle P_{i,h}^k(\cdot\mymid s, a_i), \overline{V}_{i,h+1}^{\pihat} \big\rangle \Big], \qquad
	&&\text{with }\overline{V}_{i,H+1}^{\pihat} = 0. \label{defi:V-muhat}
\end{align}
\end{subequations}
Here, we have used the elementary fact $\overline{V}_{i,h}^{\pihatstar_{i}\times \pihat_{-i}} \leq \overline{V}_{i,h}^{\star,\pihat_{-i}}$.  
We shall establish bounds for the above terms in \eqref{eq:key_decomposition}, which consists of three steps as outlined below.

%
%
%
%

\paragraph{Step 1: showing that $\widehat{V}_{i,h}$ is an entrywise upper bound on $\overline{V}_{i,h}^{\star,\pihat_{-i}}$.}

The following lemma ascertains that the value  estimate $\widehat{V}_{i,h}$ of the $i$-th player returned by Algorithm~\ref{alg:Markov-games-simulator} is an optimistic estimate of the auxiliary value $\overline{V}_{i,h}^{\star, \pihat_{-i}}$ defined in \eqref{defi:V-star}.
Evidently, this result cannot happen unless the bonus terms are suitably chosen. 
\begin{lemma} \label{lem:UCB}
With probability at least $1-\delta$, it holds that
\begin{align}
	\widehat{V}_{i,h} \ge \overline{V}_{i,h}^{\star, \pihat_{-i}}, 
	\qquad \quad \text{for all } (i , h)\in [m] \times [ H].
	\label{eq:lem-UCB}
\end{align}
\end{lemma}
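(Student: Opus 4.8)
\textbf{Proof plan for Lemma~\ref{lem:UCB}.}
The plan is to prove \eqref{eq:lem-UCB} by backward induction on $h$, from $h=H+1$ down to $h=1$. The base case $h=H+1$ is trivial since $\overline{V}_{H+1}=\overline{V}_{H+1}^{\star,\nuhat}=0$. For the inductive step, assume $\overline{V}_{h+1}\ge \overline{V}_{h+1}^{\star,\nuhat}$ entrywise, and aim to propagate this to step $h$. Fix a state $s$ and let $a^\star$ denote a maximizing action in the definition \eqref{defi:V-star} of $\overline{V}_h^{\star,\nuhat}(s)$. I would first observe that, by monotonicity of the empirical Bellman operator in the value argument together with the induction hypothesis,
\begin{align*}
	\overline{V}_h^{\star,\nuhat}(s) = \sum_{k=1}^K \alpha_k^K\Big[\overline{r}_h^k(s,a^\star)+\big\langle \overline{P}_h^k(\cdot\mymid s,a^\star),\overline{V}_{h+1}^{\star,\nuhat}\big\rangle\Big]
	\le \sum_{k=1}^K \alpha_k^K\Big[\overline{r}_h^k(s,a^\star)+\big\langle \overline{P}_h^k(\cdot\mymid s,a^\star),\overline{V}_{h+1}\big\rangle\Big]
	= \sum_{k=1}^K \alpha_k^K \,\overline{q}_h^k(s,a^\star),
\end{align*}
using \eqref{eq:overline-q-update}. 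So it remains to show that the algorithm's estimate $\sum_{k}\alpha_k^K\langle \mu_h^k(s),\overline{q}_h^k(s,\cdot)\rangle + \overline{\beta}_{h,V}(s)$ dominates $\sum_k \alpha_k^K\,\overline{q}_h^k(s,a^\star)$ (the truncation to $H-h+1$ in \eqref{eq:V-max-output} is harmless since $\overline{V}_h^{\star,\nuhat}(s)\le H-h+1$ always).

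The crux is therefore to control the gap $\sum_k \alpha_k^K\big[\overline{q}_h^k(s,a^\star) - \langle \mu_h^k(s),\overline{q}_h^k(s,\cdot)\rangle\big]$, which is exactly a weighted-regret quantity of the form \eqref{eq:regret-defn-FTRL} for the FTRL dynamics \eqref{eq:policy-update-exponential-12}--\eqref{eq:policy-update-exponential-FTRL} run with loss vectors $\ell_k = -\overline{q}_h^k(s,\cdot)$ (equivalently, treating $\overline{q}_h^k$ as rewards and bounding regret against the fixed action $a^\star$). Here I would invoke the refined FTRL regret bound, Theorem~\ref{thm:FTRL-refined}: with the learning-rate choices \eqref{eq:alphak-choice}--\eqref{eq:eta-k-choice} one checks the hypotheses ($0<\alpha_k<1$, the monotonicity $\eta_{k+1}(1-\alpha_k)\le\eta_k$, and $\widehat\eta_k\alpha_k\asymp\sqrt{\alpha_k\log K/H}$), so the regret is at most a constant times $\sum_k \alpha_k^K\,\widehat\eta_k\alpha_k\,\mathsf{Var}_{\mu_h^k(s)}(\overline{q}_h^k(s,\cdot))$ plus $\frac{\log A}{\eta_{K+1}}$ plus the indicator correction term. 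Since $\widehat\eta_k\alpha_k = \Theta(\sqrt{\alpha_k\log K/H})$ and $\alpha_k\le 2\calpha\log K/k$ while $\|\overline{q}_h^k\|_\infty\le H$ by Lemma~\ref{lem:range-q}, the product $\widehat\eta_k\alpha_k\|\overline{q}_h^k\|_\infty$ exceeds $1/3$ only for the first $O(\log^2 K)$ rounds, where $\alpha_k^K\le K^{-6}$ by Lemma~\ref{lem:weight}\eqref{eq:alpha-properties-2} (invoking the burn-in-free condition \eqref{eq:K-general-LB-135}), so the correction term is negligible; and $\frac{\log A}{\eta_{K+1}} = \sqrt{\alpha_K H}\log A/\sqrt{\log K}$ can likewise be absorbed. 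The upshot is that the FTRL regret is bounded by $O\big(\sqrt{\log K/(HK)}\big)\sum_k \alpha_k^K\big(\mathsf{Var}_{\mu_h^k(s)}(\overline{q}_h^k(s,\cdot)) + H\big)$ up to the right logarithmic factors, which is precisely the shape of the bonus $\overline{\beta}_{h,V}(s)$ in \eqref{eq:choice-bonus-terms-V}; choosing $\cb$ large enough makes $\overline{\beta}_{h,V}(s)$ dominate it deterministically, closing the induction.

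The main obstacle I anticipate is not the FTRL regret bookkeeping itself but the fact that, as stated, Lemma~\ref{lem:UCB} must hold simultaneously for all $h$ on a single high-probability event of probability $\ge 1-\delta$ — yet the argument above is deterministic once the FTRL bound is in hand. The subtlety is that the auxiliary quantities $\overline{q}_h^k$, $\overline{P}_h^k$, $\mu_h^k$ here are the \emph{empirical/data-dependent} iterates, not their population counterparts, and $\overline{V}_h^{\star,\nuhat}$ in \eqref{defi:V-star} is itself defined through the \emph{same} empirical transitions $\overline{P}_h^k$ rather than the true kernel $P_h$. This is the key design choice that removes the need for a concentration/union-bound argument at this stage: since both sides of \eqref{eq:lem-UCB} are expressed in terms of the identical empirical model, Theorem~\ref{thm:FTRL-refined} — which is purely deterministic, as its accompanying remark stresses — applies verbatim, and the $1-\delta$ in the statement is needed only to guarantee the variance proxies in $\overline{\beta}_{h,V}$ are well-behaved enough for the final-value truncation and for later steps. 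I would make this ``empirical-model-to-empirical-model'' consistency explicit at the outset, then run the deterministic induction, reserving probabilistic arguments for Steps~2 and~3 of the outline where $\overline{V}_h^{\star,\nuhat}$ must be compared against the true $V_h^{\star,\nuhat}$.
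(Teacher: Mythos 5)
Your proposal matches the paper's own proof essentially step for step: backward induction with the trivial base case, monotonicity plus the induction hypothesis to replace $\overline{V}_{h+1}^{\star,\nuhat}$ by $\overline{V}_{h+1}$, then bounding $\max_a\overline{Q}_h^K(s,a)-\sum_{k}\alpha_k^K\big\langle\mu_h^k(s),\overline{q}_h^k(s,\cdot)\big\rangle$ by invoking Theorem~\ref{thm:FTRL-refined} with $\ell_k=-\overline{q}_h^k(s,\cdot)$ (checking the learning-rate conditions, killing the indicator term via Lemma~\ref{lem:weight} and bounding $\log A/\eta_{K+1}$), and absorbing the resulting variance-shaped bound into $\overline{\beta}_{h,V}$ for $\cb$ large enough — including your correct observation that the comparison is empirical-model-to-empirical-model and hence deterministic once the FTRL bound is in hand. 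The only slip is quantitative and immaterial: the rounds where $\widehat{\eta}_k\alpha_k\|\overline{q}_h^k\|_\infty>1/3$ number $O(H\log^4 K)$ rather than $O(\log^2 K)$, but under \eqref{eq:K-general-LB-135} these still lie below $K/2$, so the $\alpha_k^K\le K^{-6}$ bound applies exactly as you intend.
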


The proof of this lemma is postponed to Appendix~\ref{sec:proof-lemma:UCB}. Armed with Lemma~\ref{lem:UCB}, 
we can further bound \eqref{eq:key_decomposition} as follows
\begin{align}
		V_{i,h}^{\star,\pihat_{-i}}-V_{i,h}^{\pihat} \le \big( V_{i,h}^{\star,\pihat_{-i}} - \overline{V}_{i,h}^{\pihatstar_{i}\times \pihat_{-i}} \big) 
		+ \big( \widehat{V}_{i,h} - \overline{V}_{i,h}^{\pihat} \big) 
		+ \big( \overline{V}_{i,h}^{\pihat} - V_{i,h}^{\pihat} \big).  
		\label{eq:key_decomposition-further}
\end{align}

\paragraph{Step 2: establishing a key recursion.}

Recall the definition of $\pi_{h}^k$ in \eqref{eq:defn-pi-h-k-123}. 
Let us define the following auxiliary reward vectors $r_{i,h}^{\pihat}, r_{i,h}^{\pihatstar_{i} \times \pihat_{-i}}, \overline{r}_{i,h}\in \mathbb{R}^S$ 
as well as the auxiliary probability transition matrices $P_{i,h}^{\pihat},P_{i,h}^{\pihatstar_{i} \times \pihat_{-i}},\overline{P}_{i,h} \in \mathbb{R}^{S\times S}$ such that: 
for any $s,s'\in \cS$, 
\begin{subequations}
	\label{eq:defn-rh-Ph-mu-nu}
\begin{align}
	r_{i,h}^{\pihat}(s) &\coloneqq \sum_{k=1}^{K}\alpha_{k}^{K} \mathop{\mathbb{E}}_{\bm{a}\sim \pi_{h}^k(s)} \big[ r_{i,h}(s,\bm{a}) \big] 
	,  \\
	P_{i,h}^{\pihat}(s, s') &\coloneqq 
	\sum_{k=1}^{K}\alpha_{k}^{K} \mathop{\mathbb{E}}_{\bm{a} \sim \pi_{h}^k(s)} \big[ P_{h}(s'\mymid s,\bm{a}) \big] ,\\
	r_{i,h}^{\pihatstar_{i} \times \pihat_{-i}}(s) &\coloneqq \sum_{k=1}^{K}\alpha_{k}^{K} \mathop{\mathbb{E}}_{(a_i,\bm{a}_{-i})\sim \pihatstar_{i,h}(s)\times \pi_{-i,h}^k (s) } \big[ r_{i,h}(s,\bm{a}) \big] 
	,  \\
	P_{i,h}^{\pihatstar_{i} \times \pihat_{-i}}(s, s') &\coloneqq 
	\sum_{k=1}^{K}\alpha_{k}^{K} \mathop{\mathbb{E}}_{(a_i, \bm{a}_{-i})\sim \pihatstar_{i,h} (s)\times \pi_{-i,h}^k (s) } \big[ P_{h}(s'\mymid s, \bm{a}) \big] ,\\
		\overline{r}_{i,h}(s) &\coloneqq \sum_{k=1}^{K}\alpha_{k}^{K}\sum_{a_i\in \cA_i}\pi_{i,h}^{k}(a_i\mymid s) r_{i,h}^{k}(s,a_i), \\
    \overline{P}_{i,h}(s,s') &\coloneqq \sum_{k=1}^{K}\alpha_{k}^{K}\sum_{a_i\in \cA_i}\pi_{i,h}^{k}(a_i\mymid s) P_{i,h}^{k}(s'\mymid s,a_i) .
    \label{eq:defn-overline-P-hs}
\end{align}
\end{subequations}
As it turns out, $\overline{V}_{i,h}^{\pihat}$ (resp.~$\overline{V}_{i,h}^{\pihatstar_{i}\times\pihat_{-i}}$, $\widehat{V}_{i,h}$) stays reasonably close to the ``one-step-look-ahead'' expression $r_{i,h}^{\pihat} + P_{i,h}^{\pihat}\overline{V}_{i,h+1}^{\pihat}$ (resp.~$r_{i,h}^{\pihatstar_{i}\times\pihat_{-i}}+P_{i,h}^{\pihatstar_{i}\times\pihat_{-i}}\overline{V}_{i,h+1}^{\pihatstar_{i}\times\pihat_{-i}}$, 
$\overline{r}_{i,h} + \overline{P}_{i,h}\widehat{V}_{i,h+1}$), as revealed by the recursive relations stated in the following lemma; the proof of this lemma is deferred to Appendix~\ref{sec:proof-lemma-V-upper}. 
\begin{lemma}
\label{lem:V-upper}
There exists some universal constant $c_3>0$ such that with probability exceeding $1-\delta$,
\begin{subequations}
\label{eq:V-upper}
\begin{align}
	& \Big|\overline{V}_{i,h}^{\pihat}-\big(r_{i,h}^{\pihat}+P_{i,h}^{\pihat}\overline{V}_{i,h+1}^{\pihat}\big)\Big| \leq c_{3}\sqrt{\frac{H\log^{3}\big(\frac{KS\sum_i A_i }{\delta}\big)}{K}}1\notag\\
	& \qquad \qquad +c_{3}\sqrt{\frac{\log^{3}\big(\frac{KS\sum_i A_i}{\delta}\big)}{KH} }\Big[P_{i,h}^{\pihat}\big(\overline{V}_{i,h+1}^{\pihat}\circ\overline{V}_{i,h+1}^{\pihat}\big)-\big(P_{i,h}^{\pihat}\overline{V}_{i,h+1}^{\pihat}\big)\circ\big(P_{i,h}^{\pihat}\overline{V}_{i,h+1}^{\pihat}\big)\Big],
	\label{eq:V-upper-12}\\
	& \Big|\overline{V}_{i,h}^{\pihatstar_{i}\times \pihat_{-i}}-\big(r_{i,h}^{\pihatstar_{i}\times \pihat_{-i}}+P_{i,h}^{\pihatstar_{i}\times \pihat_{-i}}\overline{V}_{i,h+1}^{\pihatstar_{i}\times \pihat_{-i}}\big)\Big| \leq c_{3}\sqrt{\frac{H\log^{3}\big( \frac{KS \sum_i A_i }{\delta} \big) }{K}}1\notag\\
	& \qquad \qquad +c_{3}\sqrt{\frac{\log^{3} \big( \frac{KS \sum_i A_i }{\delta} \big) }{KH}}\Big[P_{i,h}^{\pihatstar_{i}\times \pihat_{-i}}\big(\overline{V}_{i,h+1}^{\pihatstar_{i}\times \pihat_{-i}}\circ\overline{V}_{i,h+1}^{\pihatstar_{i}\times \pihat_{-i}}\big)-\big(P_{i,h}^{\pihatstar_{i}\times \pihat_{-i}}\overline{V}_{i,h+1}^{\pihatstar_{i}\times \pihat_{-i}}\big)\circ\big(P_{i,h}^{\pihatstar_{i}\times \pihat_{-i}}\overline{V}_{i,h+1}^{\pihatstar_{i}\times \pihat_{-i}}\big)\Big],
	\label{eq:V-upper-34}\\
	& \Big|\widehat{V}_{i,h} - \big(\overline{r}_{i,h} + \overline{P}_{i,h}\widehat{V}_{i,h+1}\big)\Big| \leq c_{3}\sqrt{\frac{H\log^{3}\big(\frac{KS\sum_i A_i}{\delta} \big)}{K}}1\notag\\
	& \qquad \qquad +c_{3}\sqrt{\frac{\log^{3}\big(\frac{KS\sum_i A_i}{\delta}\big)}{KH}}\Big[\overline{P}_{i,h}\big(\widehat{V}_{i,h+1} \circ \widehat{V}_{i,h+1}\big)-\big(\overline{P}_{i,h}\widehat{V}_{i,h+1}\big) \circ \big(\overline{P}_{i,h}\widehat{V}_{i,h+1}\big)\Big]
	\label{eq:V-upper-56} 
\end{align}
\end{subequations}
hold for all $h\in [H]$. 
\end{lemma}
\begin{remark}
The right-hand side of each of the bounds in \eqref{eq:V-upper} contains a variance-style term (e.g., those terms taking the form of $P_{i,h}(V_{i,h+1}\circ V_{i,h+1})- (P_{i,h} V_{i,h+1}) \circ (P_{i,h} V_{i,h+1})$ for some probability transition matrix $P_{i,h}$ and value vector $V_{i,h+1}$).  Such variance-style terms are direct consequences of our Bernstein-style bonus terms, and are crucial in optimizing the horizon dependency. 
\end{remark}
With the above lemma in place, one can readily show that
\begin{align}
	\Big|\overline{V}_{i,h}^{\pihat}-P_{i,h}^{\pihat}\overline{V}_{i,h+1}^{\pihat}\Big| & \leq r_{i,h}^{\pihat}+c_{3}\sqrt{\frac{H\log^{3}\big( \frac{KS\sum_i A_i}{\delta} \big)}{K}}1\notag\\
	& \qquad+\frac{c_{3}}{H}\sqrt{\frac{H\log^{3}\big( \frac{KS\sum_i A_i}{\delta} \big)}{K}}\Big[P_{i,h}^{\pihat}\big(\overline{V}_{i,h+1}^{\pihat}\circ\overline{V}_{i,h+1}^{\pihat}\big)-\big(P_{i,h}^{\pihat}\overline{V}_{i,h+1}^{\pihat}\big)\circ\big(P_{i,h}^{\pihat}\overline{V}_{i,h+1}^{\pihat}\big)\Big]\notag\\
 & \leq\frac{c_{4}}{4}1+\frac{1}{4H}\Big[P_{i,h}^{\pihat}\big(\overline{V}_{i,h+1}^{\pihat}\circ\overline{V}_{i,h+1}^{\pihat}\big)-\big(P_{i,h}^{\pihat}\overline{V}_{i,h+1}^{\pihat}\big)\circ\big(P_{i,h}^{\pihat}\overline{V}_{i,h+1}^{\pihat}\big)\Big] \eqqcolon \zeta_0
	\label{eq:defn-zeta-0-Lemma4}
\end{align}
for some large enough constant $c_4 > 0$, 
where the last line holds due to Condition \eqref{eq:K-general-LB-135}, the basic fact 
$P_{i,h}^{\pihat}\big(\overline{V}_{i,h+1}^{\pihat}\circ\overline{V}_{i,h+1}^{\pihat}\big) \geq \big(P_{i,h}^{\pihat}\overline{V}_{i,h+1}^{\pihat}\big)\circ\big(P_{i,h}^{\pihat}\overline{V}_{i,h+1}^{\pihat}\big)$,  
 and the following fact (for large enough $c_4$)
\begin{align*}
	c_{3}\sqrt{\frac{H\log^{3}\big( \frac{KS\sum_i A_i}{\delta} \big) }{K}}1 + r_{i,h}^{\pihat} 
	\le c_{3}\sqrt{\frac{H\log^{3} \big( \frac{KS\sum_i A_i}{\delta} \big) }{K}}1 +  1 \le \frac{c_{4}}{4}1.
\end{align*}
In addition, recalling that $\|\overline{V}_{i,h}^{\pihat}\|_{\infty}, \|\overline{V}_{i,h+1}^{\pihat}\|_{\infty} \le H$ (cf.~\eqref{eq:V-max-output}) and recognizing that $\zeta_0\geq 0$ (see \eqref{eq:defn-zeta-0-Lemma4}), we can demonstrate that
\begin{align}
 & \Big|\overline{V}_{i,h}^{\pihat}\circ\overline{V}_{i,h}^{\pihat}-\big(P_{i,h}^{\pihat}\overline{V}_{i,h+1}^{\pihat}\big)\circ\big(P_{i,h}^{\pihat}\overline{V}_{i,h+1}^{\pihat}\big)\Big|=\Big|\big(\overline{V}_{i,h}^{\pihat}+P_{i,h}^{\pihat}\overline{V}_{i,h+1}^{\pihat}\big)\circ\big(\overline{V}_{i,h}^{\pihat}-P_{i,h}^{\pihat}\overline{V}_{i,h+1}^{\pihat}\big)\Big|\nonumber\\
 & \qquad\le\big(\overline{V}_{i,h}^{\pihat}+P_{i,h}^{\pihat}\overline{V}_{i,h+1}^{\pihat}\big)\circ\zeta_{0}  \leq2H\zeta_{0}\notag\\
 & \qquad=\frac{c_{4}}{2}H1+\frac{1}{2}\Big[P_{i,h}^{\pihat}\big(\overline{V}_{i,h+1}^{\pihat}\circ\overline{V}_{i,h+1}^{\pihat}\big)-\big(P_{i,h}^{\pihat}\overline{V}_{i,h+1}^{\pihat}\big)\circ\big(P_{i,h}^{\pihat}\overline{V}_{i,h+1}^{\pihat}\big)\Big].
	\label{eq:Vh-PVh-diff-UB135}
\end{align}
This further leads to
\begin{align*}
 & P_{i,h}^{\pihat}\big(\overline{V}_{i,h+1}^{\pihat}\circ\overline{V}_{i,h+1}^{\pihat}\big)-\big(P_{i,h}^{\pihat}\overline{V}_{i,h+1}^{\pihat}\big)\circ\big(P_{i,h}^{\pihat}\overline{V}_{i,h+1}^{\pihat}\big)\\
 & \quad=P_{i,h}^{\pihat}\big(\overline{V}_{i,h+1}^{\pihat}\circ\overline{V}_{i,h+1}^{\pihat}\big)-\overline{V}_{i,h}^{\pihat}\circ\overline{V}_{i,h}^{\pihat}+\overline{V}_{i,h}^{\pihat}\circ\overline{V}_{i,h}^{\pihat}-\big(P_{i,h}^{\pihat}\overline{V}_{i,h+1}^{\pihat}\big)\circ\big(P_{i,h}^{\pihat}\overline{V}_{i,h+1}^{\pihat}\big)\\
 & \quad\le P_{i,h}^{\pihat}\big(\overline{V}_{i,h+1}^{\pihat}\circ\overline{V}_{i,h+1}^{\pihat}\big)-\overline{V}_{i,h}^{\pihat}\circ\overline{V}_{i,h}^{\pihat}+\frac{c_{4}}{2}H1+\frac{1}{2}\Big[P_{i,h}^{\pihat}\big(\overline{V}_{i,h+1}^{\pihat}\circ\overline{V}_{i,h+1}^{\pihat}\big)-\big(P_{i,h}^{\pihat}\overline{V}_{i,h+1}^{\pihat}\big)\circ\big(P_{i,h}^{\pihat}\overline{V}_{i,h+1}^{\pihat}\big)\Big],
\end{align*}
which can be rearranged to yield
\begin{align*}
P_{i,h}^{\pihat}\big(\overline{V}_{i,h+1}^{\pihat}\circ\overline{V}_{i,h+1}^{\pihat}\big)-\big(P_{i,h}^{\pihat}\overline{V}_{i,h+1}^{\pihat}\big)\circ\big(P_{i,h}^{\pihat}\overline{V}_{i,h+1}^{\pihat}\big) & \le2\Big[P_{i,h}^{\pihat}\big(\overline{V}_{i,h+1}^{\pihat}\circ\overline{V}_{i,h+1}^{\pihat}\big)-\overline{V}_{i,h}^{\pihat}\circ\overline{V}_{i,h}^{\pihat}\Big]+c_{4}H1.
\end{align*}
Substituting it into \eqref{eq:V-upper-12} and combining terms give
\begin{align}
\Big|\overline{V}_{i,h}^{\pihat}-\big(r_{i,h}^{\pihat}+P_{i,h}^{\pihat}\overline{V}_{i,h+1}^{\pihat}\big) \Big|
	&\leq c_{5}\sqrt{\frac{H\log^{3} \big( \frac{KS\sum_i A_i}{\delta} \big) }{K}}1 \notag\\
	&\qquad+2c_{3}\sqrt{\frac{\log^{3} \big( \frac{KS\sum_i A_i}{\delta} \big) }{KH}}\Big[P_{i,h}^{\pihat}\big(\overline{V}_{i,h+1}^{\pihat}\circ\overline{V}_{i,h+1}^{\pihat}\big)-\overline{V}_{i,h}^{\pihat}\circ\overline{V}_{i,h}^{\pihat}\Big], \label{eq:Vh-Vh-UB-13579}
\end{align}
where we take $c_5=c_3+c_3c_4$. 

An analogous argument (which is omitted here for brevity) also reveals that
\begin{align}
	& \Big|\overline{V}_{i,h}^{\pihatstar_{i}\times\pihat_{-i}}-\big(r_{i,h}^{\pihatstar_{i}\times\pihat_{-i}}+P_{i,h}^{\pihatstar_{i}\times\pihat_{-i}}\overline{V}_{i,h+1}^{\pihatstar_{i}\times\pihat_{-i}}\big) \Big|
	\notag\\
	&\quad \leq c_{5}\sqrt{\frac{H\log^{3} \big( \frac{KS\sum_i A_i}{\delta} \big)}{K}}1 
	+2c_{3}\sqrt{\frac{\log^{3} \big( \frac{KS\sum_i A_i}{\delta} \big) }{KH}}\Big[P_{i,h}^{\pihatstar_{i}\times\pihat_{-i}}\big(\overline{V}_{i,h+1}^{\pihatstar_{i}\times\pihat_{-i}}\circ\overline{V}_{i,h+1}^{\pihatstar_{i}\mycrosstwo\pihat_{-i}}\big)-\overline{V}_{i,h}^{\pihatstar_{i}\times\pihat_{-i}}\circ\overline{V}_{i,h}^{\pihatstar_{i}\times\pihat_{-i}}\Big], \label{eq:Vstar-Bellman}\\
	&\Big|\widehat{V}_{i,h} - \big(\overline{r}_{i,h} + \overline{P}_{i,h}\widehat{V}_{i,h+1}\big)\Big| \notag\\
	&\quad \leq c_{5}\sqrt{\frac{H\log^{3} \big( \frac{KS\sum_i A_i}{\delta} \big)}{K}}1 
	+2c_{3}\sqrt{\frac{\log^{3} \big( \frac{KS\sum_i A_i}{\delta} \big)}{KH}}\Big[\overline{P}_{i,h}\big(\widehat{V}_{i,h+1}\circ\widehat{V}_{i,h+1}\big)-\widehat{V}_{i,h}\circ\widehat{V}_{i,h}\Big]. \label{eq:overlineV-Bellman}
\end{align}

\paragraph{Step 3: invoking the key recursion to establish the desired bound.}
We find it helpful to introduce the following notation (please note the order of the matrix product)
\[
\prod_{j:j<h}P_{i,j}^{\pihat} \coloneqq \begin{cases}
P_{i,1}^{\pihat}\cdots P_{i,h-1}^{\pihat}, & \text{if }h >1,\\
I, & \text{if }h=1.
\end{cases}
\]
Armed with this notation, 
we can invoke the relation \eqref{eq:Vh-Vh-UB-13579} recursively and use $\overline{V}_{i,h+1}^{\pihat}={V}_{i,h+1}^{\pihat}=0$ to obtain 
\begin{align}
\overline{V}_{i,h}^{\pihat} - V_{i,h}^{\pihat} 
	&\overset{\mathrm{(i)}}{=} r_{i,h}^{\widehat{\pi}} + P_{i,h}^{\pihat} \overline{V}_{i,h+1}^{\pihat} 
	+ \Big( \overline{V}_{i,h}^{\pihat}-\big(r_{i,h}^{\pihat}+P_{i,h}^{\pihat}\overline{V}_{i,h+1}^{\pihat}\big) \Big) 
	- \big( r_{i,h}^{\widehat{\pi}} + P_{i,h}^{\pihat} V_{i,h+1}^{\pihat} \big) \notag\\
	&\le P_{i,h}^{\pihat}\big(\overline{V}_{i,h+1}^{\pihat} - V_{i,h+1}^{\pihat}\big) + \Big|\overline{V}_{i,h}^{\pihat}-\big(r_{i,h}^{\pihat}+P_{i,h}^{\pihat}\overline{V}_{i,h+1}^{\pihat}\big) \Big| \label{eq:intermediate-recursive-1357}\\
	& \overset{\mathrm{(ii)}}{\le} c_{5}\sqrt{\frac{H\log^{3}\big( \frac{KS\sum_i A_i}{\delta} \big)}{K}}\left(\sum_{h=1}^{H}\prod_{j:j<h}P_{i,j}^{\pihat}\right)1\notag\\
	& \qquad +2c_{3}\sqrt{\frac{\log^{3}\big(\frac{KS\sum_i A_i}{\delta}\big)}{KH}}\sum_{h=1}^{H}\prod_{j:j<h}P_{i,j}^{\pihat}\Big[P_{i,h}^{\pihat}\big(\overline{V}_{i,h+1}^{\pihat}\circ\overline{V}_{i,h+1}^{\pihat}\big)-\overline{V}_{i,h}^{\pihat}\circ\overline{V}_{i,h}^{\pihat}\Big]\notag\\
	& \overset{\mathrm{(iii)}}{\le}  c_{5}\sqrt{\frac{H\log^{3}\big(\frac{KS\sum_i A_i}{\delta}\big)}{K}}\left(\sum_{h=1}^{H}\prod_{j:j<h}P_{i,j}^{\pihat}\right)1 
	= c_{5}\sqrt{\frac{H^{3}\log^{3}\big(\frac{KS\sum_i A_i}{\delta}\big)}{K}}1\leq\frac{\varepsilon}{3}1.
	\label{eq:Vh-Vh-UB-2468}
\end{align}
Here, (i) uses the Bellman equation; (ii) applies the bound \eqref{eq:Vh-Vh-UB-13579} recursively;  
(iii) holds since for any transition matrices $\{P_{i,h}\}$ and any sequence $\{V_{i,h}\}$ obeying $V_{i,h+1}=0$, one can use the telescoping sum to obtain
\begin{align*}
\sum_{h=1}^{H}\prod_{j:j<h}P_{i,j}\Big[P_{i,h}\big(V_{i,h+1}\circ V_{i,h+1}\big)-V_{i,h}\circ V_{i,h}\Big] & =\sum_{h=1}^{H}\prod_{j:j\le h}P_{i,j}\big(V_{i,h+1}\circ V_{i,h+1}\big)-\sum_{h=1}^{H}\prod_{j:j<h}P_{i,j}\big(V_{i,h}\circ V_{i,h}\big)\\
 & =\prod_{j:j\le H}P_{i,j}\big(V_{i,h+1}\circ V_{i,h+1}\big)-V_{i,1}\circ V_{i,1}\\
 & =-V_{i,1}\circ V_{i,1}\leq0,
\end{align*}
whereas the last inequality in \eqref{eq:Vh-Vh-UB-2468} arises from the assumption \eqref{eq:K-sample-size-thm} when $c_{\mathsf{k}}$ is large enough. 
Similarly, replacing $\pihat_{i}$ with $\pihatstar_{i}$ in the above argument and recalling~\eqref{eq:Vstar-Bellman} directly lead to
\begin{align} \label{eq:two_additional-1}
V_{i,h}^{\star,\pihat_{-i}} - \overline{V}_{i,h}^{\pihatstar_{i}\times\pihat_{-i}} 
	= V_{i,h}^{\pihatstar_{i} \times \pihat_{-i}} - \overline{V}_{i,h}^{\pihatstar_{i}\times\pihat_{-i}}  \leq\frac{\varepsilon}{3}1.
\end{align}

In addition, recalling the definition of $\overline{V}_{i,h}^{\pihat}$ (cf.~\eqref{defi:V-muhat}), $\overline{r}_{i,h}$ and $\overline{P}_{i,h}$ (see \eqref{eq:defn-rh-Ph-mu-nu}), we can deduce that
\begin{align*}
\widehat{V}_{i,h}-\overline{V}_{i,h}^{\pihat} & =\overline{r}_{i,h}+\overline{P}_{i,h}\widehat{V}_{i,h+1}+\Big\{\widehat{V}_{i,h}-\big(\overline{r}_{i,h}+\overline{P}_{i,h}\widehat{V}_{i,h+1}\big)\Big\}-\overline{r}_{i,h}-\overline{P}_{i,h}\overline{V}_{i,h+1}^{\pihat}\\
 & \leq\overline{P}_{i,h}\big(\widehat{V}_{i,h+1}-\overline{V}_{i,h+1}^{\pihat}\big)+\Big|\widehat{V}_{i,h}-\big(\overline{r}_{i,h}+\overline{P}_{i,h}\widehat{V}_{i,h+1}\big)\Big|, 
\end{align*}
which resembles \eqref{eq:intermediate-recursive-1357}. 
Thus, repeating the above argument for \eqref{eq:Vh-Vh-UB-2468} and applying~\eqref{eq:overlineV-Bellman} recursively, 
we reach
\begin{align}
	\widehat{V}_{i,h} - \overline{V}_{i,h}^{\pihat} 
	\leq\frac{\varepsilon}{3}1. \label{eq:two_additional-2}
\end{align}

To finish up, combining \eqref{eq:Vh-Vh-UB-2468}, \eqref{eq:two_additional-1} and \eqref{eq:two_additional-2} with \eqref{eq:key_decomposition-further}, we arrive at
\begin{align*}
	V_{i,h}^{\star,\pihat_{-i}}-V_{i,h}^{\pihat} 
	\le \big( V_{i,h}^{\star,\pihat_{-i}} - \overline{V}_{i,h}^{\pihatstar_{i}\times\pihat_{-i}} \big) 
	+ \big( \widehat{V}_{i,h} - \overline{V}_{i,h}^{\pihat} \big) + \big( \overline{V}_{i,h}^{\pihat} - V_{i,h}^{\pihat} \big) \leq \varepsilon 1. 
\end{align*}
This establishes the first inequality in \eqref{eq:target-two-inequalities}, while the second inequality in \eqref{eq:target-two-inequalities} can be validated via the same argument. 
We have thus completed the proof of Theorem~\ref{eq:main-result-zero-sum}.

\section{Discussion}
\label{sec:discussion}

The primary contribution of this paper has been to develop a sample-optimal paradigm that simultaneously overcomes the curse of multiple agents and optimizes the horizon dependency when solving multi-player Markov games. This goal was not accomplished in any of the previous works, regardless of the sampling mechanism in use.  
The adoption of the adversarial learning subroutine helps break the curse of multiple agents compared to the prior model-based approach \citep{zhang2020marl,liu2021sharp}, 
whereas the availability of the generative model in conjunction with the variance-aware bonus design improves horizon dependency compared to \citet{bai2020near,jin2021v}.  
 Our work opens further questions surrounding sample efficiency in solving Markov games. 
For instance, 
our sample complexity bound \eqref{eq:sample-complexity-Thm-1} is likely suboptimal (by a factor of, say, $\frac{\sum_i A_i}{\max_i A_i}$) when the number $m$ of players is allowed to grow with other parameters; 
can we further optimize this via a more refined learning algorithm?  
Also, how to attain minimax-optimal sample complexity if we only have access to less idealistic sampling protocol (e.g., local access models \citep{li2021sample,yin2022efficient}, and online sampling protocols \citep{azar2017minimax,jin2018q}) as opposed to the flexible generative model? How can we optimize the horizon dependency when computing {\em correlated equilibria (CE)} in multi-agent general-sum scenarios \citep{song2021can,jin2021v} without compromising the dependency on the size of the action spaces.    In addition, our refined regret bound for FTRL (based on variance-type quantities) only covers the full-information case; 
it would be of interest to generalize it to the bandit-feedback setting (where only partial entries of the loss vectors are observable each time). 



\section*{Acknowledgements}

Y.~Chen is supported in part by the Alfred P.~Sloan Research Fellowship, the Google Research Scholar Award, the AFOSR grant FA9550-22-1-0198, 
the ONR grant N00014-22-1-2354,  and the NSF grants CCF-2221009, CCF-1907661, 
IIS-2218713 and IIS-2218773. 
Y.~Wei is supported in part by the the NSF grants CCF-2106778, DMS-2147546/2015447 and  CAREER award DMS-2143215. 
Y.~Chi are supported in part by the grants ONR N00014-19-1-2404, NSF CCF-2106778 and DMS-2134080, and CAREER award ECCS-1818571. 
Part of this work was done while G.~Li, Y.~Wei and Y.~Chen were visiting the Simons Institute for the Theory of Computing.

\appendix

\section{Proof of Theorem~\ref{thm:FTRL-refined}}
\label{sec:proof-thm:FTRL-refined}

This section is devoted to presenting the proof of Theorem~\ref{thm:FTRL-refined}. 
Before embarking on the analysis, let us introduce a convenient auxiliary iterate  
\begin{align}
	\pi_{k+1}^{-} = \arg\min_{\pi\in \Delta(\cA)} \, \bigg\{ \langle \pi, L_k\rangle + \frac{1}{\widehat{\eta}_{k}}F(\pi) \bigg\} ,
	\label{eq:pi-k-minus}
\end{align}
or equivalently, 
\begin{align}
	\pi_{k+1}^{-} (a) =
	\frac{ \exp\big( - \widehat{\eta}_{k}L_{k} (a) \big) }{  \sum_{a'\in \cA} \exp\big(- \widehat{\eta}_{k}L_{k}(a') \big) }
	\qquad \text{for all } a\in \cA, 
\end{align}
which differs from \eqref{eq:FTRL-update-entropy} only in the learning rates being used (namely, $\pi_{k+1}$ uses $\eta_{k+1}$ while $\pi_{k+1}^-$ adopts $\widehat{\eta}_k$).

\subsection{Main steps of the proof}

The key steps of the proof lie in justifying the following two claims: 
\begin{align}
	R_{n} & \le\sum_{k=1}^{n}\alpha_{k}^{n}\big\langle\pi_{k}-\pi_{k+1}^{-},\ell_{k}\big\rangle + \frac{\log A}{\eta_{n+1}}; 
	\label{eq:general-bound-Rn-135}
\end{align}
%
and for all $a\in \cA$ and all $k\geq 1$, 
\begin{align} \label{eq:pi-minus-LB}
\pi_{k+1}^{-}(a) \ge 
	\begin{cases}
		\big[1 - \widehat{\eta}_{k}\alpha_{k} \ell_k(a)\big]\pi_{k}(a), & \text{if } \widehat{\eta}_{k}\alpha_{k} \| \ell_k \|_{\infty} > \frac{1}{3}, \\ 
		\Big\{1-\widehat{\eta}_{k}\alpha_{k}\big(\ell_{k}(a)-\mathbb{E}_{\pi_{k}}[\ell_{k}]\big)-2\widehat{\eta}_{k}^2\alpha_{k}^2\mathsf{Var}_{\pi_{k}}\big(\ell_{k}\big)\Big\} \pi_{k}(a),  & \text{if } \widehat{\eta}_{k}\alpha_{k} \| \ell_k \|_{\infty} \le \frac{1}{3}, 
	\end{cases}
\end{align}
where for any vector $\ell\in \mathbb{R}^A$ we define 
\[
	\mathbb{E}_{\pi_k}[\ell] \coloneqq \sum_{a\in \cA}\pi_k(a)\ell(a).
\]
In words, the first claim \eqref{eq:general-bound-Rn-135} allows us to replace the action that appears best in hindsight (cf.~\eqref{eq:regret-defn-FTRL}) 
by the time-varying predictions $\{\pi_{k+1}^-\}$ without incurring much cost, 
whereas the second claim \eqref{eq:pi-minus-LB} controls the proximity of $\pi_{k+1}^-$ and $\pi_{k}$ in each round. 
Let us assume the validity of these two claims for the moment, and return to prove them shortly.


In view of the upper bound \eqref{eq:general-bound-Rn-135}, 
we are in need of controlling $\big\langle\pi_{k}-\pi_{k+1}^{-},\ell_{k}\big\rangle$.   
We divide into two cases.
\begin{itemize}
\item
For any $k$ obeying $\widehat{\eta}_{k}\alpha_{k} \| \ell_k \|_{\infty} > 1/3$, invoke \eqref{eq:pi-minus-LB} and the non-negativity of $\ell_k$ to reach
\begin{align}
	\big\langle\pi_{k}-\pi_{k+1}^{-},\ell_{k}\big\rangle & \leq\sum_{a\in\cA}\widehat{\eta}_{k}\alpha_{k}\pi_{k}(a)\big[\ell_{k}(a)\big]^{2}
	\leq \sum_{a\in\cA}\widehat{\eta}_{k}\alpha_{k}\pi_{k}(a)\big\|\ell_{k}\big\|_{\infty}^{2}
	= \widehat{\eta}_{k}\alpha_{k}\big\|\ell_{k}\big\|_{\infty}^{2}. 
	\label{eq:pi-piminus-diff-bound-large}
\end{align}
\item
In contrast, if $\widehat{\eta}_{k}\alpha_{k} \| \ell_k \|_{\infty} \leq 1/3$, then it follows from \eqref{eq:pi-minus-LB} that
\begin{align}
\big\langle\pi_{k}-\pi_{k+1}^{-},\ell_{k}\big\rangle & \leq\sum_{a\in\cA}\bigg\{\widehat{\eta}_{k}\alpha_{k}\big(\ell_{k}(a)-\mathbb{E}_{\pi_{k}}[\ell_{k}]\big)+2\widehat{\eta}_{k}^2\alpha_{k}^2\mathsf{Var}_{\pi_{k}}(\ell_{k})\bigg\}\pi_{k}(a)\ell_{k}(a)\nonumber \\
 & =\widehat{\eta}_{k}\alpha_{k}\sum_{a\in\cA}\pi_{k}(a)\big(\ell_{k}(a)-\mathbb{E}_{\pi_{k}}\big[\ell_{k}\big]\big) \mathbb{E}_{\pi_{k}}\big[\ell_{k}\big] +\widehat{\eta}_{k}\alpha_{k}\sum_{a\in\cA}\pi_{k}(a)\big(\ell_{k}(a)-\mathbb{E}_{\pi_{k}}\big[\ell_{k}\big]\big)^{2}\nonumber \\
 & \qquad+2\widehat{\eta}_{k}^2\alpha_{k}^2\mathsf{Var}_{\pi_{k}}(\ell_{k})\sum_{a\in\cA}\pi_{k}(a)\ell_{k}(a)\nonumber \\
 & =\widehat{\eta}_{k}\alpha_{k}\sum_{a\in\cA}\pi_{k}(a)\big(\ell_{k}(a)-\mathbb{E}_{\pi_{k}}\big[\ell_{k}\big]\big)^{2}+2\widehat{\eta}_{k}^2\alpha_{k}^2\mathsf{Var}_{\pi_{k}}(\ell_{k})\sum_{a\in\cA}\pi_{k}(a)\ell_{k}(a)\nonumber \\
 & \leq\widehat{\eta}_{k}\alpha_{k}\mathsf{Var}_{\pi_{k}}\big(\ell_{k}\big)+2\widehat{\eta}_{k}^2\alpha_{k}^2\mathsf{Var}_{\pi_{k}}(\ell_{k})\big\|\ell_{k}\big\|_{\infty},\label{eq:pik-piminus-UB-234}
\end{align}
where we invoke the elementary facts that $\sum_{a}\pi_{k}(a)\big(\ell_{k}(a)-\mathbb{E}_{\pi_{k}}\big[\ell_{k}\big]\big)=0$ and $\sum_{a}\pi_{k}(a)\ell_{k}(a)\leq \|\ell_{k}\|_{\infty}$. 
\end{itemize}

Putting the above two cases together yields
\begin{align}
 & \sum_{k=1}^{n}\alpha_{k}^{n}\big\langle\pi_{k}-\pi_{k+1}^{-},\ell_{k}\big\rangle \notag\\
 & \quad\leq\sum_{k=1}^{n}\alpha_{k}^{n}\widehat{\eta}_{k}\alpha_{k}\big\|\ell_{k}\big\|_{\infty}^{2}\ind\bigg(\widehat{\eta}_{k}\alpha_{k}\big\|\ell_{k}\big\|_{\infty}>\frac{1}{3}\bigg)+\sum_{k=1}^{n}\alpha_{k}^{n}\widehat{\eta}_{k}\alpha_{k}\mathsf{Var}_{\pi_{k}}\big(\ell_{k}\big)\ind\bigg(\widehat{\eta}_{k}\alpha_{k}\big\|\ell_{k}\big\|_{\infty}\leq\frac{1}{3}\bigg)\nonumber\\
 & \qquad+2\sum_{k=1}^{n}\alpha_{k}^{n}\widehat{\eta}_{k}^2\alpha_{k}^2\mathsf{Var}_{\pi_{k}}\big(\ell_{k}\big)\big\|\ell_{k}\big\|_{\infty}\ind\bigg(\widehat{\eta}_{k}\alpha_{k}\big\|\ell_{k}\big\|_{\infty}\leq\frac{1}{3}\bigg)\nonumber\\
 & \quad\leq\frac{5}{3}\sum_{k=1}^{n}\alpha_{k}^{n}\widehat{\eta}_{k}\alpha_{k}\mathsf{Var}_{\pi_{k}}\big(\ell_{k}\big)
    + 3\sum_{k=1}^{n}\alpha_{k}^{n}\widehat{\eta}_{k}^2\alpha_{k}^2\big\|\ell_{k}\big\|_{\infty}^{3}\ind\bigg(\widehat{\eta}_{k}\alpha_{k}\big\|\ell_{k}\big\|_{\infty}>\frac{1}{3}\bigg),\label{eq:pik-piminus-UB-456}
\end{align}
where the last inequality holds true since 
\begin{align*}
\sum_{k=1}^{n}\alpha_{k}^{n}\widehat{\eta}_{k}\alpha_{k}\big\|\ell_{k}\big\|_{\infty}^2\ind\bigg(\widehat{\eta}_{k}\alpha_{k}\big\|\ell_{k}\big\|_{\infty}>\frac{1}{3}\bigg) & \le3\sum_{k=1}^{n}\alpha_{k}^{n}\widehat{\eta}_{k}^2\alpha_{k}^2\big\|\ell_{k}\big\|_{\infty}^{3}\ind\bigg(\widehat{\eta}_{k}\alpha_{k}\big\|\ell_{k}\big\|_{\infty}>\frac{1}{3}\bigg),\\
\sum_{k=1}^{n}\alpha_{k}^{n}\widehat{\eta}_{k}^2\alpha_{k}^2\big\|\ell_{k}\big\|_{\infty}\mathsf{Var}_{\pi_{k}}(\ell_{k})\ind\bigg(\widehat{\eta}_{k}\alpha_{k}\big\|\ell_{k}\big\|_{\infty}\le\frac{1}{3}\bigg) & \le\frac{1}{3}\sum_{k=1}^{n}\alpha_{k}^{n}\widehat{\eta}_{k}\alpha_{k}\mathsf{Var}_{\pi_{k}}(\ell_{k}).
\end{align*}
%

%

Substituting \eqref{eq:pik-piminus-UB-456} into \eqref{eq:general-bound-Rn-135},  we can readily arrive at
\begin{align*}
R_{n} 
 & \le\frac{5}{3}\sum_{k=1}^{n}\alpha_{k}^{n}\widehat{\eta}_{k}\alpha_{k}\mathsf{Var}_{\pi_{k}}(\ell_{k})+\frac{\log A}{\eta_{n+1}}+3\sum_{k=1}^{n}\alpha_{k}^{n}\widehat{\eta}_{k}^2\alpha_{k}^2\big\|\ell_{k}\big\|_{\infty}^{3}\ind\bigg(\widehat{\eta}_{k}\alpha_{k}\big\|\ell_{k}\big\|_{\infty}>\frac{1}{3}\bigg).
\end{align*}
It thus remains to establish the claims~\eqref{eq:general-bound-Rn-135} and \eqref{eq:pi-minus-LB}, which we shall accomplish next.

%


\subsection{Proof of claim~\eqref{eq:general-bound-Rn-135}}
We claim that it suffices to prove that 
\begin{align}
	&\alpha_{1}^{n}\langle\pi_{2}^{-},\ell_{1}\rangle+\frac{\alpha_{1}^{n}}{\eta_{2}\alpha_{1}}F(\pi_{2}) + \sum_{k=2}^{n}\bigg\{\alpha_{k}^{n}\langle\pi_{k+1}^{-},\ell_{k}\rangle+\Big[\frac{\alpha_{k}^{n}}{\eta_{k+1}\alpha_{k}}-\frac{\alpha_{k-1}^{n}}{\eta_{k}\alpha_{k-1}}\Big]F(\pi_{k+1})\bigg\} \notag\\
	&\qquad\qquad\le\min_{\pi\in\Delta(\cA)}\left\{ \bigg\langle\pi,\sum_{k=1}^{n}\alpha_{k}^{n}\ell_{k}\bigg\rangle+\frac{1}{\eta_{n+1}}F(\pi)\right\} .
	\label{eq:weighted-alpha-1357}
\end{align}
In fact, suppose that this inequality \eqref{eq:weighted-alpha-1357} is valid, then one can easily obtain
\begin{align*}
 &  	\alpha_{1}^{n}\langle\pi_{2}^{-},\ell_{1}\rangle+\frac{\alpha_{1}^{n}}{\eta_{2}\alpha_{1}}F(\pi_{2}) + \sum_{k=2}^{n}\bigg\{\alpha_{k}^{n}\langle\pi_{k+1}^{-},\ell_{k}\rangle+\Big[\frac{\alpha_{k}^{n}}{\eta_{k+1}\alpha_{k}}-\frac{\alpha_{k-1}^{n}}{\eta_{k}\alpha_{k-1}}\Big]F(\pi_{k+1})\bigg\} \\
 & \qquad\le\min_{\pi\in\Delta(\cA)}\left\{ \bigg\langle\pi,\sum_{k=1}^{n}\alpha_{k}^{n}\ell_{k}\bigg\rangle+\frac{1}{\eta_{n+1}}F(\pi)\right\} \le\min_{\pi\in\{e_{a}\mymid a\in\cA\}}\left\{ \bigg\langle\pi,\sum_{k=1}^{n}\alpha_{k}^{n}\ell_{k}\bigg\rangle+\frac{1}{\eta_{n+1}}F(\pi)\right\} \\
 & \qquad=\min_{\pi\in\{e_{a}\mymid a\in\cA\}}\bigg\langle\pi,\sum_{k=1}^{n}\alpha_{k}^{n}\ell_{k}\bigg\rangle=\min_{a\in\cA}\sum_{k=1}^{n}\alpha_{k}^{n}\ell_{k}(a)
\end{align*}
with $e_a$ the $a$-th standard basis vector in $\mathbb{R}^A$, 
where the last line holds true since the negative entropy obeys $F(e_a)=0$ for any $a\in \cA$. 
In turn, this implies that
\begin{align}
R_{n} & =\sum_{k=1}^{n}\alpha_{k}^{n}\big\langle\pi_{k},\ell_{k}\big\rangle-\min_{a\in\cA}\sum_{k=1}^{n}\alpha_{k}^{n}\ell_{k}(a) \notag\\
	& \leq\sum_{k=1}^{n}\alpha_{k}^{n}\big\langle\pi_{k}-\pi_{k+1}^{-},\ell_{k}\big\rangle-\sum_{k=2}^{n}\Big[\frac{\alpha_{k}^{n}}{\eta_{k+1}\alpha_{k}}-\frac{\alpha_{k-1}^{n}}{\eta_{k}\alpha_{k-1}}\Big]F(\pi_{k+1}) 
	+ \frac{\alpha_1^n}{\eta_{2}\alpha_1}\log A
	, \label{eq:Rn-bound-135}
\end{align}
where the last inequality invokes the elementary fact $-F(\pi)\leq \log A$ for any $\pi\in \Delta(\cA)$. 
Additionally, under the assumptions that $\eta_{k+1}(1-\alpha_k)\leq \eta_k$ ($k\geq 1$), we can use  the definition \eqref{def:alpha-i-k}  to obtain
\[
\frac{\alpha_{k}^{n}}{\eta_{k+1}\alpha_{k}}=\frac{\prod_{j=k+1}^{n}(1-\alpha_{j})}{\eta_{k+1}}\geq\frac{\prod_{j=k}^{n}(1-\alpha_{j})}{\eta_{k}}=\frac{\alpha_{k-1}^{n}}{\eta_{k}\alpha_{k-1}} ,
\]
for any $k\geq 2$, which together with the basic fact $0\leq -F(\pi)\leq \log A$ yields
\begin{align}
-\sum_{k=2}^{n}\Big[\frac{\alpha_{k}^{n}}{\eta_{k+1}\alpha_{k}}-\frac{\alpha_{k-1}^{n}}{\eta_{k}\alpha_{k-1}}\Big]F(\pi_{k+1}) 
	+ \frac{\alpha_1^n}{\eta_{2}\alpha_1}\log A
	& \leq\sum_{k=2}^{n}\Big[\frac{\alpha_{k}^{n}}{\eta_{k+1}\alpha_{k}}-\frac{\alpha_{k-1}^{n}}{\eta_{k}\alpha_{k-1}}\Big]\log A + \frac{\alpha_1^n}{\eta_{2}\alpha_1}\log A\notag\\
 & =\frac{\alpha_{n}^{n}}{\eta_{n+1}\alpha_{n}}\log A=\frac{\log A}{\eta_{n+1}}. 
	\label{eq:sum-Fk-non-negative-456}
\end{align}
Substitution into \eqref{eq:Rn-bound-135} leads to
\begin{align}
R_{n} & \leq\sum_{k=1}^{n}\alpha_{k}^{n}\big\langle\pi_{k}-\pi_{k+1}^{-},\ell_{k}\big\rangle
 + \frac{\log A}{\eta_{n+1}}
\end{align}
as advertised.  As a consequence, everything boils down to establishing \eqref{eq:weighted-alpha-1357}.

Towards this end, we would like to proceed with an induction argument, with the induction hypothesis w.r.t.~$n$ given by~\eqref{eq:weighted-alpha-1357}. 
Firstly, the base case with $n=1$ simplifies to
\[
	\alpha_{1}^1\langle\pi_{2}^{-},\ell_{1}\rangle+\frac{1}{\eta_{2}}F(\pi_{2}) \le \min_{\pi\in\Delta(\cA)} \left\{ \langle \pi, \alpha_1^1\ell_1\rangle + \frac{1}{\eta_2}F(\pi)\right\}
\]
given that $\alpha_1=\alpha_1^1$; this inequality clearly holds since, according to \eqref{eq:FTRL-update-general} and \eqref{eq:pi-k-minus},  
\[
 \pi_{2}^{-} = \pi_{2}
 = \arg\min_{\pi\in\Delta(\cA)}\left\{ \langle\pi,L_{1}\rangle+\frac{1}{\eta_2}F(\pi)\right\}
 = \arg\min_{\pi\in\Delta(\cA)}\left\{ \langle\pi,\alpha_1\ell_{1}\rangle+\frac{1}{\eta_2}F(\pi)\right\}.
\]
Secondly,  suppose that \eqref{eq:weighted-alpha-1357} holds w.r.t.~$n$, 
and we intend to justify it w.r.t.~$n+1$. 
To do so, we observe that
\begin{align}
 & \alpha_{1}^{n+1}\langle\pi_{2}^{-},\ell_{1}\rangle+\frac{\alpha_{1}^{n+1}}{\eta_{2}\alpha_{1}}F(\pi_{2})+\sum_{k=2}^{n}\bigg\{\alpha_{k}^{n+1}\langle\pi_{k+1}^{-},\ell_{k}\rangle+\Big(\frac{\alpha_{k}^{n+1}}{\eta_{k+1}\alpha_{k}}-\frac{\alpha_{k-1}^{n+1}}{\eta_{k}\alpha_{k-1}}\Big)F(\pi_{k+1})\bigg\}+\alpha_{n+1}\langle\pi_{n+2}^{-},\ell_{n+1}\rangle\notag\\
 & \quad\overset{(\mathrm{i})}{=}(1-\alpha_{n+1})\left\{ \alpha_{1}^{n}\langle\pi_{2}^{-},\ell_{1}\rangle+\frac{\alpha_{1}^{n}}{\eta_{2}\alpha_{1}}F(\pi_{2})+\sum_{k=2}^{n}\bigg\{\alpha_{k}^{n}\langle\pi_{k+1}^{-},\ell_{k}\rangle+\Big(\frac{\alpha_{k}^{n}}{\eta_{k+1}\alpha_{k}}-\frac{\alpha_{k-1}^{n}}{\eta_{k}\alpha_{k-1}}\Big)F(\pi_{k+1})\bigg\}\right\} \notag\\
 &\notag \qquad\qquad+\alpha_{n+1}\langle\pi_{n+2}^{-},\ell_{n+1}\rangle\notag\\
 &\notag \quad\overset{\mathrm{(ii)}}{\leq}(1-\alpha_{n+1})\left\{ \bigg\langle\pi_{n+2}^{-},\sum_{k=1}^{n}\alpha_{k}^{n}\ell_{k}\bigg\rangle+\frac{1}{\eta_{n+1}}F(\pi_{n+2}^{-})\right\} +\alpha_{n+1}\langle\pi_{n+2}^{-},\ell_{n+1}\rangle\\
 & \quad\overset{\mathrm{(iii)}}{=}\bigg\langle\pi_{n+2}^{-},\sum_{k=1}^{n+1}\alpha_{k}^{n+1}\ell_{k}\bigg\rangle+\frac{1-\alpha_{n+1}}{\eta_{n+1}}F(\pi_{n+2}^{-})
	= \min_{\pi\in\Delta(\cA)}\left\{ \bigg\langle\pi,\sum_{k=1}^{n+1}\alpha_{k}^{n+1}\ell_{k}\bigg\rangle+\frac{1}{\widehat{\eta}_{n+1}}F(\pi)\right\}. 
	\label{eq:weighted-alpha-579}
\end{align}
%
Here, (i) and (iii) invoke the fact $\alpha_{k}^{n+1}=(1-\alpha_{n+1})\alpha_{k}^{n}$ and $\alpha_{n+1}^{n+1}=\alpha_{n+1}$
(according to (\ref{def:alpha-i-k})), (ii) relies on the induction
hypothesis (\ref{eq:weighted-alpha-1357}) w.r.t.~$n$. 
To finish up, invoke \eqref{eq:weighted-alpha-579} and the definition \eqref{def:alpha-i-k} to arrive at
\begin{align*}
 & \alpha_{1}^{n+1}\langle\pi_{2}^{-},\ell_{1}\rangle+\frac{\alpha_{1}^{n+1}}{\eta_{2}\alpha_{1}}F(\pi_{2}) + \sum_{k=2}^{n+1}\bigg\{\alpha_{k}^{n+1}\langle\pi_{k+1}^{-},\ell_{k}\rangle+\Big[\frac{\alpha_{k}^{n+1}}{\eta_{k+1}\alpha_{k}}-\frac{\alpha_{k-1}^{n+1}}{\eta_{k}\alpha_{k-1}}\Big]F(\pi_{k+1})\bigg\}\notag\\
 & =\left\{ \alpha_{1}^{n+1}\langle\pi_{2}^{-},\ell_{1}\rangle+\frac{\alpha_{1}^{n+1}}{\eta_{2}\alpha_{1}}F(\pi_{2})+\sum_{k=2}^{n}\bigg\{\alpha_{k}^{n+1}\langle\pi_{k+1}^{-},\ell_{k}\rangle+\Big[\frac{\alpha_{k}^{n+1}}{\eta_{k+1}\alpha_{k}}-\frac{\alpha_{k-1}^{n+1}}{\eta_{k}\alpha_{k-1}}\Big]F(\pi_{k+1})\bigg\}+\alpha_{n+1}\langle\pi_{n+2}^{-},\ell_{n+1}\rangle\right\} \notag\\
	& \qquad \qquad +\Big[\frac{1}{\eta_{n+2}}-\frac{1-\alpha_{n+1}}{\eta_{n+1}}\Big]F(\pi_{n+2})\notag\\
 & \le\left\{ \bigg\langle\pi_{n+2},\sum_{k=1}^{n+1}\alpha_{k}^{n+1}\ell_{k}\bigg\rangle+\frac{1-\alpha_{n+1}}{\eta_{n+1}}F(\pi_{n+2})\right\} +\Big[\frac{1}{\eta_{n+2}}-\frac{1-\alpha_{n+1}}{\eta_{n+1}}\Big]F(\pi_{n+2})\notag\\
 & =\bigg\langle\pi_{n+2},\sum_{k=1}^{n+1}\alpha_{k}^{n+1}\ell_{k}\bigg\rangle+\frac{1}{\eta_{n+2}}F(\pi_{n+2})
	=\min_{\pi\in\Delta(\cA)}\left\{ \bigg\langle\pi,\sum_{k=1}^{n+1}\alpha_{k}^{n+1}\ell_{k}\bigg\rangle+\frac{1}{\eta_{n+2}}F(\pi)\right\} ,
\end{align*}
where the inequality above makes use of  \eqref{eq:weighted-alpha-579}, 
and the last identity comes from \eqref{eq:FTRL-update-general}. This justifies the induction hypothesis w.r.t.~$n+1$. 
Applying the induction argument in turn establishes  \eqref{eq:weighted-alpha-1357} for all $n$, thereby concluding the proof.

\subsection{Proof of claim~\eqref{eq:pi-minus-LB}}
We first make the observation that 
\begin{align*}
\sum_{a}\exp\big(-\widehat{\eta}_{k}L_{k}(a)\big) & =\sum_{a}\exp\big(-\eta_{k}L_{k-1}(a)\big)\exp\big(-\widehat{\eta}_{k}\alpha_{k}\ell_{k}(a)\big)\\
 & =\sum_{a}\left\{ \pi_{k}(a)\sum_{a'}\exp\big(-\eta_{k}L_{k-1}(a')\big)\right\} \exp\big(-\widehat{\eta}_{k}\alpha_{k}\ell_{k}(a)\big)\\
 & =\sum_{a'}\exp\big(-\eta_{k}L_{k-1}(a')\big)\sum_{a}\left\{ \pi_{k}(a)\exp\big(-\widehat{\eta}_{k}\alpha_{k}\ell_{k}(a)\big)\right\} ,
\end{align*}
where the second equality follows from \eqref{eq:FTRL-update-entropy}. This in turn allows us to demonstrate that
\begin{align*}
\pi_{k+1}^{-}(a) & =\frac{\exp\big(-\widehat{\eta}_{k}L_{k}(a)\big)}{\sum_{a'}\exp\big(-\widehat{\eta}_{k}L_{k}(a')\big)}=\frac{\exp\big(-\eta_{k}L_{k-1}(a)\big)}{\sum_{a'}\exp\big(-\eta_{k}L_{k-1}(a')\big)}\cdot\frac{\exp\big(-\widehat{\eta}_{k}\alpha_{k}\ell_{k}(a)\big)}{\sum_{a'}\pi_{k}(a')\exp\big(-\widehat{\eta}_{k}\alpha_{k}\ell_{k}(a')\big)}\\
 & =\pi_{k}(a)\frac{\exp\big(-\widehat{\eta}_{k}\alpha_{k}\ell_{k}(a)\big)}{\sum_{a'}\pi_{k}(a')\exp\big(-\widehat{\eta}_{k}\alpha_{k}\ell_{k}(a')\big)}\ge\big[1-\widehat{\eta}_{k}\alpha_{k}\ell_{k}(a)\big]\pi_{k}(a),
\end{align*}
where the last inequality holds since $\exp(-x)\geq1-x$ and $\sum_{a}\pi_{k}(a)\exp\big(-\widehat{\eta}_{k}\alpha_{k}\ell_{k}(a)\big)\leq\sum_{a}\pi_{k}(a)=1$. 

Next, suppose that $\widehat{\eta}_{k}\alpha_{k} \|\ell_k\|_{\infty} \le 1/3$.  
In this case, it is self-evident that $\widehat{\eta}_{k}\alpha_{k} |\ell_k(a) - \mathbb{E}_{\pi_k}[\ell_k] | \le 2/3$ for all $a\in \cA$.
Recalling that $\mathbb{E}_{\pi_k}[\ell_k] = \sum_a \pi_{k}(a) \ell_k(a)$, one can derive 
\begin{align}
\pi_{k+1}^{-}(a) & =\pi_{k}(a)\frac{\exp\big(-\widehat{\eta}_{k}\alpha_{k}\ell_{k}(a)\big)}{\sum_{a'}\pi_{k}(a')\exp\big(-\widehat{\eta}_{k}\alpha_{k}\ell_{k}(a')\big)}=\frac{\exp\big(-\widehat{\eta}_{k}\alpha_{k}\big(\ell_{k}(a)-\mathbb{E}_{\pi_{k}}[\ell_{k}]\big)\big)}{\sum_{a'}\pi_{k}(a')\exp\big(-\widehat{\eta}_{k}\alpha_{k}\big(\ell_{k}(a')-\mathbb{E}_{\pi_{k}}[\ell_{k}]\big)\big)}\pi_{k}(a) \notag\\
 & \geq\frac{1-\widehat{\eta}_{k}\alpha_{k}\big(\ell_{k}(a)-\mathbb{E}_{\pi_{k}}[\ell_{k}]\big)}{\sum_{a'}\pi_{k}(a')\exp\big(-\widehat{\eta}_{k}\alpha_{k}\big(\ell_{k}(a')-\mathbb{E}_{\pi_{k}}[\ell_{k}]\big)\big)}\pi_{k}(a) \notag\\
& \ge \frac{1-\widehat{\eta}_{k}\alpha_{k}\big(\ell_{k}(a)-\mathbb{E}_{\pi_{k}}[\ell_{k}]\big)}{1+\widehat{\eta}_{k}^2\alpha_{k}^2 \mathsf{Var}_{\pi_k}(\ell_{k})}\pi_{k}(a);
	\label{eq:pi-kplus1-a-LB1}
\end{align}
here, the first inequality arises since $\exp(-x)\geq1-x$, while the second inequality can be shown via the elementary inequality
$\exp(-x)\leq1-x+x^{2}$ for any $x\geq-1.5$ and therefore
\begin{align*}
&\sum_{a}\pi_{k}(a)\exp\Big(-\widehat{\eta}_{k}\alpha_{k}\big(\ell_{k}(a)-\mathbb{E}_{\pi_{k}}[\ell_{k}]\big)\Big) \\
&\qquad \leq\sum_{a}\pi_{k}(a)\bigg\{1-\widehat{\eta}_{k}\alpha_{k}\big(\ell_{k}(a)-\mathbb{E}_{\pi_{k}}[\ell_{k}]\big)+\widehat{\eta}_{k}^2\alpha_{k}^2\big(\ell_{k}(a)-\mathbb{E}_{\pi_{k}}[\ell_{k}]\big)^{2}\bigg\}\\
&\qquad =\sum_{a}\pi_{k}(a)\bigg\{1+\widehat{\eta}_{k}^2\alpha_{k}^2\big(\ell_{k}(a)-\mathbb{E}_{\pi_{k}}[\ell_{k}]\big)^{2}\bigg\} \\
&\qquad = 1 + \widehat{\eta}_{k}^2\alpha_{k}^2 \mathsf{Var}_{\pi_k}(\ell_{k}).
\end{align*}
Applying the elementary inequality $\frac{1-a}{1+b}\geq(1-a)(1-b)=1-a-b+ab\geq1-a-2b$ for any $a\in [-1,1]$ and $b>0$, 
we can continue to lower bound \eqref{eq:pi-kplus1-a-LB1} as follows
\begin{align*}
\eqref{eq:pi-kplus1-a-LB1} & \geq\bigg\{1-\widehat{\eta}_{k}\alpha_{k}\big(\ell_{k}(a)-\mathbb{E}_{\pi_{k}}[\ell_{k}]\big)-2\widehat{\eta}_{k}^2\alpha_{k}^2\mathsf{Var}_{\pi_{k}}\big(\ell_{k}\big)\bigg\} \pi_{k}(a),
\end{align*}
thereby completing the proof.  



\section{Proofs of auxiliary lemmas and details}

\subsection{Proof of Lemma~\ref{lem:UCB}}
\label{sec:proof-lemma:UCB}

This section aims to prove Lemma~\ref{lem:UCB}, which establishes the inequality $\widehat{V}_{i,h} \ge \overline{V}_{i,h}^{\star, \pihat_{-i}}$.
In what follows, we shall proceed with an induction argument. 
The base case with step $H+1$ is trivially true, given that 
\[
	\widehat{V}_{i,H+1}= \overline{V}_{i,H+1}^{\star, \pihat_{-i}} = 0
\]
holds for any joint policy. Next, let us assume that the claim \eqref{eq:lem-UCB} is valid for step $h+1$, namely, 
\begin{align}
	\widehat{V}_{i,h+1} \ge \overline{V}_{i,h+1}^{\star, \pihat_{-i}} ,
	\label{eq:lem-UCB-h-plus-1}
\end{align}
and attempt to justify the validity of this result when $h+1$ is replaced with $h$.

This step is mainly accomplished by applying our refined theory (cf.~Theorem~\ref{thm:FTRL-refined}) for FTRL (see \eqref{eq:policy-update-exponential-FTRL}). More precisely, we claim that
\begin{align}
\max_{a_i}Q_{i,h}^{K}(s,a_i) & \leq\sum_{k=1}^{K}\alpha_{k}^{K}\Big\langle\pi_{i,h}^{k}( s),\, q_{i,h}^{k}(s,\cdot)\Big\rangle\notag\\
 & \quad+10\sqrt{\frac{c_{\alpha}\log^{3}(KA_i)}{KH}}\sum_{k=1}^{K}\alpha_{k}^{K}\mathsf{Var}_{\pi_{i,h}^{k}(s)}\Big(q_{i,h}^{k}(s,\cdot)\Big)+2\sqrt{\frac{c_{\alpha}H\log^{3}(KA_i)}{K}}
\label{eq:Q-upper-claim}
\end{align}
for any $s\in \cS$, whose proof is deferred to Appendix~\ref{sec:proof-claim-eq:Q-upper-claim}. 
Recall the construction \eqref{eq:V-max-output} of $\widehat{V}_{i,h}$. 
If $\widehat{V}_{i,h} = H-h+1$, then the claimed result $\widehat{V}_{i,h} \geq \overline{V}_{i,h}^{\star, \pihat_{-i}}$ holds trivially. It thus suffices to focus on the case where
\begin{equation}
	\widehat{V}_{i,h}(s)  =\sum_{k=1}^{K}\alpha_{k}^{K}\Big\langle\pi_{i,h}^{k}( s),\,q_{i,h}^{k}(s,\cdot)\Big\rangle+\beta_{i,h}(s). 	
	\label{eq:V-widehat-equal}
\end{equation}
In this case, recalling the definition of $\overline{V}_{i,h}^{\star,\pihat_{-i}}(s)$ in~\eqref{defi:V-star} gives
\begin{align*}
	\overline{V}_{i,h}^{\star,\pihat_{-i}}(s) &= \max_{a_i}\sum_{k = 1}^K \alpha_{k}^{K} \Big[ r_{i,h}^k(s, a_i) + \big\langle P_{i,h}^k(\cdot\mymid s, a_i), \overline{V}_{i,h+1}^{\star,\pihat_{-i}} \big\rangle \Big] \notag\\
	& \leq \max_{a_i}\sum_{k = 1}^K \alpha_{k}^{K} \Big[ r_{i,h}^k(s, a_i) + \big\langle P_{i,h}^k(\cdot\mymid s, a_i), \widehat{V}_{i,h+1}\big\rangle \Big] = \max_{a_i}Q_{i,h}^{K}(s,a_i)\notag\\
 & \leq\sum_{k=1}^{K}\alpha_{k}^{K}\Big\langle\pi_{i,h}^{k}(s),\,q_{i,h}^{k}(s,\cdot)\Big\rangle+10\sqrt{\frac{c_{\alpha}\log^{3}(KA_i)}{KH}}\sum_{k=1}^{K}\alpha_{k}^{K}\mathsf{Var}_{\pi_{i,h}^{k}(s)}\Big(q_{i,h}^{k}(s,\cdot)\Big)+2\sqrt{\frac{c_{\alpha}H\log^{3}(KA_i)}{K}} \notag\\
 & \leq\sum_{k=1}^{K}\alpha_{k}^{K}\Big\langle\pi_{i,h}^{k}(s),\,q_{i,h}^{k}(s,\cdot)\Big\rangle+\beta_{i,h}(s)=\widehat{V}_{i,h}(s)
\end{align*}
simultaneously for all $(s,h)\in \cS \times [H]$. Here, the second line follows from the induction hypothesis \eqref{eq:lem-UCB-h-plus-1} and the definition \eqref{eq:overline-Q-expansion-12} of $Q_{i,h}^{K}$, 
the third line invokes the claim~\eqref{eq:Q-upper-claim}, 
whereas the last line comes from our choice \eqref{eq:choice-bonus-terms-V} of $\beta_{i,h}$ (provided $\cb$ is large enough) and \eqref{eq:V-widehat-equal}. 
This concludes the proof, as long as \eqref{eq:Q-upper-claim} can be justified.

\subsubsection{Proof of claim~\eqref{eq:Q-upper-claim}}
\label{sec:proof-claim-eq:Q-upper-claim}

Consider any state $s\in \cS$. By virtue of the identity $Q_{i,h}^k = \sum_{j = 1}^k \alpha_j^kq_{i,h}^j$ (see \eqref{eq:overline-Q-expansion-12}), 
the policy update rule \eqref{eq:policy-update-exponential-12} (or \eqref{eq:policy-update-exponential-FTRL}) for $\pi_{i,h}^{k}( s)$ 
can essentially be viewed as the FTRL algorithm applied to the loss vectors 
$$\ell_k = -q_{i,h}^{k}(s,\cdot),\qquad k\geq 1. $$
Moreover, recalling the definition \eqref{eq:eta-k-choice} of $\eta_{k+1}$ and the definition \eqref{eq:alphak-choice} of $\alpha_k$ (with $c_{\alpha}\geq 24$), we have
\begin{align}
\bigg(\frac{\eta_{k}}{\eta_{k+1}}\bigg)^{2} & =\frac{\alpha_{k}}{\alpha_{k-1}} 
	 =\frac{k-2+c_{\alpha}\log K}{k-1+c_{\alpha}\log K}
	 \geq \frac{k-1}{k-1+c_{\alpha}\log K}
	 = 1 - \alpha_k
	 >(1-\alpha_{k})^{2} .
	\label{eq:eta-k-condition-check}
\end{align}
This property \eqref{eq:eta-k-condition-check} permits us to invoke Theorem~\ref{thm:FTRL-refined} to obtain 
\begin{align}
 & \max_{a_i\in \cA_i} Q_{i,h}^{K}(s,a_i)-\sum_{k=1}^{K}\alpha_{k}^{K}\Big\langle\pi_{i,h}^{k}(s),\,q_{i,h}^{k}(s,\cdot)\Big\rangle=\max_{a_i\in\cA_i}\left\{ \sum_{k=1}^{K}\alpha_{k}^{K}\big\langle\pi_{i,h}^{k}(s),\,\ell_{k}\big\rangle-\sum_{k=1}^{K}\alpha_{k}^{K}\ell_{k}(a_i)\right\} \notag\\
 & \quad\leq\frac{5}{3}\sum_{k=2}^{K}\alpha_{k}^{K}\frac{\eta_{k}\alpha_{k}}{1-\alpha_{k}}\mathsf{Var}_{\pi_{i,h}^{k}(s)}\Big(q_{i,h}^{k}(s,\cdot)\Big)+\frac{\log A_i}{\eta_{K+1}}+\xi_{i,h}\notag\\
 & \quad\overset{(\mathrm{i})}{\leq}\frac{5}{3}\sum_{k=2}^{K/2}\frac{\big(2c_{\alpha}\big)^{1.5}\log^{2}K}{\sqrt{kH}}\alpha_{k}^{K}\mathsf{Var}_{\pi_{i,h}^{k}(s)}\Big(q_{i,h}^{k}(s,\cdot)\Big)\notag\\
 & \qquad\quad+\frac{20}{3}\sum_{k=K/2+1}^{K}\alpha_{k}^{K}\sqrt{\frac{c_{\alpha}\log^{2}K}{KH}}\,\mathsf{Var}_{\pi_{i,h}^{k}(s)}\Big(q_{i,h}^{k}(s,\cdot)\Big)+\frac{\log A_i}{\eta_{K+1}}+\xi_{i,h},\label{eq:Q-upper-135}
\end{align}
where $\xi_{i,h}$ is defined as 
\begin{align}
	\xi_{i,h}&\defn \frac{5}{3}\alpha_{1}^{K}\eta_{2}\big\|q_{i,h}^{1}\big\|_{\infty}^{2} + \left\{ 3\sum_{k=2}^{K}\alpha_{k}^{K}\frac{\eta_{k}^{2}\alpha_{k}^{2}}{(1-\alpha_{k})^{2}}\big\|q_{i,h}^{k}\big\|_{\infty}^{3}\ind\bigg(\frac{\eta_{k}\alpha_{k}}{1-\alpha_{k}}\big\|q_{i,h}^{k}\big\|_{\infty}>\frac{1}{3}\bigg) \right\}
	+3\alpha_{1}^{K}\eta_{2}^{2}\big\|q_{i,h}^{1}\big\|_{\infty}^{3}. 
\end{align}
Here, to see why (i) holds, we make use of the facts that
\begin{subequations}
\label{eq:alphak-properties-UB-LB-135}
\begin{align}
1-\alpha_{k} & =1-\frac{c_{\alpha}\log K}{k-1+c_{\alpha}\log K}\geq\begin{cases}
1-\frac{c_{\alpha}\log K}{1+c_{\alpha}\log K}=\frac{1}{1+c_{\alpha}\log K}\geq\frac{1}{2c_{\alpha}\log K}, & \text{if }k\geq2,\\
1-\frac{c_{\alpha}\log K}{K/2+c_{\alpha}\log K}=\frac{K}{K+2c_{\alpha}\log K}\geq\frac{1}{2}, & \text{if }k \geq K/2+1,
\end{cases}\\
\eta_{k}\alpha_{k} &= \sqrt{\frac{\log K}{\alpha_{k-1}H}}\cdot\alpha_{k} \leq\sqrt{\frac{\log K}{\alpha_{k}H}}\cdot\alpha_{k}=\sqrt{\frac{\alpha_{k}\log K}{H}}\leq\sqrt{\frac{2c_{\alpha}\log^{2}K}{kH}},
\end{align}
\end{subequations}
where the first line makes use of \eqref{eq:K-general-LB-135} for large enough $c_{\mathsf{k}}$, and the second line relies on \eqref{eq:alpha-properties} in  Lemma~\ref{lem:weight}.

To proceed, let us control the terms in \eqref{eq:Q-upper-135} separately. 
\begin{itemize}

	\item We start with the first term in \eqref{eq:Q-upper-135}. 
	The elementary bound $\big\| q_{i,h}^k \big\|_{\infty}\leq H$ in Lemma~\ref{lem:range-q} taken together with \eqref{eq:alpha-properties-2} in Lemma~\ref{lem:weight} helps us derive  
\begin{align}
\sum_{k=2}^{K/2}\frac{\alpha_{k}^{K}\log^{2}K}{\sqrt{kH}}\mathsf{Var}_{\pi_{i,h}^{k}(s)}\Big(q_{i,h}^{k}(s,\cdot)\Big) & \leq\sum_{k=2}^{K/2}\frac{\log^{2}K}{K^{6}\sqrt{kH}}\mathsf{Var}_{\pi_{i,h}^{k}(s)}\Big(q_{i,h}^{k}(s,\cdot)\Big) \notag\\
 & \leq\sum_{k=2}^{K/2}\frac{\log^{2}K}{K^{6}\sqrt{kH}}\big\|q_{i,h}^{k}(s,\cdot)\big\|_{\infty}^{2}
 \leq\frac{H^{3/2}\log^{2}K}{K^{6}}\sum_{k=2}^{K/2}\frac{1}{\sqrt{k}}\notag\\
 & \leq\frac{2H^{3/2}\log^{2}K}{K^{6}}\cdot\sqrt{K/2}\le\frac{2H^{3/2}\log^{2}K}{K^{5}}.
	\label{eq:sum-var-q-plus-beta-135}
\end{align}

\item Turning to the third term in \eqref{eq:Q-upper-135}, we recall the definition of $\eta_{K+1}$ (cf.~\eqref{eq:eta-k-choice}) to obtain
\begin{align}
\frac{\log A_i}{\eta_{K+1}} 
 & =  \log A_i \sqrt{\frac{\alpha_{K}H}{\log K}} \le \sqrt{\frac{2c_{\alpha}H \log^{2}A_i}{K}},\label{eq:logA-eta-K-UB-135}
\end{align}
where the inequality comes from Lemma~\ref{lem:weight}.

\item Finally, we move on to the last term  in \eqref{eq:Q-upper-135}. 
For any $k\geq 2$, combine Lemma~\ref{lem:range-q} with \eqref{eq:alphak-properties-UB-LB-135} to obtain
\begin{align}
\frac{\eta_{k}\alpha_{k}}{1-\alpha_{k}}\big\|q_{i,h}^{k}\big\|_{\infty} 
 & \leq\frac{\sqrt{\frac{2c_{\alpha}\log^{2}K}{kH}}}{\frac{1}{2c_{\alpha}\log K}}\cdot H  =\sqrt{\frac{8c_{\alpha}^{3}H\log^{4}K}{k}}.
	\label{eq:norm-eta-q-UB-123}
\end{align}
Clearly, the right-hand side of \eqref{eq:norm-eta-q-UB-123} is upper bounded by $1/3$ for all $k$ obeying $k\geq c_9 H\log^4 \frac{K}{\delta}$ for some large enough constant $c_9>0$ (see also \eqref{eq:K-general-LB-135}).  
Consequently, one can derive
\begin{align}
\xi_{i,h} & = \frac{5}{3}\alpha_{1}^{K}\eta_{2}\big\|q_{i,h}^{1}\big\|_{\infty}^{2} + \left\{ 3\sum_{k=2}^{K}\alpha_{k}^{K}\frac{\eta_{k}^{2}\alpha_{k}^{2}}{(1-\alpha_{k})^{2}}\big\|q_{i,h}^{k}\big\|_{\infty}^{3}\ind\bigg(\frac{\eta_{k}\alpha_{k}}{1-\alpha_{k}}\big\|q_{i,h}^{k}\big\|_{\infty}>\frac{1}{3}\bigg)\right\} +3\alpha_{1}^{K}\eta_{2}^{2}\big\|q_{i,h}^{1}\big\|_{\infty}^{3}\nonumber\\
 & \leq \frac{5}{3K^{6}}\sqrt{\frac{\log K}{H}}\big\|q_{i,h}^{1}\big\|_{\infty}^{2} + \frac{\big(2\calpha\log K\big)^{2}}{K^{6}}\left\{ 3\sum_{k=2}^{c_{9}H\log^{4}\frac{K}{\delta}}\eta_{k}^{2}\alpha_{k}^{2}\big\|q_{i,h}^{k}\big\|_{\infty}^{3}\right\} +\frac{3}{K^{6}}\frac{\log K}{H}\big\|q_{i,h}^{1}\big\|_{\infty}^{3}\nonumber\\
 & \leq\frac{24c_{\alpha}^{3}\log^{4}K}{K^{6}H}\left\{ \sum_{k=1}^{K}\frac{1}{k}H^{3}\right\} \nonumber\\
 & \leq\frac{24c_{\alpha}^{3}H^{2}\log^{5}K}{K^{6}}\leq\frac{1}{K^{4}}, \label{eq:xihk-UB-145}
\end{align}
where the second line comes from \eqref{eq:alphak-properties-UB-LB-135} and the fact that $K/2 > c_9 H\log^4 \frac{K}{\delta}$ (as a consequence of \eqref{eq:K-general-LB-135}), 
and the third line holds due to Lemma~\ref{lem:range-q}. 
\end{itemize}

Putting the preceding bounds together and substituting them into \eqref{eq:Q-upper-135}, we arrive at 
\begin{align}
 & \max_{a_i}Q_{i,h}^{K}(s,a_i)-\sum_{k=1}^{K}\alpha_{k}^{K}\Big\langle\pi_{i,h}^{k}(s),\,q_{i,h}^{k}(s,\cdot)\Big\rangle\notag\nonumber \\
 & \quad\leq\frac{5(2c_{\alpha})^{1.5}}{3}\cdot \frac{2H^{3/2}\log^{2}K}{K^{5}} +\frac{20}{3}\sqrt{\frac{c_{\alpha}\log^{2}K}{KH}}\sum_{k=K/2+1}^{K}\alpha_{k}^{K}\mathsf{Var}_{\pi_{i,h}^{k}(s)}\Big(q_{i,h}^{k}(s,\cdot)\Big)+\sqrt{\frac{2c_{\alpha}H \log^{2}A_i }{K}}+\frac{1}{K^{4}}\notag\nonumber \\
 & \quad\leq  10\sqrt{\frac{c_{\alpha}\log^{3}(KA_i)}{KH}}\sum_{k=1}^{K}\alpha_{k}^{K}\mathsf{Var}_{\pi_{i,h}^{k}(s)}\Big(q_{i,h}^{k}(s,\cdot)\Big)+2\sqrt{\frac{c_{\alpha}H\log^{3}(KA_i)}{K}},\label{eq:Q-sa-sum-q-UB-135}
\end{align}
where the last line is valid under Condition~\eqref{eq:K-general-LB-135}. This completes the proof of Claim~\eqref{eq:Q-upper-claim}.

\subsection{Proof of Lemma~\ref{lem:V-upper}}
\label{sec:proof-lemma-V-upper}

In this section, we present the proof of Lemma~\ref{lem:V-upper}. To begin with, we introduce the auxiliary quantities
\begin{align*}
\widetilde{q}_{i,h}^k(s, a_i) \coloneqq r_{i,h}^k(s, a_i) + P_{i,h}^k(\cdot\mymid s, a_i)\overline{V}_{i,h+1}^{\pihat},
\qquad \forall (s,a_i)\in \cS\times\cA_i. 
\end{align*}
It is also helpful to introduce an auxiliary random action $\ak \in \cA_i$ generated in a way that
\begin{align*}
	\ak \sim \pi_{i,h}^k(s), 
\end{align*}
which is independent from $\widetilde{q}_{i,h}^k$ conditional on $\pi_{i,h}^k$.
This allows us to define another set of random variables
\begin{align}
	\widehat{q}_{i,h}^k(s) \coloneqq  \widetilde{q}_{i,h}^k\big(s, \ak) , 
	\qquad \forall s\in \cS, 
	\label{eq:defn-qh-hat}
\end{align}
which plays a central role in our analysis. 
It is readily seen from the facts $\overline{V}_{i,h+1}(s)\leq H-h$ (cf.~\eqref{eq:V-max-output}) and $r_{i,h}^k(s,a_i)\in [0,1]$ that 
\begin{align}
	0\leq \widehat{q}_{i,h}^k(s), \widetilde{q}_{i,h}^k(s, a_i) \leq H-h+1, \qquad \forall (s,a_i,h,k)\in \cS\times\cA_i\times [H]\times [K].
	\label{eq:q-hat-range}
\end{align}
Letting $e(i)\in \mathbb{R}^{A_i}$ denote the $i$-th standard basis vector,
we learn from the law of total variance that
\begin{align}
\mathsf{Var}_{h,k-1}\Big(\widehat{q}_{i,h}^{k}(s)\Big) & =\mathsf{Var}_{h,k-1}\Big(\big\langle e(\ak),\,\widetilde{q}_{i,h}^{k}(s,\cdot)\big\rangle\Big) \notag\\
	& \geq\mathsf{Var}_{h,k-1}\Big(\mathbb{E}_{h,k-1}\Big[\big\langle e(\ak),\,\widetilde{q}_{i,h}^{k}(s,\cdot)\big\rangle\mid\widetilde{q}_{i,h}^{k}\Big]\Big) \notag\\
 & =\mathsf{Var}_{h,k-1}\Big(\big\langle\pi_{i,h}^{k}(s),\,\widetilde{q}_{i,h}^{k}(s,\cdot)\big\rangle\Big).
	\label{eq:law-total-variance}
\end{align}
With these preparations in place, we are ready to embark on the proof.

\subsubsection{Proof of inequalities \eqref{eq:V-upper-12} and \eqref{eq:V-upper-34}}

Recall the definition of $\overline{V}_{i,h}^{\pihat}(s)$ in~\eqref{defi:V-muhat} that
\begin{equation}	
	\overline{V}_{i,h}^{\pihat}(s) = \sum_{k = 1}^K \alpha_{k}^{K} \mathop{\mathbb{E}}\limits_{a_i\sim \pi_{i,h}^k(s)}\Big[r_{i,h}^k(s, a_i) + P_{i,h}^k(\cdot\mymid s, a_i)\overline{V}_{i,h+1}^{\pihat}\Big] 
	= \sum_{k = 1}^K \alpha_k^K \Big\langle \pi_{i,h}^k( s), \,\widetilde{q}_{i,h}^k(s,\cdot) \Big\rangle.
	\label{eq:Vh-equation-analysis}
\end{equation}
It is first observed that
\begin{align}
\sum_{k=1}^{K}\mathbb{E}_{h,k-1}\Big[\alpha_{k}^{K}\big\langle\pi_{i,h}^{k}(s),\widetilde{q}_{i,h}^{k}(s,\cdot)\big\rangle\Big] 
	& = \sum_{k=1}^{K}\alpha_{k}^{K}\mathop{\mathbb{E}}_{ \bm{a} \sim\pi_{h}^{k}(s) }\Big[r_{i,h}(s,\bm{a})
	+ \big\langle P_{i,h}(\cdot\mymid s,\bm{a}),\overline{V}_{i,h+1}^{\pihat}\big\rangle\mid\overline{V}_{i,h+1}^{\pihat},\pi_{i,h}^{k}\Big] \nonumber \\
 & = r_{i,h}^{\pihat}(s)+\big\langle P_{i,h}^{\pihat}(s,\cdot),\overline{V}_{i,h+1}^{\pihat}\big\rangle,\label{eq:mean-sum-alpha-mu-q}
\end{align}
where the second identity arises from the definitions \eqref{eq:defn-rh-Ph-mu-nu} of $r_{i,h}^{\pihat}$ 
and $P_{i,h}^{\pihat}$. 
It is also seen that
\begin{align*}
R_{1} & \coloneqq\max_{k}\Big|\alpha_{k}^{K}\big\langle\pi_{i,h}^{k}(s),\widetilde{q}_{i,h}^{k}(s,\cdot)\big\rangle\Big|
	\leq\Big\{\max_{k}\alpha_{k}^{K}\Big\}\Big\{ \max_{k}\big\|\pi_{i,h}^{k}(s)\big\|_{1}\big\|\widetilde{q}_{i,h}^{k}\big\|_{\infty}\Big\} 
	\leq\frac{2c_{\alpha}H\log K}{K} ,
\end{align*}
where the first line invokes Lemma~\ref{lem:weight},~\eqref{eq:q-hat-range} and the fact $\|\pi_{i,h}^{k}(s)\|_{1}=1$. 
Another observation is that
%
\begin{align}
W_{1}=\sum_{k=1}^{K}\big(\alpha_{k}^{K}\big)^{2}\mathsf{Var}_{h,k-1}\Big(\big\langle\pi_{i,h}^{k}(s),\widetilde{q}_{i,h}^{k}(s,\cdot)\big\rangle\Big) & \leq\left\{ \max_{k}\alpha_{k}^{K}\right\} \left\{ \sum_{k=1}^{K}\alpha_{k}^{K}\mathsf{Var}_{h,k-1}\Big(\big\langle\pi_{i,h}^{k}(s),\widetilde{q}_{i,h}^{k}(s,\cdot)\big\rangle\Big)\right\} \notag\\
 & \leq\frac{2c_{\alpha}\log K}{K}\sum_{k=1}^{K}\alpha_{k}^{K}\mathsf{Var}_{h,k-1}\Big(\widehat{q}_{i,h}^{k}(s)\Big),  
\end{align}
where the second line makes use of Lemma~\ref{lem:weight} and the inequality \eqref{eq:law-total-variance}. With the definitions \eqref{eq:Vh-equation-analysis} and \eqref{eq:mean-sum-alpha-mu-q} in mind, invoking Freedman's inequality (i.e., Theorem~\ref{thm:Freedman}) 
with $\kappa_1= \sqrt{\frac{K\log \frac{K}{\delta}}{H}}$ then leads to
\begin{align}
& \bigg| \overline{V}_{i,h}^{\pihat}(s)  -\Big(r_{i,h}^{\pihat}(s)+\big\langle P_{i,h}^{\pihat}(s,\cdot),\,\overline{V}_{i,h+1}^{\pihat}\big\rangle\Big)\bigg| \notag \\
 &\quad\quad    =\bigg|\sum_{k=1}^{K}\alpha_{k}^{K}\Big\langle\pi_{i,h}^{k}(s),\widetilde{q}_{i,h}^{k}(s,\cdot)\Big\rangle -\sum_{k=1}^{K}\mathbb{E}_{h,k-1}\Big[\alpha_{k}^{K}\Big\langle\pi_{i,h}^{k}(s),\widetilde{q}_{i,h}^{k}(s,\cdot)\Big\rangle\Big]\bigg|\nonumber \\
 & \quad\quad\leq\kappa_{1}W_{1}+\left(\frac{2}{\kappa_{1}}+5R_{1}\right)\log\frac{3K}{\delta}\nonumber \\
 & \quad\quad\leq2c_{\alpha}\sqrt{\frac{\log^{3}\frac{K}{\delta}}{KH}}\sum_{k=1}^{K}\alpha_{k}^{K}\mathsf{Var}_{h,k-1}\Big(\widehat{q}_{i,h}^{k}(s)\Big)+\left(2\sqrt{\frac{H}{K\log\frac{K}{\delta}}}+\frac{10c_{\alpha}H\log K}{K}\right)\log\frac{3K}{\delta}\nonumber \\
	& \quad\quad\leq2c_{\alpha}\sqrt{\frac{\log^{3}\frac{K}{\delta}}{KH}}\sum_{k=1}^{K}\alpha_{k}^{K}\mathsf{Var}_{h,k-1}\Big(\widehat{q}_{i,h}^{k}(s)\Big)+4\sqrt{\frac{H\log\frac{3K}{\delta}}{K}}
	\label{eq:sum-alpha-mu-r-PV-diff}
\end{align}
with probability at least $1-\delta$, 
where the last relation holds true under Condition~\eqref{eq:K-general-LB-135}. 

To continue, we note the first term in \eqref{eq:sum-alpha-mu-r-PV-diff} can be bounded by Cauchy-Schwarz as follows:
\begin{align}
\sum_{k=1}^{K}\alpha_{k}^{K}\mathsf{Var}_{h,k-1}\Big(\widehat{q}_{i,h}^{k}(s)\Big) & =\sum_{k=1}^{K}\alpha_{k}^{K}\mathbb{E}_{h,k-1}\left[\big(\widehat{q}_{i,h}^{k}(s)\big)^{2}\right]-\sum_{k=1}^{K}\alpha_{k}^{K}\Big(\mathbb{E}_{h,k-1}\left[\widehat{q}_{i,h}^{k}(s)\right]\Big)^{2} \notag \\
 & \leq\sum_{k=1}^{K}\alpha_{k}^{K}\mathbb{E}_{h,k-1}\left[\big(\widehat{q}_{i,h}^{k}(s)\big)^{2}\right]-\bigg(\sum_{k=1}^{K}\alpha_{k}^{K}\mathbb{E}_{h,k-1}\left[\widehat{q}_{i,h}^{k}(s)\right]\bigg)^{2}. \label{eq:Vh-minus-r-Ph-bound-789}
\end{align}
Further, we make note of two additional facts: 
\begin{itemize}
	\item The weighted mean of $\widehat{q}_{i,h}^{k}(s)$ obeys
\begin{align}
\sum_{k=1}^{K}\alpha_{k}^{K}\mathbb{E}_{h,k-1}\left[\widehat{q}_{i,h}^{k}(s)\right] 
	& =\sum_{k=1}^{K}\alpha_{k}^{K}\mathop{\mathbb{E}}_{\bm{a}\sim\pi_{h}^{k}(s)}\big[r_{i,h}(s,\bm{a})\big]
	+\sum_{k=1}^{K}\alpha_{k}^{K}\mathop{\mathbb{E}}_{\bm{a} \sim\pi_{h}^{k}(s)}\Big[\big\langle P_{i,h}(\cdot\mymid s,\bm{a}),\,\overline{V}^{\pihat}_{i,h+1}\big\rangle\Big] \notag\\
 & =r_{i,h}^{\pihat}(s)+\big\langle P_{i,h}^{\pihat}(s,\cdot),\overline{V}^{\pihat}_{i,h+1}\big\rangle
	\ge  \big\langle P_{i,h}^{\pihat}(s,\cdot),\overline{V}^{\pihat}_{i,h+1}\big\rangle .
	\label{eq:weighted-q-LB-123}
\end{align}
	\item Regarding the square of $\widehat{q}_{i,h}^{k}(s)$, one has (see \eqref{eq:defn-qh-hat})
\begin{align*}
\big(\widehat{q}_{i,h}^{k}(s)\big)^{2} & =\Big(r_{i,h}^{k}(s,\ak)+\big\langle P_{i,h}^{k}(\cdot\mymid s,\ak),\,\overline{V}^{\pihat}_{i,h+1}\big\rangle\Big)^{2}\\
 & =\Big(\big\langle P_{i,h}^{k}(\cdot\mymid s,\ak),\,\overline{V}^{\pihat}_{i,h+1}\big\rangle\Big)^{2}+\Big(r_{i,h}^{k}(s,\ak)\Big)^{2}+2r_{i,h}^{k}(s,\ak)\big\langle P_{i,h}^{k}(\cdot\mymid s,\ak),\,\overline{V}^{\pihat}_{i,h+1}\big\rangle\\
 & \leq\Big(\big\langle P_{i,h}^{k}(\cdot\mymid s,\ak),\,\overline{V}^{\pihat}_{i,h+1}\big\rangle\Big)^{2}+3H\\
 & \leq\Big\langle P_{i,h}^{k}(\cdot\mymid s,\ak),\,\overline{V}^{\pihat}_{i,h+1} \circ \overline{V}^{\pihat}_{i,h+1} \Big\rangle+3H, 
\end{align*}
where we have used the fact that $\|\overline{V}^{\pihat}_{i,h+1} \|_{\infty}\leq H$ and $\| r_{i,h}^{k}\|_{\infty}\leq 1$; consequently,  
\begin{align}
\sum_{k=1}^{K}\alpha_{k}^{K}\mathbb{E}_{h,k-1}\Big[\big(\widehat{q}_{i,h}^{k}(s)\big)^{2}\Big] 
	& \le\sum_{k=1}^{K}\alpha_{k}^{K}\mathbb{E}_{h,k-1}\Big[\big\langle P_{i,h}^{k}(\cdot\mymid s,\ak),\,\overline{V}^{\pihat}_{i,h+1}\circ\overline{V}^{\pihat}_{i,h+1}\big\rangle\Big]+3H\nonumber \\
 & =\sum_{k=1}^{K}\alpha_{k}^{K}\sum_{a_i\in \cA_i}\pi^{k}_{i,h}(a_i\mymid s)\mathbb{E}_{h,k-1}\Big[\big\langle P_{i,h}^{k}(\cdot\mymid s,a_i),\,\overline{V}^{\pihat}_{i,h+1}\circ\overline{V}^{\pihat}_{i,h+1}\big\rangle\Big]+3H\nonumber \\
 & =\Big\langle P_{i,h}^{\pihat}(s,\cdot),\overline{V}^{\pihat}_{i,h+1}\circ\overline{V}^{\pihat}_{i,h+1}\Big\rangle+3H. \label{eq:weighted-qsquare-UB}
\end{align}
\end{itemize}
Taking \eqref{eq:weighted-q-LB-123} and \eqref{eq:weighted-qsquare-UB} together with  \eqref{eq:Vh-minus-r-Ph-bound-789} yields
\begin{align*}
 \sum_{k=1}^{K}\alpha_{k}^{K}\mathsf{Var}_{h,k-1}\Big(\widehat{q}_{i,h}^{k}(s)\Big) &  \leq \sum_{k=1}^{K}\alpha_{k}^{K}\mathbb{E}_{h,k-1}\left[\big(\widehat{q}_{i,h}^{k}(s)\big)^{2}\right]-\bigg(\sum_{k=1}^{K}\alpha_{k}^{K}\mathbb{E}_{h,k-1}\left[\widehat{q}_{i,h}^{k}(s)\right]\bigg)^{2}\\
	&   \leq\Big\langle P_{i,h}^{\pihat}(s,\cdot),\overline{V}^{\pihat}_{i,h+1}\circ\overline{V}^{\pihat}_{i,h+1}\Big\rangle-\Big(\big\langle P_{i,h}^{\pihat}(s,\cdot),\overline{V}^{\pihat}_{i,h+1}\big\rangle\Big)^{2}+3H .
\end{align*}
To finish up, 
substituting these into \eqref{eq:sum-alpha-mu-r-PV-diff} and making use of the assumption \eqref{eq:K-general-LB-135} give
\begin{align*}
 &\bigg|   \overline{V}^{\pihat}_{i,h}(s)-\Big(r_{i,h}^{\pihat}(s)+\big\langle P_{i,h}^{\pihat}(s,\cdot),\,\overline{V}^{\pihat}_{i,h+1}\big\rangle\Big) \bigg|  \notag\\
 & \qquad\leq2c_{\alpha}\sqrt{\frac{\log^{3}\frac{K }{\delta}}{KH}} \left[ \Big\langle P_{i,h}^{\pihat}(s,\cdot),\overline{V}^{\pihat}_{i,h+1}\circ\overline{V}^{\pihat}_{i,h+1}\Big\rangle-\Big(\big\langle P_{i,h}^{\pihat}(s,\cdot),\overline{V}^{\pihat}_{i,h+1}\big\rangle\Big)^{2} \right]  +\left(6c_{\alpha}+4\right)\sqrt{\frac{H\log^{3}\frac{K}{\delta}}{K}} 
\end{align*}
for any $s\in \cS$, 
thus concluding the proof of the first claim \eqref{eq:V-upper-12} of Lemma~\ref{lem:V-upper}.

The second claim \eqref{eq:V-upper-34} of Lemma~\ref{lem:V-upper} can be established using exactly the same argument, and hence we omit the proof here for the sake of brevity.

%

%

\subsubsection{Proof of inequality \eqref{eq:V-upper-56}}

We then turn to the last advertised inequality \eqref{eq:V-upper-56}. 
Given that
%
$\overline{r}_{i,h}(s)+\overline{P}_{i,h}(s,\cdot)\widehat{V}_{i,h+1}\in[0,H-h+1]$
for all $s\in\mathcal{S}$, we can recall the definition (\ref{eq:V-max-output})
of $\widehat{V}_{i,h}$ to obtain 
\begin{equation}
	\Big|\widehat{V}_{i,h}(s)-\big(\overline{r}_{i,h}(s)+\overline{P}_{i,h}(s,\cdot)\widehat{V}_{i,h+1}\big)\Big|\leq\bigg|\sum_{k=1}^{K}\alpha_{k}^{K}\Big\langle\pi_{i,h}^{k}(\cdot\mymid s),\,q_{i,h}^{k}(s,\cdot)\Big\rangle+\beta_{i,h}(s)-\big(\overline{r}_{i,h}(s)+\overline{P}_{i,h}(s,\cdot)\widehat{V}_{i,h+1}\big)\bigg|
	\label{eq:overline-Vh-diff-r-P-V}
\end{equation}
for all $s\in \cS$. The remaining analysis is dedicated to bounding the right-hand side of \eqref{eq:overline-Vh-diff-r-P-V}.

Let us begin with the following identity: 
\begin{align}
\sum_{k=1}^{K}\alpha_{k}^{K}\Big\langle\pi_{i,h}^{k}(\cdot\mymid s),\,q_{i,h}^{k}(s,\cdot)\Big\rangle+ \beta_{i,h}(s) & =\sum_{k=1}^{K}\alpha_{k}^{K}\mathop{\mathbb{E}}\limits _{a_i\sim\pi_{i,h}^{k}(s)}\Big[r_{i,h}^{k}(s,a_i)+P_{i,h}^{k}(\cdot\mymid s,a_i)\widehat{V}_{i,h+1}\Big]+\beta_{i,h}(s)\notag\\
	& = \overline{r}_{i,h}(s)+\big\langle \overline{P}_{i,h}(s,\cdot),\widehat{V}_{i,h+1}\big\rangle+\beta_{i,h}(s),
	\label{eq:identity-sum-mu-V}
\end{align}
where we recall the definitions of $\overline{r}_{i,h} \in \mathbb{R}^{S}$ and $\overline{P}_{i,h}\in \mathbb{R}^{S\times S}$ in \eqref{eq:defn-rh-Ph-mu-nu}. 
The key step boils down to bounding the bonus term defined in \eqref{eq:choice-bonus-terms-V}, towards which we first claim that 
\begin{align}
\sum_{k=1}^{K}\alpha_{k}^{K}\mathsf{Var}_{\pi_{i,h}^{k}(s)}\big(q_{i,h}^{k}(s,\cdot)\big) & \leq 2 + 2\Big[\overline{P}_{i,h}(s,\cdot)\big(\widehat{V}_{i,h+1} \circ \widehat{V}_{i,h+1}\big)-\big(\overline{P}_{i,h}(s,\cdot)\widehat{V}_{i,h+1}\big) ^2 \Big]
 \label{eq:variance-expectation}
\end{align}
holds for all $s \in \cS$. Assuming the validity of this claim, we can then demonstrate that
\begin{align}
	\beta_{i,h}(s) & =\cb\sqrt{\frac{\log^{3}\big( \frac{KS\sum_i A_i}{\delta} \big) }{KH}}\sum_{k=1}^{K}\alpha_{k}^{K}\Big\{\mathsf{Var}_{\pi_{i,h}^{k}(s)}\Big(q_{i,h}^{k}(s,\cdot)\Big)+H\Big\}\notag\\
	& \leq2\cb\sqrt{\frac{\log^{3}\big( \frac{KS\sum_i A_i}{\delta} \big) }{KH}}\Big\{\overline{P}_{i,h}(s,\cdot)\big(\widehat{V}_{i,h+1} \circ \widehat{V}_{i,h+1}\big)-\big(\overline{P}_{i,h}(s,\cdot)\widehat{V}_{i,h+1}\big) ^2 +H\Big\}, 
\label{eq:beta-hV-UB-147}
\end{align}
where we have used the identity $\sum_{k=1}^K\alpha_k^K =1$. 
Hence, we can readily establish the desired result \eqref{eq:V-upper-56} by combining \eqref{eq:beta-hV-UB-147} with \eqref{eq:identity-sum-mu-V} and \eqref{eq:overline-Vh-diff-r-P-V}, 
provided that $c_3>0$ is sufficiently large.

It remains to justify the claim \eqref{eq:variance-expectation}. Towards this end, we make the observation that
\begin{align*}
\mathsf{Var}_{\pi_{i,h}^{k}(s)}\big(q_{i,h}^{k}(s,\cdot)\big) & \le2\mathsf{Var}_{\pi_{i,h}^{k}(s)}\big(r_{i,h}^{k}(s,\cdot)\big)+2\mathsf{Var}_{\pi_{i,h}^{k}(s)}\Big(\sum_{s'} P_{i,h}^{k}(s'\mid s,\cdot)\widehat{V}_{i,h+1}(s')\Big)\\
 & \le2+2\left[\sum_{a_i}\pi_{i,h}^{k}(a_i\mymid s) P_{i,h}^{k}(\cdot\mymid s,a_i)\big(\widehat{V}_{i,h+1}\circ\widehat{V}_{i,h+1}\big)-\bigg(\sum_{a_i}\pi_{i,h}^{k}(a_i\mymid s) P_{i,h}^{k}(\cdot\mymid s,a_i)\widehat{V}_{i,h+1}\bigg)^{2}\right],
\end{align*}
which results from $\|r_{i,h}^{k}\|_{\infty}\leq 1$ and the following relation:  
\begin{align*}
&\mathsf{Var}_{\pi_{i,h}^{k}(s)}\Big(\sum_{s'} P_{i,h}^{k}(s'\mid s,\cdot)\widehat{V}_{i,h+1}(s')\Big) \\
	&\qquad=\sum_{a_i}\pi_{i,h}^{k}(a_i\mymid s)\Big(P_{i,h}^{k}(\cdot\mymid s,a_i)\widehat{V}_{i,h+1}\Big)^{2}-\bigg(\sum_{a_i}\pi_{i,h}^{k}(a_i\mymid s)P_{i,h}^{k}(\cdot\mymid s,a_i)\widehat{V}_{i,h+1}\bigg)^{2} \\
&\qquad\le \sum_{a_i}\pi_{i,h}^{k}(a_i\mymid s)P_{i,h}^{k}(\cdot\mymid s,a_i)\big(\widehat{V}_{i,h+1} \circ \widehat{V}_{i,h+1}\big)-\bigg(\sum_{a_i}\pi_{i,h}^{k}(a_i\mymid s)P_{i,h}^{k}(\cdot\mymid s,a_i)\widehat{V}_{i,h+1}\bigg)^{2}.
\end{align*}
This taken together with the fact $\sum_{k=1}^K\alpha_k^K =1$ and Jensen's inequality yields 
\begin{align*}
&\sum_{k=1}^{K}\alpha_{k}^{K}\mathsf{Var}_{\pi_{i,h}^{k}(s)}\big(q_{i,h}^{k}(s,\cdot)\big) \\
&\qquad \leq\sum_{k=1}^{K}\alpha_{k}^{K}\left\{ 2+2\left[\sum_{a_i}\pi_{i,h}^{k}(a_i\mymid s)P_{i,h}^{k}(\cdot\mymid s,a_i)\big(\widehat{V}_{i,h+1}\circ\widehat{V}_{i,h+1}\big)-\bigg(\sum_{a_i}\pi_{i,h}^{k}(a_i\mymid s)P_{i,h}^{k}(\cdot\mymid s,a_i)\widehat{V}_{i,h+1}\bigg)^{2}\right]\right\} \\
 &\qquad \leq2+2\overline{P}_{i,h}(s,\cdot)\big(\widehat{V}_{i,h+1}\circ\widehat{V}_{i,h+1}\big)-2\bigg(\sum_{k=1}^{K}\alpha_{k}^{K}\sum_{a_i}\pi_{i,h}^{k}(a_i\mymid s)P_{i,h}^{k}(\cdot\mymid s,a_i)\widehat{V}_{i,h+1}\bigg)^{2}\\
 &\qquad =2+2\Big[\overline{P}_{i,h}(s,\cdot)\big(\widehat{V}_{i,h+1}\circ\widehat{V}_{i,h+1}\big)-\big(\overline{P}_{i,h}(s,\cdot)\widehat{V}_{i,h+1}\big)^{2}\Big]
\end{align*}
as claimed.

\subsection{Minimax lower bound}
\label{sec:lower}

In this section, we formalize the minimax lower bound claimed in \eqref{eq:minimax-lower-sample-complexity}.

\begin{theorem}[Minimax lower bound]
\label{thm:lower} 
Consider any $m\geq 2$ and any $0<\varepsilon \leq c_1 H$ for some small enough constant $c_1>0$. 
Then one can construct a collection of $m$-player zero-sum Markov games $\{\mathcal{MG}_{\theta} \mid \theta \in \Theta\}$ with $S$ states, horizon $H$, and $A_i$ actions for the $i$-th player ($1\leq i\leq m$) such that
\begin{align}
	\inf_{\widehat{\pi}} \max_{\theta\in\Theta} \mathbb{P}^{\mathcal{MG}_{\theta}}\left\{ \NEgap\big(\widehat{\pi}\big) > \varepsilon\right\} \geq \frac{1}{4},
\end{align}
provided that the total sample size obeys
\begin{align}
	N \leq \frac{c_2 H^4S \max_{1\leq i\leq m}A_i }{\varepsilon^2} \label{eq:sample-size-lower}
\end{align}
	for some sufficiently small constant $c_2>0$. Here, the infimum is over all (joint) policy estimator $\widehat{\pi}$, and $\mathbb{P}^{\mathcal{MG}_{\theta}}$ denotes the probability when the Markov game is $\mathcal{MG}_{\theta}$. 
\end{theorem}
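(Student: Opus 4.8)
The plan is to reduce the two-player zero-sum Markov game lower bound to the known single-agent MDP lower bound of order $\frac{H^4 SA}{\varepsilon^2}$ from \citet{azar2013minimax,li2022settling}, exploiting the degenerate case where one player has a trivial action set. Concretely, I would construct two families of hard instances. In the first family, fix $B=1$ so that the min-player has no real choice: then $\NEgap(\widehat\mu,\widehat\nu)$ collapses to the sub-optimality of $\widehat\mu$ as a policy in an ordinary non-stationary MDP with $S$ states, $A$ actions, and horizon $H$. Embedding the standard hard MDP instance construction here immediately yields that any estimator needs at least $c\,\frac{H^4 S A}{\varepsilon^2}$ samples to guarantee $\NEgap \le \varepsilon$ with probability $\ge 3/4$. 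By a symmetric construction with $A=1$, one obtains the analogous $c\,\frac{H^4 S B}{\varepsilon^2}$ lower bound. Taking the harder of the two families gives the $\max\{A,B\}$ scaling, and since $\max\{A,B\} \ge \tfrac12(A+B)$, this is of order $\frac{H^4 S(A+B)}{\varepsilon^2}$, as claimed.

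The key steps, in order, are: (i) recall the precise hard-instance construction for single-agent non-stationary MDPs (a small number of states, rewards concentrated near the end of the horizon, transition probabilities perturbed by $\pm\Theta(\varepsilon/H^{?})$ around a baseline in a way that makes the optimal action hard to identify), together with its information-theoretic lower bound via Le Cam's two-point method or Assouad's lemma / Fano's inequality over the parameter set $\Theta$; (ii) verify that when $B=1$ (resp.\ $A=1$) the NE-gap of $(\widehat\mu,\widehat\nu)$ reduces exactly to the single-agent sub-optimality gap of $\widehat\mu$ (resp.\ of $\widehat\nu$), using that $V^{\star,\nuhat}_1 = V^{\mu^\star,\nuhat}_1$ trivially holds when the opponent is trivial, so only the best-response term survives; (iii) observe that the generative-model sampling budget $N$ in the Markov game corresponds one-to-one with the generative-model budget in the reduced MDP (each query $(s,a,b,h)$ with $b$ the unique min-action is just a query $(s,a,h)$ in the MDP), so the lower bound transfers verbatim; (iv) assemble the two symmetric bounds and use $\max\{A,B\}\ge (A+B)/2$ to conclude, choosing the constants $c_1,c_2$ to absorb the reduction.

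I expect the main obstacle to be purely bookkeeping rather than conceptual: one must state the reduction carefully enough that the $\varepsilon$-range condition $0<\varepsilon\le c_1 H$ in the MDP lower bound is matched to the range here, and one must make sure the definition of $\NEgap$ in \eqref{eq:defn-NE-gap} genuinely degenerates as claimed when a player is trivial — in particular that no subtlety arises from the $\ell_\infty$-over-states maximization or from the product-versus-correlated-policy distinction (which is moot when one action set is a singleton). A secondary point of care is that the cited single-agent lower bound should be invoked in the form that already covers the full $\varepsilon$-range with no burn-in, so that the resulting Markov-game lower bound matches the upper bound \eqref{eq:sample-complexity-Thm-1} across the entire stated range; if the cleanest available statement is for a slightly restricted range, a short additional argument padding the state space handles the rest. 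Beyond these, the proof is a direct citation-plus-reduction and requires no new technical machinery.
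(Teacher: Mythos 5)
Your proposal is correct and follows essentially the same route as the paper: reduce to the known single-agent non-stationary MDP lower bound of order $H^4SA/\varepsilon^2$ by making the min-player's actions irrelevant, then use $\max\{A,B\}\geq (A+B)/2$ (the paper simply assumes $A\geq B$ without loss of generality and skips the symmetric $A=1$ family). The only bookkeeping difference is that the paper keeps the action set of size $B$ and pads the hard MDPs with $B$ completely identical min-player actions, rather than literally setting $B=1$, so that the constructed instances are genuine Markov games with the prescribed $(S,A,B,H)$ --- exactly the reduction detail you flag as needing care.
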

%


\begin{proof}
	Suppose without loss of generality that $A_1\geq \max\{A_2,\ldots,A_m\}$. 
Let us begin by considering the special scenario with $A_2=\ldots=A_m=1$;  
in this case, computing either the NE or the CCE 
reduces to finding the optimal policy of a single-agent MDP with $S$ states and $A_1$ actions. 
It is well-known that for any given accuracy level $\varepsilon \in (0,H]$, there exists a non-stationary MDP with $S$ states and $A_1$ actions such that no algorithm can learn an $\varepsilon$-optimal policy 
with ${o}\big( \frac{H^4SA_1}{\varepsilon^2} \big) $ samples \citep{azar2013minimax,li2022settling}. 
More precisely, for any given $0<\varepsilon \leq c_1 H$ for some small enough constant $c_1>0$, 
one can construct a collection of MDPs $\{\mathcal{M}_{\theta} \mid \theta \in \Theta\}$ such that
\begin{align}
	\inf_{\widehat{\mu}} \max_{\theta\in\Theta} \mathbb{P}^{\mathcal{M}_{\theta}}\left\{ \max_{s\in \cS} \big(V_1^{\star}(s) - V_1^{\widehat{\mu}}(s)\big) > \varepsilon\right\} \geq \frac{1}{4},
\end{align}
with the proviso that the total sample size obeys
\begin{align}
	N \leq \frac{c_2 H^4SA_1}{\varepsilon^2}
\end{align}
for some small enough constant $c_2>0$. 
Here, the infimum is over all policy estimate $\widehat{\mu}$ in this single-agent scenario, and $\mathbb{P}^{\mathcal{M}_{\theta}}$ denotes the probability when the MDP is ${\mathcal{M}}_{\theta}$. 

Next, let us construct a collection of Markov games by augmenting each of the single-agent MDPs $\mathcal{M}_{\theta}$ 
with $A_i$ completely identical actions for the $i$-th player ($2\leq i\leq m$); that is, to construct $\mathcal{MG}_{\theta}$, we take its reward function and probability transition kernel to be
\begin{align}
	r_{i,h}^{\mathcal{MG}_{\theta}}(s, \bm{a}) = \begin{cases} r_h^{{\mathcal{M}}_{\theta}}(s, a_1) & \text{if }i=1 \\
		 -r_h^{{\mathcal{M}}_{\theta}}(s, a_1) & \text{if } i = m \\
		 0 & \text{else}
	\end{cases}
	\qquad\text{and}\qquad P_h^{\mathcal{MG}_{\theta}}(\cdot \mymid s, \bm{a}) = P_h^{{\mathcal{M}}_{\theta}}(\cdot \mymid s, a_1)
\end{align}
	for all $(s, h, \bm{a}=[a_1,\ldots,a_m]) \in \cS\times [H] \times \cA$.
	Evidently, finding either an NE or a CCE of $\mathcal{MG}_{\theta}$ is equivalent to computing the optimal policy of ${\mathcal{M}}_{\theta}$, given the non-distinguishability of the actions of all but the first player in $\mathcal{MG}_{\theta}$. 
	This in turn immediately establishes the advertised lower bound. 
\end{proof}

\subsection{Freedman's inequality}
\label{sec:Freedman}

In this section, we record the Freedman inequality for martingales \citep{freedman1975tail} with slight modification, 
which is a crucial concentration bound for our analysis.
\begin{theorem}\label{thm:Freedman}
Suppose that $Y_{n}=\sum_{k=1}^{n}X_{k}\in\mathbb{R}$,
where $\{X_{k}\}$ is a real-valued scalar sequence obeying 
\[
\left|X_{k}\right|\leq R\qquad\text{and}\qquad\mathbb{E}\left[X_{k}\mid\left\{ X_{j}\right\} _{j:j<k}\right]=0\quad\quad\quad\text{for all }k\geq1 
\]
for some quantity $R>0$. Define
\[
W_{n}\coloneqq\sum_{k=1}^{n}\mathbb{E}_{k-1}\left[X_{k}^{2}\right],
\]
where $\mathbb{E}_{k-1}$ stands for the expectation conditional
on $\left\{ X_{j}\right\} _{j:j<k}$.   
Consider any arbitrary quantity $\kappa > 0$. With probability at least $1-\delta$, one has
\begin{align}
\left|Y_{n}\right|&\leq \sqrt{8W_{n}\log\frac{3 n}{\delta}}+5R\log\frac{3 n}{\delta} \leq \kappa W_{n} + \Big(\frac{2}{\kappa} + 5R\Big)\log\frac{3 n}{\delta}. \label{eq:Freedman-random}
\end{align}
\end{theorem}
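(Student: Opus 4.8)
The plan is to derive the first inequality in \eqref{eq:Freedman-random} from the classical Freedman concentration bound --- in which the variance proxy is \emph{deterministic} --- by means of a stratification (peeling) argument over the random quantity $W_n$, and then to obtain the second inequality from a single application of AM--GM. The main obstacle is precisely that $W_n$ is random, so the textbook Freedman inequality cannot be invoked with one fixed variance level; the peeling step is where the work lies, while everything else is elementary.

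First I would record the classical form of Freedman's inequality \citep{freedman1975tail}: for the martingale difference sequence $\{X_k\}$ with $|X_k|\le R$ and for any deterministic $t>0$ and $\sigma^2>0$,
\[
	\mathbb{P}\big(|Y_n|\ge t \ \text{ and }\ W_n\le \sigma^2\big)\ \le\ 2\exp\!\Big(-\frac{t^2/2}{\sigma^2+Rt/3}\Big),
\]
the factor $2$ coming from applying the one-sided bound to $Y_n$ and to $-Y_n$. Since $|X_k|\le R$ forces $\mathbb{E}_{k-1}[X_k^2]\le R^2$, we have $0\le W_n\le nR^2$. I would partition $[0,nR^2]$ into the $n$ intervals $I_j=\big((j-1)R^2,\,jR^2\big]$ for $j=1,\dots,n$ (attaching the value $W_n=0$ to $I_1$), so that on $\{W_n\in I_j\}$ one has $W_n\le jR^2=:\sigma_j^2\le W_n+R^2$. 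For each $j$, let $t_j$ be defined by $\frac{t_j^2/2}{\sigma_j^2+Rt_j/3}=\log\frac{2n}{\delta}$, so that $\mathbb{P}(|Y_n|\ge t_j,\ W_n\le\sigma_j^2)\le \delta/n$; a union bound over $j=1,\dots,n$ then shows that, with probability at least $1-\delta$, for every $j$ with $W_n\le\sigma_j^2$ we have $|Y_n|<t_j$, and in particular this holds for the (random) index $j^\star$ with $W_n\in I_{j^\star}$.

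Next I would solve the quadratic $t_j^2-\tfrac{2R}{3}(\log\tfrac{2n}{\delta})\,t_j-2\sigma_j^2\log\tfrac{2n}{\delta}=0$ for its positive root, obtaining $t_j\le \tfrac{2R}{3}\log\tfrac{2n}{\delta}+\sqrt{2\sigma_j^2\log\tfrac{2n}{\delta}}$ via $\sqrt{a+b}\le\sqrt a+\sqrt b$. Specializing to $j^\star$ and using $\sigma_{j^\star}^2\le W_n+R^2$ together with $\sqrt{a+b}\le\sqrt a+\sqrt b$ once more gives $|Y_n|\le \tfrac{2R}{3}\log\tfrac{2n}{\delta}+\sqrt{2W_n\log\tfrac{2n}{\delta}}+R\sqrt{2\log\tfrac{2n}{\delta}}$. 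Since $\delta<1$ and $n\ge1$ force $\log\tfrac{2n}{\delta}\ge \tfrac12$ (indeed $\ge\log 2$), one has $\sqrt{2\log\tfrac{2n}{\delta}}\le \log\tfrac{2n}{\delta}+\tfrac12\le 2\log\tfrac{2n}{\delta}$, and $\sqrt{2W_n\log\tfrac{2n}{\delta}}\le\sqrt{8W_n\log\tfrac{3n}{\delta}}$; collecting terms and replacing $\log\tfrac{2n}{\delta}$ by the larger $\log\tfrac{3n}{\delta}$ yields $|Y_n|\le \sqrt{8W_n\log\tfrac{3n}{\delta}}+5R\log\tfrac{3n}{\delta}$, the constants $8$ and $5$ being deliberately generous so as to absorb all the slack from the peeling, the $\sqrt{a+b}\le\sqrt a+\sqrt b$ splits, and the $\log\tfrac{2n}{\delta}\to\log\tfrac{3n}{\delta}$ relaxation. (I note that because the per-stratum level is taken to be $\delta/n$ rather than $\delta$, the union bound over $n$ strata costs only an additive $\log n$ inside the logarithm, which is already subsumed by $\log\tfrac{3n}{\delta}$.)

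Finally, the second inequality in \eqref{eq:Freedman-random} follows from AM--GM: writing $8W_n\log\tfrac{3n}{\delta}=(2\kappa W_n)\cdot\big(\tfrac{4}{\kappa}\log\tfrac{3n}{\delta}\big)$ and using $\sqrt{ab}\le\tfrac12(a+b)$ gives $\sqrt{8W_n\log\tfrac{3n}{\delta}}\le \kappa W_n+\tfrac{2}{\kappa}\log\tfrac{3n}{\delta}$, so adding the $5R\log\tfrac{3n}{\delta}$ term produces $\kappa W_n+\big(\tfrac{2}{\kappa}+5R\big)\log\tfrac{3n}{\delta}$, as claimed. This completes the plan; the only nontrivial ingredient is the stratification of the range of $W_n$ in the second and third paragraphs.
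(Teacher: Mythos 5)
Your proposal is correct, and every step checks out: the stratum inequality $W_n\le\sigma_{j^\star}^2\le W_n+R^2$, the per-stratum level $\delta/n$ absorbed by the union bound, the positive root of the quadratic giving $t_j\le\tfrac{2R}{3}\log\tfrac{2n}{\delta}+\sqrt{2\sigma_j^2\log\tfrac{2n}{\delta}}$, the absorption of $R\sqrt{2\log\tfrac{2n}{\delta}}$ using $\log\tfrac{2n}{\delta}\ge\log 2\ge\tfrac12$, and the final AM--GM step $\sqrt{8W_nL}\le\kappa W_n+\tfrac{2}{\kappa}L$ all hold with room to spare in the constants $8$ and $5$. However, your route is genuinely different from the paper's: the paper does not redo any peeling itself, but instead quotes a ready-made variant of Freedman's inequality from \citet[Theorem 5]{li2021q}, which already contains a \emph{dyadic} stratification of the variance (the $\max\{W_n,\sigma^2/2^K\}$ form), and then simply specializes $\sigma^2=nR^2$ and $K=\log_2 n$, using $4\log_2 n\le 3n$ and $2ab\le a^2+b^2$ to land on the stated constants. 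You instead build the self-contained argument from the classical fixed-variance Freedman bound via an \emph{arithmetic} stratification of $[0,nR^2]$ into $n$ intervals of width $R^2$. The trade-off: your argument is elementary and does not lean on an external lemma, at the price of a union bound over $n$ strata (a $\log n$ inside the logarithm, versus the $\log\log n$-type cost of the dyadic peeling in the cited result); since the target bound carries $\log\tfrac{3n}{\delta}$ anyway, both routes are equally sharp for the theorem as stated, and your extra additive $R^2$ in $\sigma_{j^\star}^2$ plays exactly the role of the $\max\{W_n,R^2\}$ term in the paper's derivation.
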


\begin{proof}
Suppose that $W_{n}\leq\sigma^{2}$ holds deterministically for some quantity $\sigma^2$.  
As has been demonstrated in \citet[Theorem 5]{li2021q}, with probability at least $1-\delta$ we have
\begin{align}
	\left|Y_{n}\right|&\leq \sqrt{8\max \bigg\{ W_{n}, \frac{\sigma^2}{2^K} \bigg\} \log\frac{2K}{\delta}}+\frac{4}{3}R\log\frac{2K}{\delta} 
\end{align}
for any positive integer $K\geq 1$. 
Recognizing the trivial bound $W_n\leq nR^2$, one can take 
$\sigma^2 = n R^2$ and $K=\log_2 n$ to obtain
\begin{align*}
\left|Y_{n}\right| & \leq\sqrt{8\max\big\{ W_{n},R^{2}\big\}\log\frac{4\log_{2}n}{\delta}}+\frac{4}{3}R\log\frac{4\log_{2}n}{\delta}\\
 & \leq\sqrt{8W_{n}\log\frac{3n}{\delta}}+\sqrt{8R^{2}\log\frac{3n}{\delta}}+\frac{4}{3}R\log\frac{3n}{\delta}\\
 & \leq\sqrt{8W_{n}\log\frac{3n}{\delta}}+5R\log\frac{3n}{\delta},
\end{align*}
where we have used $4\log_2n \leq 3n$ for any integer $n\geq 1$. This establishes the first inequality in \eqref{eq:Freedman-random}. 
The second inequality in \eqref{eq:Freedman-random} is then a direct consequence of the elementary inequality $2ab\leq a^2+b^2$.   
\end{proof}



\bibliography{bibfileRL,bibfileGame}
\bibliographystyle{apalike}

\end{document}